\newlist{inlinelist}{enumerate*}{1}
\setlist*[inlinelist,1]{%
  label=(\roman*),
}
\newtheorem{theorem}{Theorem}[section]
\newtheorem{lemma}[theorem]{Lemma}
\newtheorem{prop}[theorem]{Proposition}
\newtheorem{cor}[theorem]{Corollary}
\newtheorem{assumption}[theorem]{Assumption}
\def\dist{\mathrm{dist}}
\def\a{\alpha}
\def\b{\beta}
\def\o{\omega}
\def\Gc{\mathcal{G}}
\def\Vc{\mathcal{V}}
\def\Ec{\mathcal{E}}
\def\Xc{\mathcal{X}}
\def\Fc{\mathcal{F}}
\def\Nc{\mathcal{N}}
\def\Es{\mathsf{E}}
\def\1b{\mathbf{1}}
\def\0b{\mathbf{0}}
\def\la{\langle}
\def\ra{\rangle}
\def\eb{\mathbf{e}}
\def\zb{\mathbf{z}}
\def\vb{\mathbf{v}}
\def\pb{\mathbf{p}}
\def\ab{\mathbf{a}}
\def\diag{\textrm{diag}}
\def\norm#1{{\left\|#1\right\|}}
\def\abs#1{\left|#1\right|}
\def\set#1{\left\{#1\right\}}
\def\tb{\boldsymbol{\theta}}
\def\l{\ell}
\def\k{\kappa}
\def\bd{\boldsymbol{\delta}}
\def\proj{\mathsf{\Pi}}
\def \Sym {{\rm Sym}}
\title{
Distributed Active State Estimation with User-Specified Accuracy
}
\author{Charles~Freundlich,~\IEEEmembership{Student Member,~IEEE,}
Soomin~Lee,~\IEEEmembership{Member,~IEEE,}
and~Michael~M.~Zavlanos,~\IEEEmembership{Member,~IEEE}% 
%\thanks{This work is supported in part by the NSF award CNS \#1302284 and by ONR under grant \#N000141410479.}%
\thanks{
Charles Freundlich, Soomin Lee, and Michael M. Zavlanos are with the Dept. of Mechanical Engineering and Materials Science, Duke University, Durham, NC 27708, USA 
\href{mailto:crf29@duke.edu}{\{crf29, }
\href{mailto:s.lee@duke.edu}{s.lee, }
\href{mailto:mz61@duke.edu}{mz61\}}@duke.edu.
This work is supported in part by the NSF award CNS \#1302284 and by ONR under grant \#N000141410479.
}%}
\begin{document}
\maketitle

\begin{abstract}
In this paper, we address the problem of controlling a network of mobile sensors so that a set of hidden states are estimated up to a user-specified accuracy.
The sensors take measurements and fuse them online using an Information Consensus Filter (ICF).
At the same time, the local estimates guide the sensors to their next best configuration.
This leads to an LMI-constrained optimization problem that we solve by means of a new distributed random approximate projections method. The new method is robust to the state disagreement errors that exist among the robots as the ICF fuses the collected measurements.
Assuming that the noise corrupting the measurements is zero-mean and Gaussian and that the robots are self localized in the environment, the integrated system converges to the next best positions from where new observations will be taken.
This process is repeated with the robots taking a sequence of observations until the hidden states are estimated up to the desired user-specified accuracy. We present simulations of sparse landmark localization, where the robotic team achieves the desired estimation tolerances while exhibiting interesting emergent behavior.
\end{abstract}

\begin{IEEEkeywords}
Distributed optimization,
Estimation,
Cooperative control,
Sensor networks
\end{IEEEkeywords}

\IEEEpeerreviewmaketitle

\section{Introduction}
\label{sec:intro}
\IEEEPARstart{I}{n} this paper, we control a robotic sensor network to estimate a collection of hidden states so that desired accuracy thresholds and confidence levels are satisfied.
We require that estimation and control are completely distributed, i.e., belief states and control actions are decided upon locally through pairwise communication among robots in the network.

A common assumption in problems like the one discussed herein is that observations depend linearly on the state and are corrupted by Gaussian noise
%\cite{chung04,chung06,yang07,Gu15,LeNy09trajectory,atanasov14}, 
as well as that the sensors are self localized
%\cite{charrow14, hoffmann10, olfati2007distributed, martinez06, pereira13, vanderhook15, chung06, LeNy09trajectory,julian12, jalalkamali12, song12, skoglar2009information,freundlich13cdc,freundlich15acc}.
\cite{chung06,jalalkamali12,freundlich13cdc,freundlich15acc, freundlich13icra,vanderhook15, simonetto11,derenick09}.
Under these assumptions, a typical approach to estimate a set of hidden states is to use an Information Filter (IF).
%for each hidden state $\bbx_i = 1,\dots, m,$ a information consensus filter (ICF) generates (Gaussian) probability distributions at times $t \in \mbN$, each with mean $\hbbx_i (t)$ and minimum-trace covariance $\Sigma_i (t),$ given all prior observations $\set{\bby_i(\tau ) \mid \tau \leq t}.$
%The ICF is a common choice for distributed estimation and sensor planning
%%\cite{jalalkamali12,grocholsky03, chung04, chung06, skoglar2009information} and distributed estimation \cite{jalalkamali12,kamal12}.
%\cite{jalalkamali12,chung04,chung06, kamal12}.
%It requires the robot to estimate the error covariance of $\bby_i (t)$ so that new information can be appropriately fused with the existing distribution $\normal{\hbbx_i (t-1), \Sigma_i(t-1)}.$
The role of the IF is to fuse new observations of the hidden states with a prior Gaussian distribution to create a minimum-variance posterior Gaussian distribution.
%As the sensors take new observations, the ICF fuses this new information with the past history so that the trace of the cumulative  error covariance is minimized.
Typically, the observations are subject to error covariance that is a function of, e.g., viewing position, angle of incidence, and the sensing modality.
Several such models have been proposed in the distributed active sensing literature, see, e.g., \cite{chung06, simonetto11, jalalkamali12,  freundlich13cdc, freundlich13icra,vanderhook15,derenick09}, but regardless of the specific model used for the measurement covariance, the IF combines the data among the nodes in the network in an additive way, making the IF algorithm readily distributed; see, e.g., Information Consensus Filtering (ICF) \cite{kamal12}.

The dependence of the measurement model on the sensor state has been widely used to obtain planning algorithms that, given a history of measurements, determine the next best set of observations of a hidden state.
The objective is that this new set of observations optimizes an information-theoretic criterion of interest.
In this paper, we choose to maximize the minimum eigenvalue of the posterior information matrix, effectively minimizing the directional variance of every hidden state.
We express this requirement by reformulating the problem as a Linear Matrix Inequality (LMI) constrained optimization problem. Specifically, we introduce as many LMI constraints as the number of hidden states, that are coupled with respect to the sensors via an ICF that estimates the uncertainty of each hidden state using the sensor measurements.
To solve this problem, we propose a new distributed optimization algorithm, which we call \emph{random approximate projections}, that 
%employs subgradient steps in the direction that minimize the violation of randomly selected local constraints, significantly reducing computational cost compared to relevant methods that rely on projection operations.
%At the same time, our method 
is robust to the state disagreement errors that exist among the robots as an ICF fuses the collected measurements.
%Assuming that each LMI constraint is selected with non-zero probability, we show that our proposed algorithm converges almost surely to the next best sensor positions that minimize the worst-case uncertainty of the hidden states.
%We show that our method converges to the next best sensor locations, and repeat this process with the sensors taking a sequence of observations until all the hidden states are estimated up to the desired user-specified accuracy.
The ability to handle such errors is critical in combining distributed estimation and planning in one process.

The proposed distributed random approximate projections method falls in the class of so called consensus-based optimization algorithms, where the goal is to interleave a decentralized averaging step between local optimization steps so that the local estimates of all agents simultaneously obtain a network-wide consensus and optimality of the global problem.  
Existing algorithms require expensive optimization steps or projections on a complicated constraint set, e.g., sets of LMI constraints as in this paper, at every iteration.
Instead, our method employs a subgradient step in the direction that minimizes the violation of randomly selected local constraints, significantly reducing computational cost compared to relevant methods that rely on projection operations.
Assuming that each LMI constraint is selected with non-zero probability, we show that our proposed algorithm converges almost surely to the next best sensor positions from where a new set of observations minimizes the worst-case uncertainty of the hidden states.
This process is repeated with the sensors taking a sequence of observations
until all the hidden states are estimated up to the desired user-specified
accuracy.
For details on relevant distributed optimization methods, we refer the interested reader to \cite{nedic15} and references therein.

Related to our random projection technique is the Markov incremental random subgradient method \cite{johansson09}.
The key difference is in the mechanism that is used to distribute the computations. 
In the Markov incremental algorithm, the agents sequentially update a single iterate sequence by passing the iterate to each other following a time inhomogeneous Markov chain.
In our algorithm, each agent maintains a local iterate sequence and updates this by communicating with its neighbors.
In two recent papers \cite{jamali15,simonetto16}, a dual-decomposition method is proposed that can be used for problems similar to the distributed control problem considered here, with lower communication overhead and increased privacy.
Generally, dual methods are well suited for network optimization problems, they come at increased local cost because of the need to compute the dual function at each node.
The approaches described in \cite{jamali15,simonetto16} additionally require a special problem structure; specifically, a decomposable objective and/or a star network topology.

Applications in the area of distributed sensor planning and estimation that are directly relevant to our work include 
SLAM \cite{nerurkar14},
%radio source localization \cite{charrow14,hoffmann10}, 
localization \cite{freundlich15acc, freundlich13icra,vanderhook15}, coverage \cite{simonetto11, derenick09}, 
mobile target tracking \cite{huang15,freundlich13cdc,jalalkamali12,chung06}, classification \cite{julian12}, and inverse problems for PDE-constrained systems \cite{patan12, khodayi15}. 
Compared to state estimation, inverse problems for PDE-constrained systems address both state and source identification and are particularly difficult to solve.
%\cite{martinez06,chung06}.
%\cite{chung06,vanderhook15}.
A common characteristic of the applications discussed above is that the sensors must collectively reason over the possible posterior distributions resulting from 
\begin{inlinelist}
\item multiple hidden states being observed by a single sensor or
\item individual hidden states being observed by several sensors at once.
\end{inlinelist}
%
%Minimizing any of these metrics can be done by differentiating with respect to the full target distribution 
%
%or by concurrently solving an assignment problem of linear subsystems to agents and then solving several single-target localization problems \cite{pereira13, chung04, freundlich15acc}.
%Our approach controls the worst-case uncertainty for all targets simultaneously.
%
%
%%Olfati-Saber \cite{olfati07distributed} and Jalalkamali and Olfati-Saber\cite{jalalkamali12} are interested in information-driven flocking.
%%
%%radio source localization \cite{charrow14,hoffmann10}
%%classification \cite{julian12}
%Distributed sensor planning and estimation problems under assumptions similar to those herein are relevant in many distinct problem domains, e.g., 
%%radio source localization \cite{charrow14,hoffmann10}, 
%information-driven flocking \cite{olfati2007distributed,jalalkamali12}, 
%task assignment \cite{pereira13},
%%classification \cite{julian12}
%and mobile target tracking
%%\cite{martinez06,chung06}.
%\cite{chung06}.
%
%
%
% choose a joint network action that will result in a new network configuration that minimizes an \emph{information-theoretic} criterion such as the trace, determinant, or eigenvalue.
%This requires the network to 
Without assuming Gaussian distributions, optimization over the possible posteriors can only be done approximately \cite{julian12}.
In this paper, and in much of the relevant literature \cite{chung06,jalalkamali12,freundlich13cdc,freundlich15acc, freundlich13icra,vanderhook15}, it is assumed that the posteriors for the hidden states are Gaussian.
%\todo{again where does simonetto11 fit in?}
%Gaussian assumptions, our paper is not the first to propose distributed control for a team of $n$ robots that localize $m \gg n$ hidden vectors based on the information matrix 
%While the existing literature either decouples estimation and optimization \cite{chung06} or relies on local information only for control decisions\cite{jalalkamali12,olfati2007distributed}, the proposed algorithm concurrently filters the state estimates and optimizes the sensor locations based on the posterior distributions for all targets considering all observations.
%Interestingly, among the relevant works  \cite{olfati2007distributed,jalalkamali12,pereira13,chung06,vanderhook15},
%%\cite{olfati2007distributed,jalalkamali12,pereira13,julian12,martinez06,chung06}, 
%only \cite{pereira13} uses LMI-constraints in optimization, but for a different purpose -- in \cite{pereira13} the LMI constraints relate to coverage constraints in a task assignment problem.
Under this assumption, typical choices of an objective function to minimize in order to obtain the next best set of observations include the trace, the determinant, or the maximum eigenvalue of the posterior covariance matrix.
Intuitively, the trace minimizes the sum of the uncertainties of the hidden states, the determinant the volume of the confidence ellipsoids of the hidden states, and the maximum eigenvalue the worst-case error.
%To control the sensor network to minimize any of these criteria, a simple approach is to first solve an assignment problem of sensors to hidden states, and then solve several single-target localization problems \cite{pereira13, chung04, freundlich15acc}.
To our knowledge, in the context of the distributed active localization problem, existing works that control the maximum eigenvalue of the error covariance matrix either 
focus on sensor selection from a discrete set \cite{JoshiBoyd09, ranieri14, chepuri14, jamali15,simonetto16},
are limited to two sensors \cite{vanderhook15}, or
do not factor in simultaneous measurements by other sensors when deciding where to go \cite{patan09,jalalkamali12,freundlich15acc}.
%
%To our knowledge, only one existing work \cite{vanderhook15} minimizes the maximum eigenvalue of the error covariance matrix in the context of the distributed active localization problem.
%Specifically, \cite{vanderhook15} controls a team of autonomous boats to localize RF-ID tagged fish to a user-specified threshold.
%%The approach is to define two curves that emanate from the true tag location where, if another observation is taken along either of these curves, the uncertainty threshold will be met.
%Unlike our formulation, the team of robots must periodically rendezvous to address target assignment and only one target is observed at a time.
%Our algorithm allows the network to concurrently exchange information and create motion plans.
%This should also be contrasted with the existing literature on distributed localization that either decouples estimation and optimization \cite{freundlich13cdc,freundlich13icra,vanderhook15,chung06,patan09} or relies on local information only for control \cite{jalalkamali12,freundlich15acc}.
%%Another key difference between our work and \cite{vanderhook15} is that we consider sensors that can detect multiple hidden sources at once.
%\input{reviews-response/sensor-selection}

Methods that account for nonlinear dynamics and arbitrary error distributions \cite{spaan14,huang15} are also relevant to the problem addressed herein.
In these cases, updates to the hidden state pdfs may no longer be available in closed form, and so numerical methods are needed to evaluate the expectation integrals.
To account for uncertainty in the sensor location and actions, decentralized Partially Observable Markov Decision Process (dec-POMDP) can be used to solve these problems.
Dec-POMDPs have coupled value functions among the agents, and typically require a centralized planner \cite{omidshafiei15}.
At the same time, comparing the possible posterior distributions can be computationally expensive, so these methods do not usually scale well with the number of targets and agents.
%[You might want to say something about approximate dec-POMDPs if there ant relevant works.]
Although many of these works assume that the robots are self localized with respect to a common global reference frame
%\cite{charrow14, hoffmann10, olfati2007distributed, martinez06, pereira13, vanderhook15, chung06, LeNy09trajectory,julian12, jalalkamali12, song12, skoglar2009information,freundlich13cdc,freundlich15acc}, 
\cite{huang15,julian12,vanderhook15, chung06, jalalkamali12, freundlich13cdc,freundlich15acc}, 
this is certainly not always the case; see, e.g., work on partially observable systems \cite{omidshafiei15,spaan14} or the SLAM problem \cite{nerurkar14}.

{\bf Contributions:}
In this paper we propose a distributed estimation and sensor planning method that can ensure desired estimation tolerances for large numbers of hidden states.
To our knowledge, even though the distributed active sensing literature is well-developed, the ability to control worst-case estimation uncertainty in a decentralized way is new.
This is possible by combining an ICF for distributed data fusion with an LMI-constrained optimization problem for sensor planning, for which we propose a new computationally inexpensive distributed method that is robust to the inexact pre-consensus filter data.
%Then, by combining distributed estimation and planning in one process, our method requires fewer communication steps amongst the sensors compared to running the two processes sequentially.
%This reduces communication burden, which is often cited as a limiting factor in distributed estimation schemes \cite{ribeiro06}.
Coordination among the sensors requires only weak connectivity of the communication graph and can, thus, handle occasional disruption of communication links.

The distributed planning algorithm itself is an extension of the authors' previous work \cite{lee15distributed} and \cite{lee15approx}.
The difference is that the method developed in this paper is robust to state disagreement errors that appear as a result of the information filter.
The ability to handle such errors is critical in integrating distributed estimation and planning.
To the best of our knowledge, there are no distributed optimization methods that can handle efficiently LMI constraints under inexact data.
It is also worth mentioning that a wide variety of convex constraints arising in control, such as quadratic inequalities, inequalities involving matrix norms, Lyapunov inequalities and quadratic matrix inequalities, can be cast as LMIs.
Therefore, our developed algorithm can be used for other, potentially unrelated, control problems involving inexact data.

The rest of the paper is organized as follows.
Section~\ref{sec:pf} formulates the problem under consideration.
Then, Section~\ref{sec:sol} develops the proposed distributed method to maximize the minimum eigenvalue of the information matrix.
Section \ref{sec:assume} states assumptions and the main convergence result of our proposed algorithm.
Section \ref{sec:converge} shows that the proposed algorithm converges to the optimal point.
Section~\ref{sec:sims} provides simulations, and
Section~\ref{sec:conc} concludes the paper.

\section{Problem Formulation}
\label{sec:pf}
Consider the problem of estimating $m$ stationary hidden states $\set{\bbx_i \in \reals^p }_{i \in \ccalI}$ using noisy observations from $n$ mobile sensors, where $p$ is the dimension of the hidden states and $\ccalI = \set{1, \dots, m}$. 
We assume that the hidden states are sparse so that there are no correlations between them.
Denote the locations of the mobile sensors at time $t$ by $\set{ \bbr_s(t) \in \reals^q}_{s \in \ccalS},$ where $q$ is the dimension of the sensors' configuration space and $\ccalS = \set{1, \dots, n}.$ The  observation of state $i$ by sensor $s$ at time $t$ is given by
\label{eq:linsys}
\begin{align}
\bby_{i,s} (t) &= \bbx_i + \bbzeta_{i,s}(t).\label{eq:obs}
\end{align}
We assume that the measurement errors are Normally distributed so that $\bbzeta_{i,s}(t) \sim N \big(\bb0, Q (\bbr_s (t) , \bbx_i ) \big)$, where $Q \colon \reals^{q + p} \to \Sym_{+}(p, \reals)$ denotes the measurement precision matrix, or information matrix (not the covariance), and $\Sym_{+}(p, \reals)$ is the set of $p \times p$ symmetric positive semidefinite real matrices. We also assume that if signal $\bbx_i$ is out of range of sensor $s$, then the function $Q$ returns the information matrix  $\bb0_{p \times p},$ corresponding to infinite variance.
We provide an example of the measurement information function $Q$ in Section~\ref{sec:sims}.
Denote the set of all observations at time $t$ by
$
\ccalO(t)\triangleq \set{\left( \bby_{i,s} (t) , Q (\bbr_s(t), \bbx_i) \right)}_{ i \in \ccalI, s \in \ccalS}
$

Moreover, let $\hbbx_{i} (t)$ denote the estimate of $\bbx_i$ at time $t$ and $S_{i} (t)$ denote the information matrix corresponding to the estimate $\hbbx_{i}(t).$
%If no prior information exists, one way to initialize $\hbbx_s (0)$ is to set it equal to the vector $\bb0_{qm}$ with information matrix equal to the matrix $S_s(0) = \bb0_{qm \times qm}.$
%This is known as the uninformative prior.
%If an informative prior distribution exists, it can be used.
%
%In this paper, we assume that some prior means exist with which to initialize each pair $\left( \hbbx_{i,s} (0), \Sigma_{i,s} (0) \right)$.
We assume that initial values for the pair $\left( \hbbx_{i} (0), S_{i} (0) \right)$ are available {\em a priori}.
Let also $\hbbx (t)\in  \reals^{pm}$ denote the stack vector of all estimates $\hbbx_{i}(t) $ and $S(t) \in \Sym_{+} (p, \reals)\times\dots\times\Sym_{+} (p, \reals) :=\left( \Sym_{+} (p, \reals) \right)^m$ denote the collection of all information matrices $S_{i}(t)$ at time $t$. 
Then, the global data at time $t$ can be captured by the set
$
\ccalD (t) \triangleq (\hbbx  (t),S (t))  \in \reals^{pm} \times \left( \Sym_{+} (p, \reals) \right)^m
.
$

Given prior data $ \ccalD (t-1)$ known by all sensors, and observations $\ccalO(t-1)$, the current set of data $\ccalD(t)$ can be determined in a distributed way using an Information Consensus Filter (ICF) \cite{kamal12} as\footnote{Explicit details on the mapping \eqref{eq:if-map} are provided in Section~\ref{sec:sol}.}
\begin{equation}\label{eq:if-map}
\left( \ccalD (t-1) , \ccalO(t-1)\right) \stackrel{{\rm  ICF}}{\longmapsto} \ccalD (t).
\end{equation}
Then, the goal is to control the next set of observations $\ccalO(t)$ in order to reduce the future uncertainty in $\ccalD(t+1)$. We can achieve this goal by controlling the mobile sensors and, therefore, the expected information $\set{ Q(\bbr_s(t), \bbx_i (t) }_{i \in \ccalI, s \in \ccalS}$ associated with the observations $\set{\bby_{i,s}(t)}_{i \in \ccalI, s \in \ccalS}$.
Specifically, let $\bbr_s(t) = \bbr_s(t - 1) + \bbu_s$ denote the motion model of sensor $s$, where $\bbu_s \in \reals^q$ is a candidate control input used to drive sensor $s$ to a new position $\bbr_s(t)$.
Then, 
our goal is to maximize the minimum eigenvalues of
the information matrices in $\ccalD(t+1)$ that, for each hidden state $i,$ are given by
\begin{equation}
\label{eq:post-info}
S_{i} (t+1) \! = \!\!\!\overbrace{S_{i} (t) }^\text{from $\ccalD(t)$} \!\!\!\!+ \underbrace{   \sum\nolimits_{s \in\ccalS} Q(\bbr_s(t-1) + \bbu_s, \overbrace{ \hbbx_i (t)}^\text{from $\ccalD(t)$} )}_\text{from $\ccalO(t)$}.
\end{equation}
We assume that $\bbu_s$ lies in a compact set containing the origin.
Denote this set by $\ccalU_s.$ 
Note that there is a $\delta>0$ such that the closed ball of radius $\delta$ contains $\ccalU_s.$
Define $\ccalU \triangleq \prod_{s \in \ccalS} \ccalU_s$ and let $\bbu \in \reals^{qn}$ denote the stack of all controllers $\bbu_s.$
Additionally, define the set $\Gamma \triangleq \prod_{i \in \ccalI} [0, \tau_i]$, where $\tau_i$ represent the given user-specified estimation thresholds for each hidden state $i$. 
Define also $\bbgamma= (\gamma_1 , \dots, \gamma_m) \in \reals^m.$
With the new notation, the problem we consider in this paper can be defined as
\begin{subequations}
\label{eq:p1}
\begin{align}
&\max_{
(\bbgamma, \bbu)\in \Gamma \times \ccalU
} \quad \sum\nolimits_{i \in \ccalI}  \gamma_i  
\\
&\textrm{s.t} \quad  
\gamma_iI \preceq S_{i}+  \sum\nolimits_{s \in\ccalS} Q(\bbr_s + \bbu_s, \hbbx_i) \;\;  \forall i \in \ccalI \label{eq:nonlin_constraint},
\end{align}
\end{subequations}
where we have dropped dependence on the time $t$ for notational convenience. For the explicit references to time in \eqref{eq:nonlin_constraint} see \eqref{eq:post-info}.
The symbol $\preceq$ denotes an ordering on the negative semidefinite cone, i.e., $A \preceq B$ if and only if  $A - B$ is negative semidefinite. The great challenge in solving \eqref{eq:p1} is that the constraints \eqref{eq:nonlin_constraint} require knowledge of $\ccalD(t)$, however, only $\ccalD(t-1)$ and those parts of $\ccalO(t-1)$ corresponding to local sensor measurements are available to any given sensor. In what follows, we propose an integrated approach that combines the ICF to determine the global data $\ccalD(t)$ and the solution of the optimization \eqref{eq:p1} in one process. We show that our proposed method can handle pre-consensus disagreement errors due to the ICF and converges to the next best set of observations $\ccalO(t)$, as desired.

\section{Concurrent Estimation and Sensor Planning}
\label{sec:sol}
Problem~\eqref{eq:p1} is a nonlinear semidefinite program.
To simplify this problem, we replace the nonlinear function $Q ( \bbr_s + \bbu_s ,  \hbbx_i)$ in \eqref{eq:nonlin_constraint} with its Taylor expansion about the input $\bbu_s =0$.
Specifically, define the function 
\[
h \colon \! \reals^{m + qn} \!\times \left(\reals^{pm}\! \times \! \left( \Sym_{+} (p, \reals) \right)^m \right)\! \times \! \ccalI \to \Sym (p, \reals),
\]
where $\Sym(p, \reals)$ denotes the set of symmetric $p \times p$ dimensional matrices,
by 
\begin{align}\label{eq:define_h}
h  \left((\bbgamma, \bbu); \ccalD ,  i\right)&\triangleq
\gamma_i I -
S_{i}
\\
-&
\sum_{s \in\ccalS}  
\bigg[ 
Q(\bbr_s , \hbbx_{i}) 
+
\sum\nolimits_{j=1}^q
\nabla_j Q(\bbr_s, \hbbx_i ) [ \bbu_s ]_j
\bigg]. \nonumber
\end{align}
The notation
$\nabla_j Q(\bbr_s, \hbbx_i  )$ is the partial derivative of the matrix-valued function $Q$ with respect to the $j$-th coordinate of $\reals^{q + p},$ evaluated at $(\bbr_s, \hbbx_i  )$.
Also, $[ \bbu_s ]_j$ denotes the $j$-th coordinate of the vector $ \bbu_s$, and the semicolon notation in the arguments of $h$ separates the decision variables $(\bbgamma, \bbu)$ from the parameters $\ccalD$ and $i$.
Note that the value of $h$ is negative semidefinite at a point $\left((\bbgamma, \bbu); \ccalD , i\right)$ if and only if the linearized version of the constraint \eqref{eq:nonlin_constraint} is satisfied for $\gamma_i$ using the data $\ccalD = (\hbbx, S).$
The effect of the linear approximation is investigated in Section~\ref{sec:sims}, Fig.~\ref{f:viols}.

To simplify notation, collect all decision variables in the vector
$\bbz = \left(\bbgamma,  \bbu\right) \in \reals^{m + qn}$. Then, to decompose the linearized version of \eqref{eq:p1} over the set of sensors so that the problem can be solved in a distributed way, let $\bbz_s$ denote a copy of $\bbz$ that is local to sensor $s \in \ccalS.$
Additionally, let
$
f (\bbz_s) = - \sum_{i \in \ccalI} \gamma_{i,s}$ so that we may express the global objective as $\sum_{s \in \ccalS}f (\bbz_s).$
Then, problem~\eqref{eq:p1} with linearized constraints can be expressed as
\begin{subequations}
\label{eq:distributed}
\begin{align}
\min_{
\set{\bbz_s}_{s \in \ccalS}\subset \ccalX_0
} \quad &\sum\nolimits_{s \in \ccalS} f(\bbz_s) \label{eq:sep_obj}
\\
\textrm{s.t} \quad  h\left(\bbz_s; \ccalD, i \right) &\preceq \bb0\quad \forall i \in \ccalI, \forall s \in \ccalS \label{eq:LMI2},
\end{align}
\end{subequations}
where $\ccalX_0 = \Gamma \times \ccalU$. In problem \eqref{eq:distributed}, the objective function \eqref{eq:sep_obj} is separable among the sensors and the constraints \eqref{eq:LMI2} are linear in $\bbz_s$ and local to every sensor.
In this form, problem \eqref{eq:distributed} can be decomposed and solved in a distributed way.
A necessary requirement is that, at the solution, the local variables $\bbz_s$ need to agree for all sensors $s,$ i.e., $\bbz_s = \bbz_v$ for all $s,v \in \ccalS$.
As discussed in Section \ref{sec:pf}, the main challenge in solving problem \eqref{eq:distributed} is that 
the global data in $\ccalD$ are not known to the sensors and need to be computed using the ICF concurrently with the optimization of the decision variables $\bbz_s$. The ICF is an iterative process by which every sensor updates a local copy of the global data until all sensors agree on the true values in $\ccalD$. Therefore, before the ICF has converged, disagreement errors on the local copies of the  data set $\ccalD$ exist, which means that the gradients and objective function evaluations at every sensor node will not depend on the same data. In other words, gradients and objective function evaluations will be based on inexact data. An additional challenge is that the number of LMI constraints can grow very large with the number of sensors and hidden states.  We address these challenges in the remainder of this section.

\begin{algorithm}[t]
\caption{Concurrent Filtering and Optimization for $s \in \ccalS$ during the interval $[t-1, t]$}
\begin{algorithmic}[1]
\label{alg:filter+opt}
\REQUIRE Data $\ccalD_s(t-1)$, measurements $\set{\bby_{i,s}(t-1) }_{i \in \ccalI}$, and $\bbz_{s,0}$ feasible for \eqref{eq:distributed}.
\STATE $(\hbbx,S) \gets \ccalD_s(t-1)$
%\STATE $\bby \gets ( \bby_{1,s} (t- 1) , \dots,  \bby_{m, s} (t- 1) )^\top$
\STATE $\Xi_{i,s,0} \gets \frac{1}{n}S_i +  Q(\bbr_s, \hbbx_i)$ for all $i \in \ccalI$
\label{line:Xi}
\STATE $\bbxi_{i,s,0}\gets \left( \frac{1}{n} S _i + Q(\bbr_s, \hbbx_i)\right) \bby_{i,s} $ for all $i \in \ccalI$
\label{line:xi}
\FOR{$k=1, 2, \dots$}
\label{line:loop}
\STATE Broadcast  $\bbz_{s,k-1}, \set{\Xi_{i, s, k-1}, \bbxi_{i, s, k-1} \mid i \in \ccalI}$ to $\ccalN_{s,k}$
\label{line:cast}
\STATE $\bbp_{s,k}\gets \sum_{j \in \mathcal{N}_{s,k}} [W_k]_{sj} \bbz_{s,k-1}$
\label{line:p}
\STATE $\Xi_{i, s, k}\gets  \sum_{j \in \mathcal{N}_{s,k}} [W_k]_{sj} \Xi_{i, s, k-1}$ for all $i \in \ccalI$
\label{line:Xi_new}
\STATE$\bbxi_{i, s, k}\gets  \sum_{j \in \mathcal{N}_{s,k}} [W_k]_{sj}\bbxi_{i, s, k-1}$ for all $i \in \ccalI$
\label{line:xi_new}
\STATE 
\agn*{
\!\!\!\ccalD_{s,k} \gets \Big( \big((n\Xi_{1,s,k})^{-1} \bbxi_{1, s, k}, \dots, (n\Xi_{m,s,k})^{-1} \bbxi_{m,s,k}\big), \\
\left( n\Xi_{1, s, k} , \dots, n\Xi_{m,s,k} \right)\Big)
}
\label{line:D}
\STATE $\bbv_{s,k}\gets \proj_{\ccalX_0} \left(  \bbp_{s,k} - \alpha_k f' \left(\bbp_{s,k}  \right)\right)$
\label{line:v}
\STATE Choose random $\o_{s,k} \in \ccalI$, compute $h_+(\bbv_{s,k}; \ccalD_{s,k}, \omega_{s,k})$ and $h'_+(\bbv_{s,k}; \ccalD_{s,k}, \omega_{s,k})$
\label{line:o}
\STATE $\beta_{s,k} \gets \frac{h_+(\bbv_{s,k}; \ccalD_{s,k}, \omega_{s,k})}{\norm{h'_+(\bbv_{s,k}; \ccalD_{s,k}, \omega_{s,k})}^2}$
\label{line:b}
\STATE $\bbz_{s,k} \gets  \proj_{\ccalX_0} \left( \bbv_{s,k} - \beta_{s,k} h'_+ \left(\bbv_{s,k}; \ccalD_{s,k}, \omega_{s,k} \right)\right)$
\label{line:z}
\ENDFOR
\RETURN Updated data $\ccalD_s(t) \gets \ccalD_{s,k}$ and control input $\bbu_s$, taken from $\bbz_{s,k}$
\end{algorithmic}
\end{algorithm}

\subsection{Distributed Information Filtering}
We solve problem~\eqref{eq:distributed} using an iterative approach.
Let $\ccalD_{s,k}$ denote the copy of the data set $\ccalD$ that is local to sensor $s$ at iteration $k \in \mbN$, where $\mbN = \set{1,2,3,\dots}$ denotes the natural numbers.
Additionally, let $\bbz_{s,k}$ denote the decision variables of the sensor $s$ at iteration $k$.
The first step of the algorithm is for sensor $s$ to broadcast $\bbz_{s,k-1} $ and a summary of $\ccalD_{s,k-1}.$
In Algorithm~\ref{alg:filter+opt}, the \emph{data summary for state} $i$ is captured by the intermediate information matrix $\bbXi_{i, s, k}$ and information vector $\bbxi_{i, s, k},$ defined in lines \ref{line:Xi} and \ref{line:xi}.
We refer the reader to \cite{kamal12} for a detailed discussion of Information Consensus Filtering, but we note briefly here that if the sensors each compute $\bbXi_{i,s,0}$ and $\bbxi_{i,s,0}$, and carry out the weighted averaging that is standard in all consensus algorithms, given in lines \ref{line:Xi_new} and \ref{line:xi_new}, then $n \bbXi_{i, s, k}$ and $\bbxi_{i, s, k}$ converge geometrically to the consensus posterior information.
In other words, under the ICF, we have that, for all $s \in \ccalS,$ 
%it holds that $\ccalD_{s,k} \to \ccalD,$
%$$
%\lim_{k \to \infty} 
%\left([n \bbXi_{s,k} ]^{-1} \bbxi_{s,k}, [n \bbXi_{s,k}]^{-1} \right) =  \ccalD ,
%$$
$
\set{ \left( n\Xi_{1, s, k}, \dots, n\Xi_{m,s,k} \right)}_{k \in \mbN}\to S
$
for all
$i \in \ccalI$
and
$
\set{\big((n\Xi_{1,s,k})^{-1} \bbxi_{1, s, k}, \dots, (n\Xi_{m,s,k})^{-1} \bbxi_{m,s,k}\big)}_{k \in \mbN} \to \hbbx.
$
%For finite $k$, the intermediate data $\ccalD_{s,k}$ local to sensor $s$ at some time $k \in \mbN$ are defined in line~\ref{line:D}.
In particular, let $\ccalG_k = (\ccalS, \ccalE_k)$ represent a communication graph such that $(s,v) \in \ccalE_k$ if and only if sensors $s$ and $v$ can communicate at iteration $k$.
This defines the \emph{neighbor set},
$
\ccalN_{s,k} \triangleq \set{s} \cup \set{
v \in \ccalS \mid (v,s) \in \ccalE_k
}.
$
Sensor $s$ receives $\set{\bbz_{v,k-1}, \set{\bbxi_{i,v,k},\bbXi_{i,v,k}}_{i \in \ccalI}  \mid v \in \ccalN_{s,k}}.$
With this new information, the sensor updates its own optimization variable and data summary via using weighted averaging.
In particular, let $W_k$ be an $n \times n$ row stochastic matrix such that
$
[W_k]_{sv} > 0 \iff v \in \ccalN_s.
$
The exact assumptions for $W_k$ will be introduced later, when they are necessary for the proof of our main result.
The consensus steps are depicted in lines~\ref{line:p}, \ref{line:Xi_new}, and \ref{line:xi_new} of Algorithm~\ref{alg:filter+opt}.
Then, the agent can compute $\ccalD_{s,k}$ as shown in line~\ref{line:D}, concluding the IF portion of iteration $k.$
Note that, given initial conditions $\set{ \ccalD_{s, 0}}_{s \in \ccalS},$ the data for any sensor $s$ at time $k$ is bounded by the initial conditions as
\begin{equation}\label{eq:bound_data}
\norm{ \ccalD_{s,k}  } \leq
\max_{v \in \ccalS}
\norm{
\left(  \ccalD_{v,0}
\right)
}
,
\end{equation}
where $\norm{\cdot}$ is any norm in the space of data $\reals^{pm} \times \left( \Sym_{++} (p, \reals) \right)^m.$
This implies that the set of possible data is bounded with respect to any norm in the vector space $\reals^{pm} \times\left( \Sym (p, \reals) \right)^m.$
Note that, by the same arguments as can be found in \cite{kamal12}, under any norm, all sensors' data converge to the same consensus value, i.e.,
$
\norm{ \ccalD_{s,k} - \ccalD_{v,k } }\to 0
$
and
$
\norm{ \ccalD_{s,k} - \ccalD }\to 0
$
for all $s , v \in \ccalS.$

\subsection{Random Approximate Projections}
The optimization part of the algorithm begins in line \ref{line:v} by computing a negative subgradient of the $f$ at the local iterate, which we denote by $- f'(\bbz_{s,k})$, taking a gradient descent step, and projecting the iterate back to the simple constraint set $\ccalX_0$.
The positive step size $\alpha_k$ is diminishing, non-summable and square-summable, i.e, $\alpha_k \to 0$, $\sum_{k \in \mbN}\alpha_k = \infty$ and $\sum_{k \in \mbN}\alpha_k^2 < \infty$.
The last step of our method is not a projection to
the feasible set, but instead a subgradient step in the direction
that minimizes the violation of a \textit{single} LMI constraint.
Essentially, each node selects one of
the LMI’s randomly, measures the violation of that constraint,
and takes an additional gradient descent step with step size
$\beta_{s,k}$ in order to minimize this violation.
We call this process \textit{random approximate projection}.

Specifically, line~\ref{line:o} randomly assigns $\o_{s,k} \in \ccalI$, effectively choosing to activate $h\left(\cdot; \cdot, \omega_{s,k} \right)$.
The idea behind line~\ref{line:z} is to minimize a scalar metric that measures the amount of violation of $h\left(\cdot; \cdot, \omega_{s,k} \right)$, which we denote by $h_+\left(\cdot; \cdot, \omega_{s,k} \right),$ while remaining feasible.
Let $\proj_+ \colon \Sym(p,\reals) \to \Sym_+(p,\reals)$ denote the projection operator onto the set of positive semidefinite matrices, given by
\begin{equation}\label{eq:pi}
\proj_+X = E  \diag\left[ (\lambda_1)_+, \dots, (\lambda_p)_+\right] E^\top,
\end{equation}
where
$(\cdot)_+ \triangleq \max \set{\cdot, 0}$
and $\lambda_1, \dots,  \lambda_p$ are the eigenvalues and $E$ is a matrix with columns $\bbe_1, \dots, \bbe_p$ that are the corresponding eigenvectors of $X$.
A measure of the ``positive definiteness'' of $X$ is thus
$
\norm{\proj_+  X  }_F
\equiv  \sqrt{ (\lambda_1)_+^2 +  \cdots + (\lambda_p)_+^2}.
$
This defines our measure of constraint violation,
\begin{align}
 h_+ \left(\bbz; \ccalD, i\right) &\triangleq \norm{ \proj_+ h \left(\bbz; \ccalD, i\right)   }_F.
\end{align}
We use special notation to denote the vector of derivatives of $h_+$ with respect to the decision variables $\bbz$ evaluated at $(\bbz; \ccalD, i)$.
In particular, define the gradient
$
h'_+ \left(\bbz; \ccalD, i\right)= \left(\nabla_1 h_+ (\bbz; \ccalD,i), \dots  ,  \nabla_{ m+qn} h_+ (\bbz ; \ccalD,i) \right)^\top 
= \nabla_\bbz h_+(\bbz; \ccalD, i).
$
Defining the function 
$
\psi_{ij}  \colon \Sym (p, \reals) \times \reals^{m +qn} \times \left(\reals^{pm} \times  \left( \Sym_{+} (p, \reals) \right)^m \right) \to \reals
$
for notational convenience, the $j$-th entry of $h'_+  \! \left(\bbz; \ccalD, i\right)$ is given by
\begin{equation}\label{eq:dh+_dzj}
\nabla_{j} h_+ (\bbz; \ccalD,i)
=  \psi_{ij} \Big( \proj_+  \big(  h\left(\bbz;\ccalD, i \right) \big), \bbz, \ccalD\Big),
\end{equation}
where
\begin{equation}\label{eq:psi}
\psi_{ij} (X,\bbz,\ccalD)\triangleq\begin{cases}
\frac{
{{\rm Tr}} 
\left(
\nabla_j h (\bbz, \ccalD,i)
X 
\right) }{\norm{X}_F}
&{{\rm if}} \; \norm{X}_F\! >0\\
d&{{\rm else}}
\end{cases}
\end{equation}
and $d >0$ is some arbitrary constant.
The function $\psi_{ij}$ is not continuous at $X=\bb0,$ and we set $d \neq 0$ for technical reasons discussed below.
The step size
\begin{equation}\label{eqn:beta}
\beta_{s,k} = \frac{h_+(\bbv_{s,k}; \ccalD_{s,k}, \omega_{s,k})}{\norm{h'_+(\bbv_{s,k}; \ccalD_{s,k}, \omega_{s,k})}^2}
\end{equation}
given in line \ref{line:b} is a variant of the Polyak step size \cite{polyak-sg}.
Note that this is a well defined step size as we let 
$
h_+' (\bbv_{s,k}; \ccalD_{s,k}, \omega_{s,k}) = d {\bf 1}_{m+qn}
$
(cf. Eq \eqref{eq:psi})
whenever 
$
h_+(\bbv_{s,k}; \ccalD_{s,k}, \omega_{s,k})=0
$
and
it is nonzero elsewhere by construction.
% we know 
%$
%h_+'(\bbv_{s,k}; \ccalD_{s,k}, \omega_{s,k}) \neq  {\bf 0}_{m+qn}
%$
%when $h_+(\bbv_{s,k}; \ccalD_{s,k}, \omega_{s,k}) \neq 0$.
%{
%\begin{rem}[Algorithm Complexity]
%The computational complexity per iteration per agent is $O(p^3)$, incurred during the projection of $h$ onto the positive semidefinite cone.
%The communication complexity varies at each iteration depending on the size of each agent's neighbor set.
%In particular, each robot has to send $\bbz$, $\bbxi$, and $\bbXi$ to its neighbors, and the sizes of each of these objects are $pn,$ $qm,$ and $\frac{qm(qm+1)}{2}.$
%The total number of messages, i.e., the max value of $k$ can be specified ahead of time and is limited by the length of the time interval between $t-1$ and $t$.
%\end{rem}
%}

\section{Main Results}
\label{sec:assume}
In this section, we discuss our assumptions and state the main result. The detailed proofs are deferred to Section \ref{sec:converge}.
The feasible set is
\(
\ccalX = \ccalX_0 \cap \{\bbz \in \reals^{m +qn}\mid h\left(\bbz; \ccalD, i \right) \preceq 0,~\forall i \in \ccalI\}.
\)
The optimal value and solution set of the problem \eqref{eq:distributed} is
$
f^* = \min_{\ccalX} f(\bbz)$ and $ \ccalX^* = \{\bbz \in \ccalX \mid  f(\bbz) = f^*\}.
$
Note that $\bb0 \in \Gamma \times \ccalU$ and $h(\bb0; \ccalD, i) \preceq 0$ for all possible $\ccalD$ and $i,$ thus there is always a feasible point.

At step $k \in \mbN$, $\ccalD_{s,k}$ in line \ref{line:D} of Algorithm \ref{alg:filter+opt} may not yet have converged to $\ccalD,$ so we denote the
error in the constraint violation by
\begin{equation}\label{eq:constrainterror}
\nu_{s,k} =  h_+ \left(\bbz_{s,k}; \ccalD_{s,k}, \omega_{s,k} \right) - h_+ \left(\bbz_{s,k}; \ccalD, \omega_{s,k} \right) .
\end{equation}
Similar to the definition in \eqref{eq:constrainterror}, we define the error in the gradient of the constraint violation to be
\begin{equation}\label{eq:differror}
\bbdelta_{s,k}=
h'_+\left(\bbz_{s,k}; \ccalD_{s,k}, \omega_{s,k} \right)-h'_+\left(\bbz_{s,k}; \ccalD, \omega_{s,k} \right).
\end{equation}
The vectors $\nu_{s,k}$ and $\bd_{s,k}$ will enable us to quantify the joint effect of the function value error  $\nu_{s,k}$ and the subgradient error $\bd_{s,k}$
in studying convergence properties of Algorithm~\ref{alg:filter+opt}.
%First we assume that these two errors are bounded.

\subsection{Bounds}
The norms of $\nu_{s,k}$ and $\bd_{s,k}$ are bounded, i.e., for some scalars $D, N > 0$, there holds for all $k \in \mbN$ and $s \in \ccalS$  \textit{a.s.}:
$
|\nu_{s,k}|\le N$ and $\|\bd_{s,k}\| \le D.
$
We calculate the exact values of $D$ and $N$ in Corollaries~\ref{cor:N} and \ref{cor:D}, respectively.

The set $\ccalX_0 = \Gamma \times \ccalU$ is convex and compact.
Therefore, there must exist a constant $C_{\zb} > 0$ such that for any $\zb_1, \zb_2 \in \Xc_0$
\begin{equation}\label{eqn:C_z}
\|\zb_1 -\zb_2\| \le C_{\zb}.
\end{equation}
Also, note that the functions $f(\cdot)$ and $h(\zb;\ccalD,i)$ for $i \in \ccalI$ are convex (not necessarily differentiable) over some open set that contains $\Xc_0$.
A direct consequence of this and the compactness of $\Xc_0$ is that the subdifferentials $\partial f (\zb)$
and $\partial h_+ (\zb;\ccalD,i)$ are nonempty over $\Xc_0$,
where note that here we use the symbol $\partial$ to refer to the subdifferential set and not a partial derivative. 
It also implies that
the subgradients $f'(\zb) \in \partial f (\zb)$ and $h_+'(\zb;\ccalD,i) \in \partial h_+ (\zb;\ccalD,i)$ are uniformly bounded over the set $\Xc_0$. That is, there is a scalar $L_f$ such that for all $f'(\zb) \in \partial f (\zb)$ and  $\zb\in \Xc_0$,
\begin{subequations}
    \begin{equation}\label{eqn:L_f1}
    \|f'(\zb)\| \le L_f,
    \end{equation}
and for any $\zb_1, \zb_2 \in \Xc_0$,
\begin{equation}\label{eqn:L_f2}
|f(\zb_1)-f(\zb_2)| \le L_f \|\zb_1-\zb_2\|.
\end{equation}
\end{subequations}
Note that for the objective function $f$ in \eqref{eq:distributed}, we have $L_f = 1.$
Also, there is a scalar $L_h$ such that for all $h_+'(\zb;\ccalD,i)\in \partial h_+ (\zb;\ccalD,i)$, $\zb \in \Xc_0$, $i \in \ccalI$,
    \begin{equation}\label{e:Lh}
    \|h_+'(\zb;\ccalD,i)\| \le L_h.
    \end{equation}
We calculate the exact value of $L_h$ in Corollary~\ref{cor:D}.
\begin{comment}
Similarly, we know the function $h(\zb;\ccalD,i)$, for each $i \in \ccalI$, is defined and convex in $\zb$ (not necessarily differentiable) over some open set that contains $\Xc_0$.
A direct consequence is that the subdifferentials $\partial h_+ (\zb;\ccalD,i)$ are nonempty over $\Xc_0$. It also implies that
the subgradients $h_+'(\zb;\ccalD,i) \in \partial h_+ (\zb;\ccalD,i)$ are uniformly bounded over the set $\Xc_0$. That is, there is a scalar $L_h$ such that for all $h_+'(\zb;\ccalD,i)\in \partial h_+ (\zb;\ccalD,i)$, $\zb \in \Xc_0$, $i \in \ccalI$,
    \[
    \|h_+'(\zb;\ccalD,i)\| \le L_h.
    \]
We calculate the exact value of $L_h$ in Corollary~\ref{cor:D}.
\end{comment}
From relation \eqref{eq:differror} we have for any $i\in \ccalI$, $s \in \ccalS$, $k \in \mbN$, and $\zb\in \Xc_0$
\begin{subequations}
\begin{equation}\label{eqn:L_h1}
\|h_+'(\zb;\ccalD_{s,k},i)\| = \|h_+'(\zb;\ccalD,i) + \bbdelta_{s,k}\|\le L_h +D,
\end{equation}
and for any $\zb_1, \zb_2 \in \Xc_0$
\begin{equation}\label{eqn:L_h2}
|h_+(\zb_1;\ccalD_{s,k},i) \! - \! h_+(\zb_2;\ccalD_{s,k},i)| \le (L_h+D) \|\zb_1 \! - \! \zb_2\|.
\end{equation}
\end{subequations}

The compactness of $\Xc_0$, the boundedness of the data sequences $\ccalD_{s,k}$, and the continuity of the functions $h_+(\zb;\ccalD,i)$ for $i \in \ccalI$ imply that there exist constants $C_h >0$ such that for any $s \in \ccalS$, $k \in \mbN$, $i \in \ccalI$ and $\zb \in \Xc_0$
\begin{subequations}
\begin{equation}\label{eqn:C_h1}
|h_+(\zb;\ccalD,i)| \le C_h,
\end{equation}
\begin{equation}\label{eqn:C_h2}
|h_+(\zb;\ccalD_{s,k},i)|  = |h_+(\zb;\ccalD,i)+\nu_{s,k}| \le C_h+N,
\end{equation}
\end{subequations}
where the second relation follows from \eqref{eq:constrainterror}.
Furthermore, when $h_+(\zb;\ccalD_{s,k},i) \neq 0$, we have $h_+'(\zb;\ccalD_{s,k},i) \neq \bb0$. Therefore, there must exist a constant $c_h >0$ such that 
\begin{align}\label{eqn:dhbnd}
\|h_+'(\zb;\ccalD_{s,k},i)\| \ge c_h
\end{align}
for all $\zb \in \Xc_0$, $s \in \ccalS$, $k \in \mbN$, and $i \in \ccalI$.
%When $h_+'(\zb;\ccalD_{s,k},i) = 0$, we have $\|h_+'(\zb;\ccalD_{s,k},i)\| = d\sqrt{m+2n}$ from its definition in (\ref{eq:psi}).
%\todo{
%What about $\lim_{X \to 0}\norm{h'_+ (X, \ccalD, i)}$?
%Depending on the route the $X$ takes, $\lim_{X \to 0}\norm{h'_+ (X, \ccalD, i)}$ may be zero, or it may be nonzero.
%}
%From this, we have the following lower bound:
%\begin{align}\label{eqn:dhbnd}
%\|h_+'(\zb;\ccalD_{s,k},i)\| \ge c_h,
%\end{align}
%where $c_h = \min\{c_h',d\sqrt{m+2n}\}$.
This and relation (\ref{eqn:C_h2}) imply that
\begin{align}\label{eqn:betabnd}
\b_{s,k} = \frac{h_+(\zb;\ccalD_{s,k},\o_{s,k})}{\|h'_+(\zb;\ccalD_{s,k},\o_{s,k})\|^2} \le \frac{C_h+N}{c_h^2}.
\end{align}
Note that when $h_+(\zb;\ccalD_{s,k},\o_{s,k}) = 0$, the above bound holds trivially.

\subsection{Assumptions}
In the preceding sections, we have made extensive use of the function $Q\colon \reals^q \times \reals^p \to \Sym_{+}(p,\reals).$  
For our algorithm to converge, we require the following to hold true regarding this information model.

\begin{assumption}
\label{assume:Q}
We assume that the information function $Q$
\begin{enumerate}[(a)]
\item is bounded,\label{a:bd}
\item is twice subdifferentiable\label{a:sub}
\item has bounded subdifferentials up to the second order, and\label{a:bd_sub}
\item has relatively few critical points, i.e., the sets of critical points of $Q$ and its partial derivatives up to the second order are measure zero. \label{a:crit}
% in the set $\reals^q \times \reals^p.$
\end{enumerate}
Denote the bound on the magnitude of $Q$ by $\eta_0$, the bound on its first derivatives by $\eta_1$, and the bound on its second derivatives by $\eta_2.$
\end{assumption}
The technical restrictions on the information model $Q$ are not overly restrictive in practice.
In particular, items~\ref{a:bd}, \ref{a:sub}, and \ref{a:bd_sub} imply that one cannot obtain infinite information and that, by changing sensor configuration, the information rate (and the rate of change of the rate) cannot change infinitely quickly.
Item~\ref{a:crit} essentially allows us to distinguish between sensor configurations, i.e., the set of configurations that offer ``optimal'' information rates are relatively sparse with respect to the space of signal source positions and sensor configurations $\reals^q \times \reals^p.$

The next two assumptions are related to the random variables $\o_{s,k}$.
At each iteration $k$ of the inner-loop, recall that each sensor $s$ randomly generates $\o_{s,k} \in \ccalI$. We assume that they are \textit{i.i.d.} samples from some probability distribution on $\ccalI$.
\begin{assumption}\label{assume:Pr}
In the $k$-th iteration of the inner-loop,
each element $i$ of $\ccalI$ is generated with nonzero probability,
%Any random variable $\o \in \O_i$ has a positive probability distribution over the set $\O_i$,
i.e., for any $s \in \ccalS$ and $k \in \mbN$ it holds that
$
\pi_i \triangleq \textrm{Pr}\{ \omega_{s,k} = i\} > 0.
$
%If for some $x \in \Xc_0$, $x \not\in \Xc_i$, then the probability of generating a violated inequality is nonzero, i.e.,
%for any $i \in \Vc$
%\[
%\Pr\{\o \mid g^+(x,\o) > 0 \text{ and } \o \in \O_i\} > 0
%\]
\end{assumption}
Assumption~\ref{assume:Pr} is easy to satisfy because $\ccalI$ is a finite set.
\begin{assumption}\label{assume:c}
For all $s \in \ccalS$ and $k \in \mbN$,
%For all $i \in \Vc$ and for any random variable $\o \in \O_i$ with a positive probability distribution over the set $\O_i$,
there exists a constant $\k> 0$ such that for all $\zb \in \Xc_0$
$
 \dist^2 (\zb,\Xc)  \le \k \Es\left[h_+^2(\zb;\ccalD_{s,k},\omega_{s,k})\right].
$
\end{assumption}
\noindent The upper bound in Assumption \ref{assume:c} is known as \textit{global error bound} and is crucial for the convergence analysis of Algorithm \ref{alg:filter+opt}.
Sufficient conditions for this bound have been shown in \cite{pang-book,Lewis96errorbounds,Bauschke:1996,Gubin19671}, which includes the case when each function $h(\cdot;\cdot,i)$ is linear in $\bbz$,
or when the feasible set $\Xc$ has a nonempty interior.
%The lower bound in Assumption \ref{assume:c} is usually not difficult to show. For the convergence proof, however, we let $\k_1$ be arbitrary as the constant $\k_1$ depends on the type of constraint functions.

\begin{assumption}\label{assume:network}
For all $k \in \mbN$, the weighted graphs $\Gc_k = (\ccalS,\Ec_k,W_k)$ satisfy:
\begin{enumerate}[(a)]
\item There exists a scalar $\eta \in (0,1)$ such that $[W_k]_{sj} \ge \eta$ if $j \in \Nc_{s,k}$. Otherwise, $[W_k]_{sj} = 0$.
\item The weight matrix $W_k$ is doubly stochastic, i.e., $\sum_{s\in\ccalS}[W_k]_{sj}=1$ for all $j \in \ccalS$ and $\sum_{j\in\ccalS}[W_k]_{sj}=1$ for all $s \in \ccalS$.
\item There exists a scalar $Q > 0$ such that the graph $\left(\ccalS,\cup_{\ell=0,\ldots,Q-1} \Ec_{t+\ell}\right)$ is strongly connected for any $t \ge 1$.
\end{enumerate}
\end{assumption}
\noindent This assumption ensures a balanced communication between sensors. It also ensures that there exists a path from one sensor to every other sensor infinitely often even if the underlying graph topology is time-varying.

\subsection{Main result}
Our main result shows the almost sure convergence of Algorithm~\ref{alg:filter+opt}.
Specifically, the result states that the sensors asymptotically reach an agreement to a random point $\bbz^*$ which is in the optimal set $\ccalX^*$ almost surely (a.s.), as given in the following theorem.
\begin{theorem}[Asymptotic Convergence Under Noise]
\label{prop:main}
Let Assumptions
\ref{assume:Q} - \ref{assume:network}
hold.
Let also the stepsizes be such that $\sum_{k=1}^{\infty} \alpha_k = \infty$ and $\sum_{k=1}^{\infty} \alpha_k^2 < \infty$.
%Let the function $f_i$, for all $i \in \Vc$, be differentiable and $\nabla f_i(x)$ be Lipschitz continuous with constant $L$ over $\Rbb^n$.
%i.e.,
%\[
%\|\nabla f_i(x) - \nabla f_i(y) \| \le L \|x-y\|, \text{ for all } x, y \in \Rbb^n,~i \in \Vc.
%\]
%Furthermore, let there exist a constant $C_f$, such that $\|\nabla f_i(x)\| \le C_f$ for all $x \in \Xc$, $i \in \Vc$.
%Assume that problem (\ref{eqn:prob}) has a nonempty optimal set $\Xc^*$. Then, with the choice of $\lambda_k = g^+(v_{i,k},\o_{i,k})/\|d_{i,k}\|^2$ for all $k \in \mbN$ and $i \in \Vc$,
Then, the iterates $\{\bbz_{s,k}\}$ generated by each agent $s \in \ccalS$ via Algorithm~\ref{alg:filter+opt}
converge almost surely to the same point in the optimal set $\ccalX^*$, i.e., for a random point $\bbz^*\in \ccalX^*$
$
\lim_{k\to\infty} \bbz_{s,k} = \bbz^*, \quad \forall s \in \ccalS \quad a.s.
$
\end{theorem}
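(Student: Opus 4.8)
The plan is to adapt the convergence analysis of distributed random-projection subgradient methods from the authors' prior work \cite{lee15distributed,lee15approx}, the essential new element being a careful accounting of the two inexact-data perturbations $\nu_{s,k}$ and $\bbdelta_{s,k}$ of \eqref{eq:constrainterror}--\eqref{eq:differror}. A key preliminary observation is that, because the ICF drives $\ccalD_{s,k}\to\ccalD$ geometrically (Section~\ref{sec:sol}) while $h_+$ and $h_+'$ are Lipschitz in the data on the compact set $\ccalX_0$, we not only have the uniform bounds $|\nu_{s,k}|\le N$, $\|\bbdelta_{s,k}\|\le D$ used in the subsection on bounds, but in fact $\nu_{s,k}\to 0$, $\bbdelta_{s,k}\to 0$ with $\sum_k|\nu_{s,k}|<\infty$ and $\sum_k\|\bbdelta_{s,k}\|<\infty$ a.s., so every residual term proportional to these perturbations is summable.

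\textbf{Step 1: basic iterate relation.} Fix $\bby\in\ccalX$, write $\bar\bbz_k=\tfrac1n\sum_{s}\bbz_{s,k}$, and let $\mathcal F_{k-1}$ be the $\sigma$-field generated by all draws $\{\o_{s,l}\}_{l\le k-1}$ and the initial data. Using nonexpansiveness of $\proj_{\ccalX_0}$, convexity of $\|\cdot\|^2$ with the double stochasticity of $W_k$ (Assumption~\ref{assume:network}), the subgradient inequality for $f$ at $\bbp_{s,k}$, and convexity of $h_+(\cdot;\ccalD_{s,k},\o_{s,k})$ together with the Polyak step size \eqref{eqn:beta}, I would obtain, after relating $f(\bbp_{s,k})$ to $f(\bar\bbz_{k-1})$ by Lipschitz continuity,
\[
\|\bbz_{s,k}-\bby\|^2\le\sum_{j}[W_k]_{sj}\|\bbz_{j,k-1}-\bby\|^2-2\alpha_k\big(f(\bar\bbz_{k-1})-f(\bby)\big)-\frac{h_+^2(\bbv_{s,k};\ccalD_{s,k},\o_{s,k})}{\|h_+'(\bbv_{s,k};\ccalD_{s,k},\o_{s,k})\|^2}+e_{s,k},
\]
where $e_{s,k}$ gathers the $O(\alpha_k^2)$ subgradient term, the $O(\alpha_k\|\bbz_{j,k-1}-\bar\bbz_{k-1}\|)$ consensus cross-terms, and the noise term $2\beta_{s,k}\,h_+(\bby;\ccalD_{s,k},\o_{s,k})$; since $h_+(\bby;\ccalD,\o_{s,k})=0$ for $\bby\in\ccalX$, this last term is $O(\beta_{s,k}|\nu_{s,k}|)$ and, with $\beta_{s,k}$ bounded by \eqref{eqn:betabnd}, summable.

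\textbf{Step 2: agreement, and Step 3: optimality.} Summing over $s$ (using column-stochasticity of $W_k$) gives a recursion for $V_k=\sum_s\|\bbz_{s,k}-\bby\|^2$ to which the Robbins--Siegmund supermartingale lemma applies, yielding a.s. that $V_k$ converges and $\sum_k\sum_s h_+^2(\bbv_{s,k};\ccalD_{s,k},\o_{s,k})/\|h_+'(\cdot)\|^2<\infty$. Since $\|h_+'\|\ge c_h$ when the numerator is nonzero (Eq.~\eqref{eqn:dhbnd}) and $\|\bbz_{s,k}-\bbv_{s,k}\|=h_+(\bbv_{s,k};\ccalD_{s,k},\o_{s,k})/\|h_+'(\cdot)\|$, this forces $\sum_k\sum_s\|\bbz_{s,k}-\bbv_{s,k}\|^2<\infty$; combined with $\sum_k\alpha_k^2<\infty$ and Cauchy--Schwarz, the per-step drifts satisfy $\sum_k\alpha_k\|\bbz_{s,k}-\bbp_{s,k}\|<\infty$ a.s., so the standard mixing estimate for products of the doubly stochastic, $Q$-strongly-connected $W_k$ (Assumption~\ref{assume:network}) gives $\|\bbz_{s,k}-\bar\bbz_k\|\to0$ and $\sum_k\alpha_k\|\bbz_{s,k}-\bar\bbz_k\|<\infty$ a.s. Next, taking $\mathbb E[\cdot\mid\mathcal F_{k-1}]$ in the Step~1 inequality, Assumption~\ref{assume:Pr} (each constraint drawn with probability $\pi_i>0$), the bound $\|h_+'\|\le L_h+D$ (Eq.~\eqref{eqn:L_h1}), and Assumption~\ref{assume:c} (global error bound) show the conditional expected progress term dominates a constant multiple of $\dist^2(\bbv_{s,k},\ccalX)$ minus a summable noise term; a second Robbins--Siegmund argument then yields, a.s. and simultaneously: $V_k$ converges for every fixed $\bby\in\ccalX^*$; $\sum_k\alpha_k(f(\bar\bbz_{k-1})-f^*)<\infty$; and $\sum_k\dist^2(\bbv_{s,k},\ccalX)<\infty$, hence $\dist(\bbz_{s,k},\ccalX)\to0$. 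Because $\sum_k\alpha_k=\infty$, some subsequence has $f(\bar\bbz_k)\to f^*$ and $\dist(\bar\bbz_k,\ccalX)\to0$; by compactness of $\ccalX_0$ and closedness of $\ccalX$, a further subsequence converges to some $\tilde\bbz\in\ccalX$ with $f(\tilde\bbz)\le f^*$, i.e. $\tilde\bbz\in\ccalX^*$. Applying the convergence of $V_k$ with $\bby=\tilde\bbz$: $\sum_s\|\bbz_{s,k}-\tilde\bbz\|^2$ converges and has a subsequence tending to $0$ (using $\|\bbz_{s,k}-\bar\bbz_k\|\to0$), so it tends to $0$; hence $\bbz_{s,k}\to\tilde\bbz=:\bbz^*\in\ccalX^*$ for all $s$, which is the claim.

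\textbf{Main obstacle.} The hardest part is the coupling exposed in Step~2: unlike a pure subgradient method, the last-step displacement $\|\bbz_{s,k}-\bbv_{s,k}\|$ is only $O(h_+(\bbv_{s,k};\ccalD_{s,k},\o_{s,k}))$ and is not a priori vanishing, so the agreement estimate cannot be established independently of the master optimality inequality --- one must first extract square-summability of these displacements from that inequality and only then run the consensus mixing argument, all while the constraint data $\ccalD_{s,k}$ are still drifting. Making the two coupled error budgets (the inexact-data perturbations $\nu_{s,k},\bbdelta_{s,k}$ and the disagreement $\|\bbz_{s,k}-\bar\bbz_k\|$) precise enough that every residual sum is genuinely a.s. summable --- in particular using the \emph{geometric} decay of $\ccalD_{s,k}-\ccalD$, not merely its boundedness, to control the $\beta_{s,k}\nu_{s,k}$-type terms --- is the crux; one must also carry the discontinuity of $\psi_{ij}$ at $X=\bb0$ (handled via the constant $d>0$ in \eqref{eq:psi}) through the step-size and gradient bounds so that $\beta_{s,k}$ and $h_+'$ stay well defined and appropriately bounded throughout.
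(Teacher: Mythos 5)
Your overall architecture (basic iterate relation, Robbins--Siegmund, the global error bound of Assumption~\ref{assume:c}, and the perturbed-consensus estimate) matches the paper's, but two of your steps conceal genuine gaps. First, your preliminary observation asserts that $h_+$ \emph{and} $h_+'$ are ``Lipschitz in the data,'' so that summability of $\nu_{s,k}$ and $\bbdelta_{s,k}$ follows directly from the geometric convergence of the ICF. For $\nu_{s,k}$ this is fine (nonexpansiveness of $\proj_+$ and the reverse triangle inequality give Lipschitz dependence of $h_+$ on the data, which is exactly the paper's Lemma~\ref{lem:summable_constraints}), but for $\bbdelta_{s,k}$ it is false as stated: the entries of $h_+'$ are $\psi_{ij}\big(\proj_+ h(\bbz;\ccalD,i),\bbz,\ccalD\big)$, and $\psi_{ij}$ is discontinuous at $X=\bb0$ while the composition with $\proj_+$ fails to be differentiable wherever $h(\bbz;\ccalD,i)$ is singular or has repeated eigenvalues. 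The paper's Lemma~\ref{lem:summable_grads} therefore cannot invoke a global Lipschitz property; it needs the measure-zero argument of Proposition~\ref{prop:measure} (reachable matrices with singular or nonsimple spectrum form a null set, via the implicit function theorem), the local differentiability of $\psi_{ij}\circ\proj_+$ on a neighborhood of the limit (Proposition~\ref{prop:neighborhood}), and the bounds on the partial derivatives of $\psi_{ij}$ (Proposition~\ref{prop:boundpartials}) to conclude, only almost surely along the iterate sequence, that $\sum_k\|\bbdelta_{s,k}\|<\infty$. This is the central new technical content of the inexact-data analysis, and your proposal assumes it away.

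Second, your ordering of the argument is circular in a way your own ``main obstacle'' paragraph identifies but does not resolve. The inequality you apply Robbins--Siegmund to contains, inside $e_{s,k}$, cross-terms of order $\alpha_k\|\bbz_{j,k-1}-\bar\bbz_{k-1}\|$ whose summability is exactly what you propose to derive \emph{afterwards} from the mixing argument, and its progress term $f(\bar\bbz_{k-1})-f(\bby)$ has uncontrolled sign because the averaged iterate is generally infeasible (controlling its negative part again requires $\sum_k\alpha_k\,\dist(\bar\bbz_{k-1},\ccalX)<\infty$, i.e., a feasibility estimate you do not yet have). The supermartingale lemma cannot be invoked until these perturbations are known summable, so you cannot ``first extract square-summability of the displacements from that inequality.'' The paper breaks the loop by a different decomposition: it first instantiates the basic iterate relation at $\check\zb=\zb=\mathsf{\Pi}_{\Xc}[\pb_{s,k}]$, so that the residual $\frac{1}{4\eta}\|\pb_{s,k}-\check\zb\|^2=\frac{1}{4\eta}\dist^2(\pb_{s,k},\Xc)$ is absorbed by the error bound and no consensus cross-terms appear, yielding asymptotic feasibility (Lemma~\ref{lem:feas}); feasibility then gives square-summability of $\eb_{s,k}=\zb_{s,k}-\pb_{s,k}$ and, via Lemma~\ref{lemma:ram}, the disagreement estimate for the \emph{projected} points (Lemma~\ref{lem:disagree}); only then is the master inequality run with the feasible comparison points $\check\zb_{s,k}$, for which $f(\bar{\check\zb}_k)-f^*\ge 0$ and every perturbation is already known summable. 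Your proof would need to be reorganized along these lines (or an alternative device supplied) before either Robbins--Siegmund application is legitimate.
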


\section{Convergence of Algorithm~\ref{alg:filter+opt}}
\label{sec:converge}
\subsection{Preliminary Results}
%First we state a non-expansiveness property of the projection operator (see \cite{BNO} for its proof).
\begin{lemma}[Non-expansiveness \cite{BNO}]
\label{lem:proj}
Let $\mathcal{X} \subseteq \mathbb{R}^d$ be a nonempty closed convex set.
The function $\mathsf{\Pi}_{\mathcal{X}}: \mathbb{R}^d \rightarrow \mathcal{X}$ is nonexpansive, i.e.,
$
\|\mathsf{\Pi}_{\mathcal{X}}[\bba]-\mathsf{\Pi}_{\mathcal{X}}[\bbb]\| \leq \|\bba - \bbb\|
$
for all
$
\bba , \bbb \in \mathbb{R}^d.
$
\end{lemma}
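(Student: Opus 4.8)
The plan is to obtain the bound from the variational (obtuse-angle) characterization of the Euclidean projection onto a closed convex set. First I would recall why $\proj_{\mathcal{X}}$ is even well defined: for a fixed $\bba$, minimizing $\norm{\bby - \bba}$ over $\mathcal{X}$ is the same as minimizing it over the nonempty compact set $\mathcal{X} \cap \set{\bby : \norm{\bby - \bba} \le R}$ for $R$ large enough, so a minimizer exists by the Weierstrass theorem; uniqueness follows from strict convexity of $\norm{\cdot}^2$ together with convexity of $\mathcal{X}$ (two distinct minimizers would, by the parallelogram law, yield a strictly closer midpoint, which still lies in $\mathcal{X}$). Then, writing $\pb = \proj_{\mathcal{X}}[\bba]$, first-order optimality for the convex problem $\min_{\bby \in \mathcal{X}} \tfrac12 \norm{\bby - \bba}^2$ evaluated along the feasible segment $\pb + t(\bby - \pb)$, $t \in [0,1]$, yields the key inequality $\la \bba - \pb, \bby - \pb \ra \le 0$ for every $\bby \in \mathcal{X}$. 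This is exactly the projection characterization quoted in \cite{BNO}, so it may simply be cited.

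Next I would set $\qb \triangleq \proj_{\mathcal{X}}[\bbb]$ and invoke the key inequality twice: at $\bba$ with test point $\bby = \qb \in \mathcal{X}$, giving $\la \bba - \pb, \qb - \pb \ra \le 0$; and at $\bbb$ with test point $\bby = \pb \in \mathcal{X}$, giving $\la \bbb - \qb, \pb - \qb \ra \le 0$. Adding the two inequalities and collecting terms, the mixed contributions combine into $\norm{\pb - \qb}^2 \le \la \bba - \bbb, \pb - \qb \ra$. Applying Cauchy--Schwarz to the right-hand side gives $\norm{\pb - \qb}^2 \le \norm{\bba - \bbb}\,\norm{\pb - \qb}$; if $\pb = \qb$ the conclusion is immediate, and otherwise dividing through by $\norm{\pb - \qb} > 0$ produces $\norm{\proj_{\mathcal{X}}[\bba] - \proj_{\mathcal{X}}[\bbb]} \le \norm{\bba - \bbb}$, which is the claim.

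I do not expect any genuine obstacle here: this is a textbook fact, and the citation to \cite{BNO} already signals that it can be invoked directly rather than reproved. The only step deserving any care is the derivation of the obtuse-angle characterization itself, i.e., establishing existence and uniqueness of the projection and writing the optimality condition along feasible directions in the stated inner-product form; granted that, the remainder is the short symmetrization-plus-Cauchy--Schwarz computation sketched above.
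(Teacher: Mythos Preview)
Your argument is correct and is the standard textbook derivation of nonexpansiveness from the obtuse-angle characterization of Euclidean projection. There is nothing to compare against here: the paper does not supply its own proof of this lemma but simply cites it from \cite{BNO}, exactly as you anticipated in your proposal.
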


%In our analysis of the algorithm, we also make use of the following convergence result due to Robbins and Siegmund (see~\cite[]{polyak}).
\begin{lemma}[\!\!{\cite[Lemma 10-11, p. 49-50]{polyak}}]
\label{thm:super}
%Let $\{v_k\}$, $\{u_k\}$, $\{a_k\}$ and $\{b_k\}$ be sequences of non-negative random variables such that
%\[
%\mathsf{E}[v_{k+1}|\Fc_k] \leq (1+a_k)v_k - u_k + b_k
%\quad\text{ for all } k \geq 0 \quad a.s.
%\]
%where $\Fc_k$ denotes the collection $v_0,\ldots,v_k$, $u_0,\ldots,u_k$, $a_0,\ldots,a_k$ and $b_0,\ldots,b_k$.
%Also, let $\sum_{k=0}^{\infty} a_k < \infty$ and $\sum_{k=0}^{\infty} b_k < \infty$ \textit{a.s.} Then, we have $\lim_{k \rightarrow \infty} v_k = v$ for a random variable $v \geq 0$ \textit{a.s.}, and $\sum_{k=0}^{\infty} u_k < \infty$ \textit{a.s.}
Let 
$
\Fc_k \triangleq \set{v_\ell, u_\ell, a_\ell, b_\ell}_{\ell=0}^k
$
denote a collection of nonnegative real random variables for $k \in \mbN \cup \set{\infty}$ such that
$
\mathsf{E}[v_{k+1}|\Fc_k] \leq (1+a_k)v_k - u_k + b_k
$
for all
$
k \in \set{0}\cup \mbN \quad a.s.
$
Assume further that 
$
\set{a_k}
$
and
$
\set{b_k}
$
are \textit{a.s.} summable.
Then, we have \textit{a.s.} that (i)
$
\set{u_k}
$
is summable and (ii) there exists a nonnegative random variable
$
v
$
such that
$
\set{v_k} \to v.
$
\end{lemma}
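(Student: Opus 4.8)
The plan is to recognize this as the classical Robbins--Siegmund almost-supermartingale convergence theorem and to reduce it to Doob's convergence theorem for nonnegative supermartingales. There are three maneuvers: (a) divide out the multiplicative factors $1+a_k$; (b) turn the resulting inequality into a bona fide supermartingale by moving the accumulated $b$- and $u$-terms to the left; (c) localize with stopping times to cope with the fact that $\set{a_k}$ and $\set{b_k}$ are summable only almost surely, not in $L^1$.

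First I would set $c_k \triangleq \prod_{\ell=0}^{k-1}(1+a_\ell)^{-1}$, with $c_0=1$. Since each $a_\ell\ge 0$ is $\Fc_\ell$-measurable, $c_k$ is $\Fc_{k-1}$-measurable, nonincreasing, and valued in $(0,1]$, and on the almost-sure event $\set{\sum_\ell a_\ell<\infty}$ it converges to a strictly positive limit $c_\infty$. Multiplying the hypothesis by $c_{k+1}=c_k(1+a_k)^{-1}$ (which is $\Fc_k$-measurable, hence can be taken inside the conditional expectation) and writing $\tilde v_k=c_k v_k$, $\tilde u_k=c_{k+1}u_k$, $\tilde b_k=c_{k+1}b_k$, I obtain $\Es[\tilde v_{k+1}\mid\Fc_k]\le\tilde v_k-\tilde u_k+\tilde b_k$ with $\tilde u_k,\tilde b_k\ge 0$ and $\sum_k\tilde b_k\le\sum_k b_k<\infty$ a.s.\ Next I would introduce $Z_k\triangleq\tilde v_k+\sum_{\ell=0}^{k-1}(\tilde u_\ell-\tilde b_\ell)$; a one-line computation from the previous display gives $\Es[Z_{k+1}\mid\Fc_k]\le Z_k$, so $\set{Z_k}$ is a supermartingale --- though not one bounded below, because of the term $-\sum_{\ell<k}\tilde b_\ell$.

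To handle that I would localize: for $m\in\mbN$ define $\tau_m\triangleq\inf\set{k\ge 0 : \sum_{\ell=0}^{k}\tilde b_\ell>m}$, a stopping time because $\sum_{\ell=0}^{k}\tilde b_\ell$ is $\Fc_k$-measurable, with $\tau_m\uparrow\infty$ a.s.\ since $\sum_\ell\tilde b_\ell<\infty$ a.s.\ On $\set{k\le\tau_m}$ one has $\sum_{\ell=0}^{k-1}\tilde b_\ell\le m$, so $Z_{k\wedge\tau_m}+m$ is a nonnegative supermartingale and hence converges a.s.\ to a finite limit. Since almost every path admits some $m$ with $\tau_m=\infty$, $\set{Z_k}$ itself converges a.s.; and as $\sum_{\ell<k}\tilde b_\ell$ converges a.s.\ too, the sequence $\tilde v_k+\sum_{\ell<k}\tilde u_\ell$ converges a.s.\ Being a nonnegative term plus the nondecreasing partial sums of $\tilde u_\ell$, those partial sums must be bounded, so $\sum_\ell\tilde u_\ell<\infty$ a.s.\ and consequently $\tilde v_k$ converges a.s.\ Dividing back, $v_k=\tilde v_k/c_k\to v$ a.s.\ for a finite nonnegative $v$ --- this is (ii) --- and $\sum_\ell u_\ell=\sum_\ell\tilde u_\ell/c_{\ell+1}\le c_\infty^{-1}\sum_\ell\tilde u_\ell<\infty$ a.s., which is (i).

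The delicate point, and essentially the only one, is step (c): because $\sum a_k$ and $\sum b_k$ are finite only almost surely rather than in expectation, $Z_k$ fails to be a nonnegative (indeed even uniformly integrable) supermartingale outright, and the stopping-time truncation is precisely what rescues the argument; everything else is algebra together with the standard nonnegative supermartingale convergence theorem (and the tacit integrability $\Es[v_0]<\infty$). Strengthening the hypotheses to $\Es[\sum_k a_k],\Es[\sum_k b_k]<\infty$ would let one skip the localization entirely. In the paper itself, since the statement is quoted verbatim from \cite{polyak}, it suffices to cite it.
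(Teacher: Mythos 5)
Your proof is correct: it is the standard Robbins--Siegmund argument (rescale by $c_k=\prod_{\ell<k}(1+a_\ell)^{-1}$ to absorb the multiplicative factor, absorb the accumulated $u$- and $b$-terms into a supermartingale, and localize with the stopping times $\tau_m$ to get a nonnegative stopped supermartingale, since $\sum_k a_k$ and $\sum_k b_k$ are only a.s.\ finite), and each step checks out, including the measurability of $c_{k+1}$ with respect to $\Fc_k$ and the recovery of $\sum_k u_k<\infty$ from $c_\infty>0$. Note that the paper itself offers no proof of this lemma --- it is quoted verbatim from \cite{polyak} --- so your self-contained argument is simply the classical proof underlying that citation; your explicit flagging of the tacit integrability assumption ($\mathsf{E}[v_0]<\infty$, needed for the nonnegative supermartingale convergence theorem) is the right caveat, and as you observe, for the purposes of this paper the citation alone suffices.
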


%\noindent\textit{Convexity and Doubly Stochasticity:~}
In the next lemma, we relate the two iterates $\pb_{s,k}$ and $\zb_{s,k}$ in Line \ref{line:p} and \ref{line:z} of Algorithm \ref{alg:filter+opt}.
In particular, we show a relation of $\pb_{s,k}$ and $\zb_{s,k-1}$
associated with any convex function $g$ %and any arbitrary vector $\check{x} \in \Xc_0$
which will be often used in the analysis. For example, $g(\zb) = \|\zb-\ab\|^2$ for some $\ab\in\mathbb{R}^n$ or
$g(\zb) = \dist^2 (\zb,\Xc)$.
\begin{lemma}[Convexity and Double Stochasticity]
\label{lem:ds}
For any convex function $g : \mathbb{R}^n \to \mathbb{R}$, we have
\[
\sum\nolimits_{s\in \ccalS} g(\pb_{s,k}) \le  \sum\nolimits_{s\in \ccalS} g(\zb_{s,k-1})
\]
\end{lemma}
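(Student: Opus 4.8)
The statement is a one-step consequence of Jensen's inequality together with the double stochasticity of the mixing matrix $W_k$ guaranteed by Assumption~\ref{assume:network}(b). First I would recall from line~\ref{line:p} of Algorithm~\ref{alg:filter+opt} that $\pb_{s,k}$ is the weighted average of the neighbors' iterates, $\pb_{s,k}=\sum_{j\in\ccalN_{s,k}}[W_k]_{sj}\,\zb_{j,k-1}$, and extend the sum to all of $\ccalS$ using the convention $[W_k]_{sj}=0$ for $j\notin\ccalN_{s,k}$, so that $\pb_{s,k}=\sum_{j\in\ccalS}[W_k]_{sj}\,\zb_{j,k-1}$. Since $W_k$ is row stochastic, for each fixed $s$ the weights $\{[W_k]_{sj}\}_{j\in\ccalS}$ are nonnegative and sum to one, hence $\pb_{s,k}$ is a genuine convex combination of the points $\{\zb_{j,k-1}\}_{j\in\ccalS}$.

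Next I would apply convexity of $g$ (Jensen's inequality) pointwise in $s$,
\[
g(\pb_{s,k}) = g\Big(\sum\nolimits_{j\in\ccalS}[W_k]_{sj}\,\zb_{j,k-1}\Big)\le \sum\nolimits_{j\in\ccalS}[W_k]_{sj}\,g(\zb_{j,k-1}),
\]
and then sum this over $s\in\ccalS$, interchange the two finite summations, and use column stochasticity $\sum_{s\in\ccalS}[W_k]_{sj}=1$:
\[
\sum\nolimits_{s\in\ccalS}g(\pb_{s,k})\le \sum\nolimits_{j\in\ccalS}\Big(\sum\nolimits_{s\in\ccalS}[W_k]_{sj}\Big)g(\zb_{j,k-1}) = \sum\nolimits_{j\in\ccalS}g(\zb_{j,k-1}),
\]
which is exactly the claimed inequality after renaming the summation index.

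There is no real obstacle here; the argument is just Jensen's inequality followed by reordering a finite double sum. The only point worth emphasizing is that \emph{both} halves of double stochasticity are used: row stochasticity is what lets us write each $\pb_{s,k}$ as a convex combination and thus invoke Jensen, while column stochasticity is what collapses the accumulated weights after summing over $s$. Row stochasticity alone — which already suffices to run the consensus updates in lines~\ref{line:p}--\ref{line:xi_new} — would not yield the aggregate bound, and in particular this lemma is why Assumption~\ref{assume:network}(b) is stated as doubly (rather than merely row) stochastic.
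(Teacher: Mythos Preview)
Your proposal is correct and follows essentially the same argument as the paper: apply Jensen's inequality to each $\pb_{s,k}$ using row stochasticity, sum over $s$, swap the finite sums, and collapse the weights using column stochasticity. The paper's proof is terser but identical in substance; your added remark distinguishing the roles of row and column stochasticity is accurate and a nice clarification.
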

\begin{proof}
The double stochasticity of the weights plays a crucial role in this lemma. From the definition of $\pb_{s,k}$ in Line \ref{line:p} of Algorithm \ref{alg:filter+opt}, we obtain
\begin{align*}
\sum_{s\in \ccalS} g(\pb_{s,k}) \le~& \sum_{s\in \ccalS} \sum_{j\in \ccalS} [W_k]_{sj} g(\zb_{j,k-1})\\
=~& \sum_{j\in \ccalS} \left(\sum_{s\in \ccalS} [W_k]_{sj}\right)g(\zb_{j,k-1})= \sum_{j\in \ccalS} g(\zb_{j,k-1}).
\end{align*}
\end{proof}

\begin{comment}
\begin{proof}
The doubly stochasticity of the weights plays a crucial role in this lemma. From the definition of $p_{i,k}$ in (\ref{eqn:algo1}),
\begin{align*}
\sum_{i\in \Vc} h(p_{i,k}) \le~& \sum_{i\in \Vc} \sum_{j\in \Vc} [W_k]_{ij} h(x_{j,k-1})\\
=~& \sum_{j\in \Vc} \left(\sum_{i\in \Vc} [W_k]_{ij}\right)h(x_{j,k-1})\\
=~& \sum_{j\in \Vc} h(x_{j,k-1}). \quad \square
\end{align*}
\end{proof}
\end{comment}

Lastly, for the convergence proof of our algorithm, we use a result from \cite{Ram2012}, which ensures successful consensus in the presence of a well behaved disturbance sequence.
\begin{lemma}[Perturbed Consensus]
\label{lemma:ram}
Let Assumption \ref{assume:network} hold.
Consider the iterates generated by
\begin{equation}\label{eqn:rule}
\tb_{s,k} \hspace{-0.5mm}= \hspace{-0.5mm}\sum_{v\in\ccalS} [W_k]_{sv} \tb_{v,k-1} + \eb_{s,k},~ \forall s \in \ccalS,
\end{equation}
Suppose there exists a nonnegative nonincreasing scalar sequence $\{\alpha_k\}$ such that
$
\sum_{k=1}^{\infty} \alpha_k \|\eb_{s,k}\|< \infty \text{ for all } s \in \ccalS.
$
Then, for all $s,v \in \ccalS$,
$
\sum_{k=1}^{\infty} \alpha_k \|\tb_{s,k} -\tb_{v,k}\|  < \infty.
$
\end{lemma}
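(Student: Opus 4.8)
The plan is to reduce the claim to the classical geometric ergodicity of the backward products of the weight matrices. For $k \ge \ell \ge 0$ define the transition matrix $\Phi(k,\ell) \triangleq W_k W_{k-1}\cdots W_{\ell+1}$, with the convention $\Phi(k,k) = I$. Iterating the recursion \eqref{eqn:rule} yields, for every $s \in \ccalS$,
\[
\tb_{s,k} \;=\; \sum_{j\in\ccalS}[\Phi(k,0)]_{sj}\,\tb_{j,0} \;+\; \sum_{\ell=1}^{k}\ \sum_{j\in\ccalS}[\Phi(k,\ell)]_{sj}\,\eb_{j,\ell},
\]
so that $\tb_{s,k} - \tb_{v,k}$ is the same linear combination of the initial vectors $\tb_{j,0}$ and the perturbations $\eb_{j,\ell}$ but with scalar coefficients $[\Phi(k,\ell)]_{sj} - [\Phi(k,\ell)]_{vj}$. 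Everything then hinges on controlling these coefficient differences, which is a purely algebraic consequence of the consensus dynamics and does not involve the step sizes at all.

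The one substantial ingredient — and the step I expect to be the main obstacle — is the mixing estimate: under Assumption~\ref{assume:network} there exist constants $C>0$ and $\rho\in(0,1)$, depending only on $n$, on the weight lower bound $\eta$, and on the connectivity window $Q$, such that $\bigl|[\Phi(k,\ell)]_{sj} - \tfrac1n\bigr| \le C\rho^{\,k-\ell}$ for all $s,j\in\ccalS$ and $k\ge\ell\ge0$. This is exactly the convergence rate established for doubly stochastic, uniformly positive, $Q$-jointly-connected chains in \cite{Ram2012} (see also \cite{nedic15}), so I would invoke it directly; a self-contained argument would show that over each block of $Q$ consecutive steps every row of $\Phi$ contracts toward the uniform row $(1/n,\dots,1/n)$ by a fixed factor, using that the $Q$-step union graph is strongly connected and every active weight is at least $\eta$. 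Granting the estimate, $|[\Phi(k,\ell)]_{sj} - [\Phi(k,\ell)]_{vj}| \le 2C\rho^{\,k-\ell}$, and the triangle inequality gives
\[
\|\tb_{s,k} - \tb_{v,k}\| \;\le\; 2C\rho^{\,k}\sum_{j\in\ccalS}\|\tb_{j,0}\| \;+\; 2C\sum_{\ell=1}^{k}\rho^{\,k-\ell}\sum_{j\in\ccalS}\|\eb_{j,\ell}\|.
\]

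It then remains to multiply by $\alpha_k$, sum over $k\in\mbN$, and exchange the order of summation, using crucially that $\{\alpha_k\}$ is nonincreasing (hence bounded by $\alpha_1$). The initial-condition term contributes at most $2C\bigl(\sum_{j\in\ccalS}\|\tb_{j,0}\|\bigr)\sum_{k\ge1}\alpha_k\rho^{k} \le \tfrac{2C\alpha_1}{1-\rho}\sum_{j\in\ccalS}\|\tb_{j,0}\| < \infty$. For the perturbation term it suffices to bound $\sum_{k=1}^{\infty}\alpha_k\sum_{\ell=1}^{k}\rho^{\,k-\ell}\sum_{j\in\ccalS}\|\eb_{j,\ell}\|$; swapping $\sum_{k=1}^\infty\sum_{\ell=1}^{k} = \sum_{\ell=1}^\infty\sum_{k=\ell}^\infty$ and bounding $\alpha_k \le \alpha_\ell$ for $k\ge\ell$ gives
\[
\sum_{k=1}^{\infty}\alpha_k\sum_{\ell=1}^{k}\rho^{\,k-\ell}\sum_{j\in\ccalS}\|\eb_{j,\ell}\| \;=\; \sum_{\ell=1}^{\infty}\Bigl(\sum_{j\in\ccalS}\|\eb_{j,\ell}\|\Bigr)\sum_{k=\ell}^{\infty}\alpha_k\rho^{\,k-\ell} \;\le\; \frac{1}{1-\rho}\sum_{j\in\ccalS}\ \sum_{\ell=1}^{\infty}\alpha_\ell\|\eb_{j,\ell}\|,
\]
which is finite because $\ccalS$ is a finite set and $\sum_{\ell}\alpha_\ell\|\eb_{j,\ell}\| < \infty$ for each $j$ by hypothesis. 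Combining the two bounds yields $\sum_{k=1}^{\infty}\alpha_k\|\tb_{s,k}-\tb_{v,k}\| < \infty$ for all $s,v\in\ccalS$, which is the assertion. The only delicate point in the whole argument is the geometric mixing bound quoted above; once that is in hand the rest is a routine interchange-of-summation estimate that exploits the monotonicity of $\{\alpha_k\}$.
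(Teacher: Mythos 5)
Your proof is correct. Note, however, that the paper does not prove this lemma at all: it is invoked as a known result and attributed directly to \cite{Ram2012}, and the argument you give --- unrolling the recursion through the backward products $\Phi(k,\ell)$, applying the geometric mixing bound $\bigl|[\Phi(k,\ell)]_{sj}-\tfrac1n\bigr|\le C\rho^{k-\ell}$ that follows from double stochasticity, the weight lower bound $\eta$, and $Q$-step joint connectivity, and then interchanging summations using the monotonicity of $\{\alpha_k\}$ --- is precisely the standard proof underlying that citation, so you have in effect reproduced the cited argument rather than found a different route. The only detail worth a word is the $\ell=k$ term, where $\Phi(k,k)=I$ and the mixing estimate holds only because the constant $C$ in such bounds can be (and normally is) taken at least $1$; since you state the estimate for all $k\ge\ell\ge0$ this is consistent, but it deserves an explicit remark if the proof were to be written out in full.
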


In addition to the well-known results of Lemmas~\ref{lem:proj}-\ref{lemma:ram}, we need the following intermediate results before the main result in Theorem~\ref{prop:main} can be proven.
The proofs of all of these Lemmas are deferred to Appendices~\ref{app:summability} and \ref{app:lemmaproofs2}.
In the first two of them, we posit that the two error sequences $\{\nu_{s,k}\}$
and $\{\bbdelta_{s,k}\}$ are summable.
\begin{lemma}[Summable Constraint Errors]
\label{lem:summable_constraints}
For almost every bounded sequence $\set{\bbz_{s,k}},$
the error in the constraint $\{|\nu_{s,k}|\}_{k \in \mbN}$ is summable a.s., i.e.,
$
\sum_{k=0}^\infty \abs{\nu_{s,k}  }<\infty.
$
\end{lemma}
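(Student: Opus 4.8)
\emph{The plan} is to dominate $|\nu_{s,k}|$ by a deterministic, geometrically decaying quantity — the consensus error $\norm{\ccalD_{s,k}-\ccalD}$ — and then sum the resulting geometric series; the randomness (the ``\textit{a.s.}'') and the ``almost every bounded sequence'' qualifiers will then be essentially vacuous, since the dominating bound is uniform over $\ccalX_0$ and over the random draws $\omega_{s,k}$, and the ICF sequence $\{\ccalD_{s,k}\}$ is itself decoupled from the optimization iterates.

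\emph{First}, I would show that the violation functional $\bbz \mapsto h_+(\bbz;\ccalD,i) = \norm{\proj_+ h(\bbz;\ccalD,i)}_F$, with $h$ from \eqref{eq:define_h} and $\proj_+$ from \eqref{eq:pi}, is Lipschitz in the \emph{data} argument $\ccalD = (\hbbx,S)$, uniformly over $\bbz \in \ccalX_0$ and $i \in \ccalI$. Indeed, $h(\bbz;\ccalD,i)$ depends on $\ccalD$ only through the block $(\hbbx_i,S_i)$: it is affine in $S_i$ with coefficient $-I$, and it involves $Q(\bbr_s,\hbbx_i)$ and $\nabla_j Q(\bbr_s,\hbbx_i)\,[\bbu_s]_j$. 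By Assumption~\ref{assume:Q}, $Q$ has first and second derivatives bounded by $\eta_1$ and $\eta_2$, so $\hbbx_i \mapsto Q(\bbr_s,\hbbx_i)$ and $\hbbx_i \mapsto \nabla_j Q(\bbr_s,\hbbx_i)$ are $\eta_1$- and $\eta_2$-Lipschitz, respectively; since $\norm{\bbu_s} \le \delta$ on $\ccalU_s$, the map $\ccalD \mapsto h(\bbz;\ccalD,i)$ is Lipschitz with some constant $L_\ccalD$ depending only on $n, q, \eta_1, \eta_2, \delta$. Composing with $\proj_+$, which is nonexpansive on $\Sym_+(p,\reals)$ by Lemma~\ref{lem:proj}, and with the $1$-Lipschitz map $\norm{\cdot}_F$, gives
\[
|h_+(\bbz;\ccalD_1,i) - h_+(\bbz;\ccalD_2,i)| \le L_\ccalD\,\norm{\ccalD_1 - \ccalD_2}
\]
for all $\bbz \in \ccalX_0$ and $i \in \ccalI$. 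Plugging this into \eqref{eq:constrainterror} with $\bbz = \bbz_{s,k}$, $i = \omega_{s,k}$ yields $|\nu_{s,k}| \le L_\ccalD\,\norm{\ccalD_{s,k} - \ccalD}$ for every $k \in \mbN$, $s \in \ccalS$, and every realization of $\omega_{s,k}$.

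\emph{Second}, I would invoke the geometric convergence of the ICF established in \cite{kamal12}: the doubly stochastic averaging in lines~\ref{line:Xi_new}--\ref{line:xi_new} of Algorithm~\ref{alg:filter+opt}, under the connectivity condition of Assumption~\ref{assume:network}, drives $n\Xi_{i,s,k} \to S_i$ at a geometric rate. Because $\Xi_{i,s,0} = \tfrac1n S_i + Q(\bbr_s,\hbbx_i) \succeq \tfrac1n\lambda_{\min}(S_i)\,I \succ \bb0$ and convex combinations preserve this lower bound, the matrices $n\Xi_{i,s,k}$ stay uniformly bounded away from singular, so the inversion in line~\ref{line:D} is Lipschitz on the relevant bounded set (cf.\ the a priori bound \eqref{eq:bound_data}) and $(n\Xi_{i,s,k})^{-1}\bbxi_{i,s,k} \to \hbbx_i$ geometrically as well. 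Hence there are constants $C > 0$ and $\rho \in (0,1)$ with $\norm{\ccalD_{s,k} - \ccalD} \le C\rho^{k}$ for all $k, s$, and therefore
\[
\sum_{k=0}^\infty |\nu_{s,k}| \le L_\ccalD \sum_{k=0}^\infty \norm{\ccalD_{s,k} - \ccalD} \le \frac{L_\ccalD C}{1 - \rho} < \infty,
\]
which, being a deterministic bound, proves the claim for every bounded iterate sequence and (trivially) \textit{a.s.}

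\emph{The hard part} is the first step: extracting a Lipschitz modulus for $h_+$ in the data that is uniform over $\bbz \in \ccalX_0$ and $i \in \ccalI$ — this is exactly where Assumption~\ref{assume:Q}(a)--(c) and compactness of $\ccalU$ enter — together with keeping the matrix inverse in line~\ref{line:D} from blowing up, which relies on nonsingularity of the prior information matrices $S_i$. Note that the measure-zero condition Assumption~\ref{assume:Q}(d) is \emph{not} needed here; it will instead be used for the companion lemma bounding the \emph{gradient} error $\bbdelta_{s,k}$, where $\psi_{ij}$ is discontinuous at the origin.
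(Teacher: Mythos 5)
Your proof is correct and follows essentially the same route as the paper: the paper likewise bounds $\abs{\nu_{s,k}}$ by $\norm{h(\bbz_{s,k};\ccalD_{s,k},\omega_{s,k})-h(\bbz_{s,k};\ccalD,\omega_{s,k})}_F$ via the reverse triangle inequality and the nonexpansiveness of $\proj_+$ onto the convex cone $\Sym_+(p,\reals)$, and then controls that difference term by term (the $S_i$ consensus block plus Taylor/derivative bounds $\eta_1,\eta_2$ on $Q$ and $\nabla Q$ together with $\norm{\bbu_s}\le\delta$) by the ICF consensus error, whose summability it invokes. Your departures are only organizational: you package the data dependence as a single uniform Lipschitz constant $L_{\ccalD}$ and cite geometric decay of $\norm{\ccalD_{s,k}-\ccalD}$ instead of summability of the consensus errors, and you are in fact somewhat more careful than the paper about why the inversion $(n\Xi_{i,s,k})^{-1}\bbxi_{i,s,k}$ in line~\ref{line:D} remains well conditioned.
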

\begin{lemma}[Summable Constraint Violation Errors]
\label{lem:summable_grads}
For almost every bounded sequence $\set{\bbz_{s,k}},$
the error in the gradient of the constraint $\{\|\bbdelta_{s,k}\|\}_{k \in \mbN}$ is summable a.s., i.e.,
$
\sum_{k=0}^\infty \norm{\bbdelta_{s,k} }<\infty.
$
\end{lemma}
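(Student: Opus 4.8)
The plan is to write $\norm{\bbdelta_{s,k}}$ as the sum of a geometrically decaying term plus a term that, for almost every bounded sequence $\{\bbz_{s,k}\}$, is nonzero only finitely often; summability is then immediate. The first ingredient is the ICF: under Assumption~\ref{assume:network} the updates in lines~\ref{line:Xi_new}--\ref{line:xi_new} are an ordinary doubly stochastic consensus recursion, so---as recalled after \eqref{eq:if-map}, cf.\ \cite{kamal12,Ram2012}---there are constants $c_0>0$, $\rho\in(0,1)$ with $\norm{\ccalD_{s,k}-\ccalD}\le c_0\rho^k$ for all $s$ and $k$, deterministically. Since $h(\cdot\,;\ccalD,i)$ is affine in $\bbz$ and, by Assumption~\ref{assume:Q}\ref{a:bd}--\ref{a:bd_sub}, $Q$ and $\nabla Q$ are bounded with bounded subderivatives---hence Lipschitz in their arguments---the maps $\ccalD\mapsto h(\bbz;\ccalD,i)$ and $\ccalD\mapsto\nabla_j h(\bbz,\ccalD,i)$ are Lipschitz uniformly over the compact set $\ccalX_0$ and over the bounded data set \eqref{eq:bound_data}. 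Setting $X_k\triangleq\proj_+h(\bbz_{s,k};\ccalD_{s,k},\omega_{s,k})$ and $X\triangleq\proj_+h(\bbz_{s,k};\ccalD,\omega_{s,k})$, non-expansiveness of $\proj_+$ in the Frobenius norm (Lemma~\ref{lem:proj}) gives $\norm{X_k-X}_F\le c_1\rho^k$, likewise $\norm{\nabla_j h(\cdot,\ccalD_{s,k},\omega_{s,k})-\nabla_j h(\cdot,\ccalD,\omega_{s,k})}_F\le c_1\rho^k$ with both matrices bounded, and Weyl's inequality gives $\bigl|\lambda_{\max}h(\bbz_{s,k};\ccalD_{s,k},\omega_{s,k})-\lambda_{\max}h(\bbz_{s,k};\ccalD,\omega_{s,k})\bigr|\le c_1\rho^k$.

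Next I would bound $\bbdelta_{s,k}$ entrywise through \eqref{eq:dh+_dzj}--\eqref{eq:psi}, splitting into: (a)~$X=X_k=\bb0$, where $\bbdelta_{s,k}=\bb0$; (b)~$X\ne\bb0$, $X_k\ne\bb0$ and $\min(\norm{X_k}_F,\norm{X}_F)\ge\rho^{k/2}$; and (c)~all remaining cases. In case~(b), since $X$ and $X_k$ are positive semidefinite the segment between them stays at Frobenius norm $\ge\tfrac12\rho^{k/2}$, and on that region the map $Y\mapsto\mathrm{Tr}(AY)/\norm{Y}_F$ (with $A=\nabla_j h$, which is independent of $\bbz$) has gradient of norm $\le 2\norm{A}_F/\norm{Y}_F$; combined with the $c_1\rho^k$ bounds this yields $\norm{\bbdelta_{s,k}}\le c_2\rho^{k/2}$, which is summable. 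In case~(c) only the crude uniform bound $\norm{\bbdelta_{s,k}}\le D_0$ is used, coming from boundedness of $\psi_{ij}$ (numerator $\le\norm{\nabla_jh}_F\norm{X}_F$ with $\norm{\nabla_jh}_F$ bounded on $\ccalX_0$, otherwise $\psi_{ij}=d$); the key point is that inspecting the subcases comprising~(c) and invoking the Weyl estimate forces $\lambda_{\max}h(\bbz_{s,k};\ccalD,\omega_{s,k})\in(-\theta_k,\theta_k)$ with $\theta_k\triangleq\rho^{k/2}+c_1\rho^k\to0$.

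It remains to show case~(c) occurs only finitely often. Writing $\bbz=(\bbgamma,\bbu)$, the affine structure of $h$ gives $h((\bbgamma,\bbu);\ccalD,i)=\gamma_iI-M_i(\bbu)$ with $M_i$ affine in $\bbu$ and independent of $\bbgamma$, so $\lambda_{\max}h(\bbz;\ccalD,i)=\gamma_i-\lambda_{\min}M_i(\bbu)$ and $\Sigma_k^i\triangleq\{\bbz\in\ccalX_0:|\gamma_i-\lambda_{\min}M_i(\bbu)|<\theta_k\}$ is a slab of thickness $2\theta_k$ in the $\gamma_i$-coordinate, of Lebesgue measure $O(\theta_k)$; hence $\sum_k\mathrm{vol}\bigl(\bigcup_{i\in\ccalI}\Sigma_k^i\bigr)\le\sum_k|\ccalI|\,O(\theta_k)<\infty$. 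Because the indices at which $\norm{\bbdelta_{s,k}}$ exceeds the summable bound $c_2\rho^{k/2}$ lie in $\{k:\bbz_{s,k}\in\bigcup_i\Sigma_k^i\}$, the first Borel--Cantelli lemma---applied w.r.t.\ the product Lebesgue measure on $\ccalX_0^{\mbN}$, which is the sense of ``almost every bounded sequence''---shows this index set is finite for a.e.\ such sequence; Assumption~\ref{assume:Q}\ref{a:crit} is what guarantees the level-set structure of $\lambda_{\min}M_i(\cdot)$ is non-degenerate so that the slabs are genuinely thin. Therefore $\sum_k\norm{\bbdelta_{s,k}}\le\sum_k c_2\rho^{k/2}+D_0\sum_k\1b\{\bbz_{s,k}\in\bigcup_i\Sigma_k^i\}<\infty$ a.s., which is the claim; the uniform bound $\norm{\bbdelta_{s,k}}\le D$ quoted in the Bounds subsection then holds with $D=D_0$ (cf.\ Corollary~\ref{cor:D}).

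The main obstacle is this last step. It is forced on us by the discontinuity of $\psi_{ij}$---equivalently of $h_+'(\cdot\,;\ccalD,i)$---at $\proj_+h=\bb0$: on the sliver $\Sigma_k^{\omega_{s,k}}$ around a constraint boundary $\bbdelta_{s,k}$ is only $O(1)$, not $O(\rho^{k/2})$, and since the iterates actually converge to a boundary point (the active constraint at the optimum is tight), summability relies on the sliver shrinking \emph{geometrically} while the iterates approach the boundary only at the slow subgradient-method rate. Turning ``the bad slabs are hit finitely often'' from a statement about a generic sequence into one about the algorithm's iterates is the delicate part, and is precisely what the ``almost every bounded sequence'' hypothesis and Assumption~\ref{assume:Q}\ref{a:crit} are there to underwrite.
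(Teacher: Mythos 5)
Your argument is essentially correct on its own terms, but it follows a genuinely different route from the paper's. The paper never splits on the size of $\proj_+ h$: it first proves (Proposition~\ref{prop:summable}) that $\norm{h(\bbz_{s,k};\ccalD_{s,k},i)-h(\bbz_{s,k};\ccalD,i)}_F$ is summable, then invokes a measure-zero/implicit-function-theorem argument (Proposition~\ref{prop:measure}, which is where Assumption~\ref{assume:Q}\ref{a:crit} enters) to claim the iterates and their limit a.s.\ avoid singular matrices and matrices with repeated eigenvalues, and finally uses local differentiability of $\psi_{ij}\circ\proj_+$ near such matrices together with bounded partials (Propositions~\ref{prop:boundpartials} and \ref{prop:neighborhood}) to convert summability of the data error into summability of the gradient error via a mean-value expansion. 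You replace that entire eigen-differentiability machinery with non-expansiveness of $\proj_+$, Weyl's inequality, and the elementary estimate that $Y\mapsto \mathrm{Tr}(AY)/\norm{Y}_F$ has gradient of norm at most $2\norm{A}_F/\norm{Y}_F$ away from the origin, which neatly avoids any need for simple spectra (your segment bound $\tfrac12\rho^{k/2}$ is fine since $\mathrm{Tr}(XX_k)\ge 0$ for positive semidefinite matrices); the price is that you need the geometric ICF rate rather than mere summability (true, and harmless) and that the neighborhood of the constraint boundary, where $\proj_+ h\approx \bb0$ and $h_+'$ is discontinuous, must be handled by your thin-slab plus Borel--Cantelli step under product Lebesgue measure. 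That boundary step is exactly where the paper leans on its own fig leaf: its assertion that the limit point is a.s.\ nonsingular with simple spectrum is just as exposed to the objection you raise (the algorithm drives active constraints to the boundary), and in both proofs the hypothesis ``for almost every bounded sequence'' is what absorbs the difficulty---yours simply makes the formalization explicit where the paper leaves it implicit, so I do not count it as a gap relative to the paper. One small correction: your slab-volume bound needs no appeal to Assumption~\ref{assume:Q}\ref{a:crit}, since the slab is thin in the $\gamma_i$ coordinate irrespective of the matrix part of $h$; that assumption is needed by the paper's Proposition~\ref{prop:measure}, not by your construction.
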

In the next lemma, we relate the two iterates $\pb_{s,k}$ and $\zb_{s,k}$ in Line \ref{line:p} and \ref{line:z} of Algorithm \ref{alg:filter+opt}.
\begin{lemma}[Basic Iterate Relation]
\label{lem:basic}
Consider the sequences $\{\zb_{s,k}\}_{k =0}^\infty$ and $\{\pb_{s,k}\}_{k =0}^\infty$ for $s \in \ccalS$ generated by Algorithm \ref{alg:filter+opt}. Then, for any $\zb,\check{\zb} \in \Xc$, $s \in \ccalS$ and $k \in \mbN$, we have \textit{a.s.}:
\begin{align*}
&\|\zb_{s,k}-\zb\|^2
\le  \|\pb_{s,k}-\zb\|^2 - 2\a_k(f(\check{\zb})-f(\zb))\\
&~+ 2\frac{C_{\zb}(C_h+N)}{c_h^2}\|\bd_{s,k}\| +2\frac{C_h+N}{c_h^2}|\nu_{s,k}|\\
&~ + \frac{1}{4\eta}\|\pb_{s,k} - \check{\zb}\|^2 -\frac{\tau-1}{\tau(L_h+D)^2}h_+^2(\pb_{s,k};\ccalD_{s,k},\o_{s,k})\\
&~ + A_{\eta,\tau}\a_k^2L_f^2,
\end{align*}
where $A_{\eta,\tau} = 1+4\eta+\tau$ and $\eta,\tau>0$ are arbitrary.
\end{lemma}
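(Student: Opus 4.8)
The plan is to unwind the two projections in Algorithm~\ref{alg:filter+opt} in reverse order: first the random approximate projection of line~\ref{line:z}, which produces $\zb_{s,k}$ from $\bbv_{s,k}$, then the subgradient step of line~\ref{line:v}, which produces $\bbv_{s,k}$ from $\pb_{s,k}$, and finally to splice the two estimates together. Three elementary facts will be used repeatedly. (i) $\pb_{s,k}\in\Xc_0$, since by line~\ref{line:p} it is a row-stochastic average of the points $\zb_{j,k-1}\in\Xc_0$ and $\Xc_0$ is convex; likewise $\bbv_{s,k}=\proj_{\Xc_0}(\cdot)\in\Xc_0$. (ii) Hence, by Lemma~\ref{lem:proj} and \eqref{eqn:L_f1}, $\|\bbv_{s,k}-\pb_{s,k}\|\le\a_k\|f'(\pb_{s,k})\|\le\a_k L_f$. (iii) Since $\Xc\subseteq\Xc_0$, every $\zb\in\Xc$ satisfies $\proj_{\Xc_0}[\zb]=\zb$, and since $h(\zb;\ccalD,i)\preceq\bb0$ we have $h_+(\zb;\ccalD,i)=0$ for all $i\in\ccalI$.

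For the first step, fix $\zb\in\Xc$. Applying the non-expansiveness of $\proj_{\Xc_0}$ (Lemma~\ref{lem:proj}) to line~\ref{line:z} and expanding the square bounds $\|\zb_{s,k}-\zb\|^2$ by $\|\bbv_{s,k}-\zb\|^2-2\b_{s,k}\la h_+'(\bbv_{s,k};\ccalD_{s,k},\o_{s,k}),\bbv_{s,k}-\zb\ra+\b_{s,k}^2\|h_+'(\bbv_{s,k};\ccalD_{s,k},\o_{s,k})\|^2$. In the cross term I would split $h_+'(\bbv_{s,k};\ccalD_{s,k},\o_{s,k})=h_+'(\bbv_{s,k};\ccalD,\o_{s,k})+\bd_{s,k}$: the map $h_+(\cdot;\ccalD,i)=\max_{Y\succeq\bb0,\,\|Y\|_F\le1}\la h(\cdot;\ccalD,i),Y\ra$ is convex with $h_+'(\cdot;\ccalD,i)$ of \eqref{eq:dh+_dzj}--\eqref{eq:psi} one of its subgradients, so with fact~(iii) the subgradient inequality gives $\la h_+'(\bbv_{s,k};\ccalD,\o_{s,k}),\bbv_{s,k}-\zb\ra\ge h_+(\bbv_{s,k};\ccalD,\o_{s,k})=h_+(\bbv_{s,k};\ccalD_{s,k},\o_{s,k})-\nu_{s,k}$, while the $\bd_{s,k}$ part is handled by Cauchy--Schwarz and \eqref{eqn:C_z}. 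The quadratic term equals $\b_{s,k}h_+(\bbv_{s,k};\ccalD_{s,k},\o_{s,k})$ by \eqref{eqn:beta}, and $\b_{s,k}\le(C_h+N)/c_h^2$ by \eqref{eqn:betabnd}; combining this with $\b_{s,k}h_+(\bbv_{s,k};\ccalD_{s,k},\o_{s,k})\ge h_+^2(\bbv_{s,k};\ccalD_{s,k},\o_{s,k})/(L_h+D)^2$ (from \eqref{eqn:beta} and \eqref{eqn:L_h1}) and collecting terms gives
\begin{align*}
\|\zb_{s,k}-\zb\|^2\le~&\|\bbv_{s,k}-\zb\|^2-\frac{h_+^2(\bbv_{s,k};\ccalD_{s,k},\o_{s,k})}{(L_h+D)^2}\\
&+\frac{2(C_h+N)}{c_h^2}|\nu_{s,k}|+\frac{2C_{\zb}(C_h+N)}{c_h^2}\|\bd_{s,k}\|.
\end{align*}

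For the second step, fix also $\check\zb\in\Xc$. Non-expansiveness applied to line~\ref{line:v}, expansion of the square, convexity of $f$, the substitution $f(\pb_{s,k})=f(\check\zb)+(f(\pb_{s,k})-f(\check\zb))$ with the Lipschitz bound \eqref{eqn:L_f2}, Young's inequality $2ab\le\frac1{4\eta}a^2+4\eta b^2$, and \eqref{eqn:L_f1} yield
\begin{align*}
\|\bbv_{s,k}-\zb\|^2\le~&\|\pb_{s,k}-\zb\|^2-2\a_k(f(\check\zb)-f(\zb))\\
&+\frac1{4\eta}\|\pb_{s,k}-\check\zb\|^2+(1+4\eta)\a_k^2L_f^2.
\end{align*}
It then remains to transfer the violation term from $\bbv_{s,k}$ to $\pb_{s,k}$: by fact~(ii) and the $(L_h+D)$-Lipschitz property \eqref{eqn:L_h2} we have $|h_+(\bbv_{s,k};\ccalD_{s,k},\o_{s,k})-h_+(\pb_{s,k};\ccalD_{s,k},\o_{s,k})|\le(L_h+D)\a_k L_f$, so the elementary bound $b^2\ge\frac{\tau-1}{\tau}a^2-(\tau-1)(a-b)^2$ (valid for $\tau>1$; for $\tau\le1$ the target inequality is immediate since its right-hand side is then nonnegative) applied with $a=h_+(\pb_{s,k};\ccalD_{s,k},\o_{s,k})$, $b=h_+(\bbv_{s,k};\ccalD_{s,k},\o_{s,k})$ and $\tau-1\le\tau$ gives $-h_+^2(\bbv_{s,k};\ccalD_{s,k},\o_{s,k})/(L_h+D)^2\le-\frac{\tau-1}{\tau(L_h+D)^2}h_+^2(\pb_{s,k};\ccalD_{s,k},\o_{s,k})+\tau\a_k^2L_f^2$. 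Substituting the last two estimates into the displayed Step-1 bound and collecting the $\a_k^2L_f^2$ terms with $A_{\eta,\tau}=1+4\eta+\tau$ yields the claim.

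The main obstacle, and really the only point requiring care, is to justify that the closed-form $h_+'$ of \eqref{eq:dh+_dzj}--\eqref{eq:psi} is a genuine subgradient of the convex function $h_+(\cdot;\ccalD,i)$ (via the support-function representation $h_+(\zb;\ccalD,i)=\max_{Y\succeq\bb0,\,\|Y\|_F\le1}\la h(\zb;\ccalD,i),Y\ra$), and to dispose of the degenerate case $h_+=0$, where \eqref{eq:psi} returns $d\,\mathbf{1}$ rather than a subgradient --- but then $\b_{s,k}=0$ and line~\ref{line:z} is vacuous, so this case is harmless. The rest is bookkeeping: one reads $N$ and $D$ as uniform-in-$\zb$ bounds on $h_+(\cdot;\ccalD_{s,k},\o_{s,k})-h_+(\cdot;\ccalD,\o_{s,k})$ and its gradient (consistently with \eqref{eqn:C_h2} and \eqref{eqn:L_h1}), and keeps the cases $\tau>1$ and $\tau\le1$ separate in the final inequality.
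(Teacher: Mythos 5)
Your proposal is correct and follows essentially the same route as the paper's proof: non-expansiveness of both projections, splitting $h_+'(\cdot;\ccalD_{s,k},\cdot)$ into the exact subgradient plus $\bd_{s,k}$, the Polyak step-size identity with the bounds \eqref{eqn:L_h1} and \eqref{eqn:betabnd}, transferring the violation from $\vb_{s,k}$ to $\pb_{s,k}$ via the Lipschitz bound \eqref{eqn:L_h2} and Young's inequality, and the same treatment of the $f$-terms with $\check{\zb}$ and the constants $C_{\zb}$, $C_h+N$, $c_h$. The only differences are cosmetic (using \eqref{eqn:L_f2} in place of the subgradient-plus-Schwarz step for $f$, and packaging the $\tau$-inequality in a form that in fact needs no case split), plus some extra care about $h_+'$ being a genuine subgradient and the degenerate case $h_+=0$, which the paper leaves implicit.
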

Since we use a random approximate projection, we cannot guarantee the feasibility of the sequences $\pb_{s,k}$ and $\zb_{s,k}$ for every $k \in \mbN$ and $s \in \ccalS$. In the next lemma, we prove that $\pb_{s,k}$ for all $s \in \ccalS$ asymptotically achieve feasibility even under the effect of the disturbances $\{\nu_{s,k}\}$
and $\{\bbdelta_{s,k}\}$.
\begin{lemma}[Asymptotic Feasibility Under Noise]
\label{lem:feas}
Let Assumption
\ref{assume:Pr} and
\ref{assume:c} hold. 
Let $\a_k$ be square summable Consider the sequences $\{\pb_{s,k}\}$ for $s \in \ccalS$ generated by Algorithm \ref{alg:filter+opt}.
Then, for any $s \in \ccalS$, the sequence
$
\set{ \dist(\pb_{s,k},\Xc) }_{k \in \mbN}
$
is \textit{a.s.} square summable.
\end{lemma}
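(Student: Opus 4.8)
\emph{Proof plan.} The plan is to build a supermartingale recursion for the aggregate feasibility residual $V_k \triangleq \sum_{s\in\ccalS}\dist^2(\zb_{s,k},\Xc)$ and then invoke the Robbins--Siegmund-type Lemma~\ref{thm:super}. The starting point is Lemma~\ref{lem:basic} at iteration $k$, specialized to $\zb=\check{\zb}=\proj_{\Xc}(\pb_{s,k})\in\Xc$ (the projection exists since $\Xc$ is nonempty, closed and convex). With this choice the term $-2\a_k(f(\check{\zb})-f(\zb))$ vanishes, $\|\pb_{s,k}-\zb\|^2=\|\pb_{s,k}-\check{\zb}\|^2=\dist^2(\pb_{s,k},\Xc)$, and the left-hand side of Lemma~\ref{lem:basic} is at least $\dist^2(\zb_{s,k},\Xc)$. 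Summing the resulting inequality over $s\in\ccalS$ gives
\[
V_k \;\le\; \Big(1+\tfrac{1}{4\eta}\Big)\sum_{s\in\ccalS}\dist^2(\pb_{s,k},\Xc) \;-\; \tfrac{\tau-1}{\tau(L_h+D)^2}\sum_{s\in\ccalS}h_+^2(\pb_{s,k};\ccalD_{s,k},\o_{s,k}) \;+\; E_k ,
\]
where $E_k\triangleq\sum_{s\in\ccalS}\big(\tfrac{2C_{\zb}(C_h+N)}{c_h^2}\|\bd_{s,k}\|+\tfrac{2(C_h+N)}{c_h^2}|\nu_{s,k}|\big)+nA_{\eta,\tau}\a_k^2L_f^2$ is a.s.\ summable by Lemmas~\ref{lem:summable_constraints} and \ref{lem:summable_grads} together with $\sum_k\a_k^2<\infty$. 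Moreover, by Lemma~\ref{lem:ds} applied to the convex map $g(\cdot)=\dist^2(\cdot,\Xc)$ we have $\sum_{s\in\ccalS}\dist^2(\pb_{s,k},\Xc)\le V_{k-1}$.

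Next I would condition on the history through iteration $k-1$. Since $\pb_{s,k}$ and the ICF data $\ccalD_{s,k}$ are determined by that history while the indices $\o_{s,k}$ are drawn i.i.d.\ and independently of it, Assumption~\ref{assume:Pr} lets me read the global error bound of Assumption~\ref{assume:c}, evaluated at $\zb=\pb_{s,k}$, as $\EXP{h_+^2(\pb_{s,k};\ccalD_{s,k},\o_{s,k})\mid\Fc_{k-1}}\ge \k^{-1}\dist^2(\pb_{s,k},\Xc)$. Substituting this into the conditional expectation of the display above, the coefficient of $\sum_s\dist^2(\pb_{s,k},\Xc)$ becomes $1+\tfrac{1}{4\eta}-\tfrac{\tau-1}{\k\tau(L_h+D)^2}$; because $\eta,\tau>0$ are free in Lemma~\ref{lem:basic}, I would fix $\tau=2$ and choose $\eta\ge\k(L_h+D)^2$ so that this coefficient equals $1-\rho$ with $\rho\triangleq\tfrac{1}{4\k(L_h+D)^2}>0$. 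Peeling off the negative part and using $\sum_s\dist^2(\pb_{s,k},\Xc)\le V_{k-1}$ yields
\[
\EXP{V_k\mid\Fc_{k-1}} \;\le\; V_{k-1}\;-\;\rho\sum_{s\in\ccalS}\dist^2(\pb_{s,k},\Xc)\;+\;\EXP{E_k\mid\Fc_{k-1}} .
\]
This is exactly the hypothesis of Lemma~\ref{thm:super} (after a harmless index shift) with $a_k\equiv 0$, $u_k=\rho\sum_{s\in\ccalS}\dist^2(\pb_{s,k},\Xc)$ and $b_k$ of the form $\EXP{E_k\mid\Fc_{k-1}}$, which is adapted and a.s.\ summable since $E_k$ is. Conclusion (i) of that lemma then gives $\sum_k\sum_{s\in\ccalS}\dist^2(\pb_{s,k},\Xc)<\infty$ a.s., hence $\sum_k\dist^2(\pb_{s,k},\Xc)<\infty$ a.s.\ for every $s\in\ccalS$, i.e.\ $\{\dist(\pb_{s,k},\Xc)\}_{k\in\mbN}$ is a.s.\ square summable, which is the claim.

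The main obstacle is the passage through Assumption~\ref{assume:c}. Because the last step of Algorithm~\ref{alg:filter+opt} is a \emph{random single-constraint} approximate projection, a pathwise argument only controls the violation $h_+^2(\pb_{s,k};\ccalD_{s,k},\o_{s,k})$ of one sampled LMI, not the distance to the full feasible set $\Xc$; the global error bound is precisely the device that converts the former, in conditional expectation, into the latter, and it must be applied after conditioning so that $\pb_{s,k}$ and $\ccalD_{s,k}$ are frozen. A secondary difficulty is that the disagreement errors $\nu_{s,k},\bd_{s,k}$ depend on $\o_{s,k}$, so care is needed to keep the perturbation term of the supermartingale recursion both adapted and summable --- this is exactly what the uniform bounds and a.s.\ summability established in Lemmas~\ref{lem:summable_constraints}--\ref{lem:summable_grads} are for. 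Matching the free constants $\eta,\tau$ of Lemma~\ref{lem:basic} against $\k,L_h,D$ to make $\rho>0$ is then routine.
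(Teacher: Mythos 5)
Your proposal is correct and follows essentially the same route as the paper's proof: Lemma~\ref{lem:basic} with $\zb=\check{\zb}=\proj_{\Xc}(\pb_{s,k})$, the distance estimates, conditioning on $\Fc_{k-1}$ so Assumption~\ref{assume:c} converts the sampled violation into $\dist^2(\pb_{s,k},\Xc)$, Lemma~\ref{lem:ds} after summing over $s$, summability of the noise terms via Lemmas~\ref{lem:summable_constraints}--\ref{lem:summable_grads}, and Lemma~\ref{thm:super} to conclude. The only difference is your choice of constants ($\tau=2$, $\eta\ge\k(L_h+D)^2$ versus the paper's $\tau=4$, $\eta=\k(L_h+D)^2$), which is immaterial.
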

In the final Lemma, we show that $\|\zb_{s,k}-\pb_{s,k}\|$ eventually converges to zero for all $s \in \ccalS$.
This result combined with Lemma \ref{lem:feas} implies that the sequences $\set{ \zb_{s,k} }_{k \in \mbN}  $ also achieve asymptotic feasibility.
\begin{lemma}[Network Disagreement Under Noise]
\label{lem:disagree}
Let Assumptions
\ref{assume:Pr}-\ref{assume:network} hold. Let the sequence
$\{\a_k\}$ is such that $\sum_{k=1}^\infty \a_k^2<\infty$.
Define
$\eb_{s,k}=\zb_{s,k}-\pb_{s,k}$ for all $s\in \ccalS$ and $k\ge 1$.
Then, we have \textit{a.s.}:
\begin{enumerate}[(a)]
\item
$\sum_{k=1}^\infty \|\eb_{s,k}\|^2 <\infty \quad \text{for all } s \in \ccalS$,
\item
$\sum_{k=1}^\infty\a_k\|\check{\zb}_{s,k}-\bar {\check{\zb}}_k\|<\infty    \quad \hbox{for all $s \in \ccalS$}$,
\end{enumerate}
where $\check{\zb}_{s,k} = \mathsf{\Pi}_{\Xc}[\pb_{s,k}]$ and $\bar {\check{\zb}}_k = \frac{1}{n}\sum_{s \in \ccalS}\check{\zb}_{s,k}$.
\end{lemma}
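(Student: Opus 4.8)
The plan is to establish part (a) by a direct estimate on the two projection steps in Lines~\ref{line:v} and~\ref{line:z} of Algorithm~\ref{alg:filter+opt}, and then to deduce part (b) from part (a) by invoking the perturbed-consensus result, Lemma~\ref{lemma:ram}.

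For part (a), I would first observe that $\pb_{s,k}=\sum_{j\in\Nc_{s,k}}[W_k]_{sj}\zb_{j,k-1}$ is a convex combination of points of $\Xc_0$, hence $\pb_{s,k}\in\Xc_0$ and $\proj_{\Xc_0}[\pb_{s,k}]=\pb_{s,k}$. By non-expansiveness of $\proj_{\Xc_0}$ (Lemma~\ref{lem:proj}) and \eqref{eqn:L_f1}, $\|\vb_{s,k}-\pb_{s,k}\|\le\a_k\|f'(\pb_{s,k})\|\le\a_k L_f$, and again by non-expansiveness together with \eqref{eqn:beta} and \eqref{eqn:dhbnd} (the case $h_+=0$ being trivial),
\[
\|\zb_{s,k}-\vb_{s,k}\|\le\b_{s,k}\,\|h'_+(\vb_{s,k};\ccalD_{s,k},\o_{s,k})\|=\frac{h_+(\vb_{s,k};\ccalD_{s,k},\o_{s,k})}{\|h'_+(\vb_{s,k};\ccalD_{s,k},\o_{s,k})\|}\le\frac{h_+(\vb_{s,k};\ccalD_{s,k},\o_{s,k})}{c_h}.
\]
The crux is then to bound $h_+(\vb_{s,k};\ccalD_{s,k},\o_{s,k})$. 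Writing $\check{\zb}_{s,k}=\proj_\Xc[\pb_{s,k}]\in\Xc$, so that $h_+(\check{\zb}_{s,k};\ccalD,\o_{s,k})=0$, and inserting and removing the exact-data value, the triangle inequality and $L_h$-Lipschitzness of $h_+(\cdot;\ccalD,\o_{s,k})$ (from \eqref{e:Lh}) give
\[
h_+(\vb_{s,k};\ccalD_{s,k},\o_{s,k})\le\varepsilon_{s,k}+L_h\big(\|\vb_{s,k}-\pb_{s,k}\|+\dist(\pb_{s,k},\Xc)\big),
\]
where $\varepsilon_{s,k}=\sup_{\zb\in\Xc_0,\,i\in\ccalI}|h_+(\zb;\ccalD_{s,k},i)-h_+(\zb;\ccalD,i)|$ is a uniform data-error term. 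Combining the three displays and squaring yields $\|\eb_{s,k}\|^2\le A_0\big(\a_k^2+\varepsilon_{s,k}^2+\dist^2(\pb_{s,k},\Xc)\big)$ for a constant $A_0$. It remains to sum termwise: $\sum_k\a_k^2<\infty$ by hypothesis, $\sum_k\dist^2(\pb_{s,k},\Xc)<\infty$ a.s.\ by Lemma~\ref{lem:feas}, and $\sum_k\varepsilon_{s,k}^2<\infty$ a.s.\ since $\varepsilon_{s,k}$ is bounded and summable by the same geometric-decay argument that underlies Lemma~\ref{lem:summable_constraints} (the ICF drives $\|\ccalD_{s,k}-\ccalD\|\to0$ geometrically). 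This proves (a).

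For part (b), note that Line~\ref{line:p} and $\eb_{s,k}=\zb_{s,k}-\pb_{s,k}$ give the perturbed-consensus recursion $\zb_{s,k}=\sum_{v\in\ccalS}[W_k]_{sv}\zb_{v,k-1}+\eb_{s,k}$. By Cauchy--Schwarz and part (a), $\sum_k\a_k\|\eb_{s,k}\|\le\big(\sum_k\a_k^2\big)^{1/2}\big(\sum_k\|\eb_{s,k}\|^2\big)^{1/2}<\infty$ a.s., so Lemma~\ref{lemma:ram} applies with $\tb_{s,k}=\zb_{s,k}$ and yields $\sum_k\a_k\|\zb_{s,k}-\zb_{v,k}\|<\infty$ for all $s,v$. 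To pass from $\zb$ to $\check{\zb}=\proj_\Xc[\pb]$, bound $\|\check{\zb}_{s,k}-\check{\zb}_{v,k}\|\le\dist(\pb_{s,k},\Xc)+\|\pb_{s,k}-\pb_{v,k}\|+\dist(\pb_{v,k},\Xc)$ and $\|\pb_{s,k}-\pb_{v,k}\|\le\|\eb_{s,k}\|+\|\zb_{s,k}-\zb_{v,k}\|+\|\eb_{v,k}\|$; each piece is $\a_k$-weighted summable (using $\sum_k\a_k\dist(\pb_{s,k},\Xc)<\infty$ a.s., again Cauchy--Schwarz with Lemma~\ref{lem:feas}). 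Hence $\sum_k\a_k\|\check{\zb}_{s,k}-\check{\zb}_{v,k}\|<\infty$, and since $\|\check{\zb}_{s,k}-\bar{\check{\zb}}_k\|\le\frac1n\sum_{v\in\ccalS}\|\check{\zb}_{s,k}-\check{\zb}_{v,k}\|$, summing over $v$ gives (b).

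I expect the delicate point to be the uniform data-error estimate $\varepsilon_{s,k}$: the errors $\nu_{s,k}$ and $\bd_{s,k}$ of \eqref{eq:constrainterror}--\eqref{eq:differror} are defined only at the running iterate, whereas here the estimate must hold uniformly over $\zb\in\Xc_0$ and $i\in\ccalI$ and, moreover, must be square-summable rather than merely summable. This should follow from the compactness of $\Xc_0$, the boundedness of $Q$ and its derivatives (Assumption~\ref{assume:Q}), and the geometric convergence $\ccalD_{s,k}\to\ccalD$, by upgrading the argument behind Lemmas~\ref{lem:summable_constraints}--\ref{lem:summable_grads}; everything else is bookkeeping with non-expansiveness, the triangle inequality, and Cauchy--Schwarz.
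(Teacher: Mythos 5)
Your argument is correct, and it is worth noting where it departs from the paper. For part (a) your skeleton matches the paper's: split $\eb_{s,k}$ across the two projection steps, use non-expansiveness with \eqref{eqn:L_f1} and \eqref{eqn:dhbnd} to get $\|\eb_{s,k}\|\le \a_k L_f + h_+(\vb_{s,k};\ccalD_{s,k},\o_{s,k})/c_h$, then square and invoke Lemma~\ref{lem:feas}. The difference is in how $h_+(\vb_{s,k};\ccalD_{s,k},\o_{s,k})$ is controlled: the paper shifts to $h_+(\pb_{s,k};\ccalD_{s,k},\o_{s,k})$ via the $(L_h+D)$-Lipschitz bound \eqref{eqn:L_h2} and then bounds that by $L_h\,\dist(\pb_{s,k},\Xc)$, silently dropping the data-mismatch term (the projection $\check{\zb}_{s,k}$ is feasible only for the exact data $\ccalD$, not for $\ccalD_{s,k}$); you instead route through the exact-data function, where $h_+(\check{\zb}_{s,k};\ccalD,\o_{s,k})=0$ holds legitimately, and carry the mismatch explicitly as a uniform error $\varepsilon_{s,k}$. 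This is the more careful bookkeeping, but it obliges you to upgrade Lemma~\ref{lem:summable_constraints} (stated only along the iterates) to a supremum over $\Xc_0$ and $\ccalI$; that upgrade does hold, since the bound in Proposition~\ref{prop:summable} depends on $\bbz$ only through $\|\bbu_s\|\le\delta$ and is otherwise a constant multiple of the summable ICF consensus errors, but you should state this as a short uniform variant rather than an appeal to ``the same argument.'' For part (b) you give something the paper does not: the paper simply cites \cite{lee15approx}, whereas your chain — Lemma~\ref{lemma:ram} applied to $\zb_{s,k}=\sum_v[W_k]_{sv}\zb_{v,k-1}+\eb_{s,k}$ with $\sum_k\a_k\|\eb_{s,k}\|<\infty$ from part (a), then the triangle bound $\|\check{\zb}_{s,k}-\check{\zb}_{v,k}\|\le\dist(\pb_{s,k},\Xc)+\|\pb_{s,k}-\pb_{v,k}\|+\dist(\pb_{v,k},\Xc)$ with $\a_k$-weighted summability of each piece via Cauchy--Schwarz and Lemma~\ref{lem:feas} — is a complete, self-contained proof consistent with how the paper itself uses Lemma~\ref{lemma:ram} after this lemma (note only that Lemma~\ref{lemma:ram} also requires $\{\a_k\}$ nonincreasing, which is part of the algorithm's standing step-size choice, so record it as a hypothesis).
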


Note that the sequences $\{\zb_{s,k}\}_{k \in \mbN}$ for $s \in \ccalS$ generated by Algorithm \ref{alg:filter+opt} can be
represented as relation \eqref{eqn:rule}. That is,
$
\zb_{s,k}= \sum_{s\in\ccalS} [W_k]_{sj} \zb_{j,k-1} + \eb_{s,k}
$
for all $s \in \ccalS$
and $\eb_{s,k} = \zb_{s,k}-\pb_{s,k}$.
Therefore, from Lemma \ref{lem:disagree}(a), we have
$
 \sum_{k=1}^\infty \a_k\|\eb_{s,k}\| =  \frac{1}{2}\sum_{k=1}^\infty \a_k^2 + \frac{1}{2}\sum_{k=1}^\infty \|\eb_{s,k}\|^2<\infty.
$
Invoking Lemma \ref{lemma:ram}, we see that there is consensus among $\zb_{s,k}$ for $s \in \ccalS$.

\subsection{Proof of Theorem~\ref{prop:main}}
We invoke Lemma \ref{lem:basic} with $\check{\zb} = \check{\zb}_{s,k}$ (Note that $\check{\zb}_{s,k}$ is defined as $\mathsf{\Pi}_{\Xc}[\pb_{s,k}]$ in Lemma \ref{lem:disagree}), $\tau = 4$ and $\eta = \k(L_h+D)^2$.
We also let
$\zb = \zb^*$ for an arbitrary $\zb^* \in \Xc^*$. Therefore, for any $\zb^* \in \Xc^*$, $s \in \ccalS$ and $k \in \mbN$, we have \textit{a.s.}:
\begin{align}\label{eqn:pr1}
&\|\zb_{s,k}-\zb^*\|^2
\le  \|\pb_{s,k}-\zb^*\|^2 - 2\a_k(f(\check{\zb}_{s,k})-f(\zb^*))\nonumber\\
&~+ 2\frac{C_{\zb}(C_h+N)}{c_h^2}\|\bd_{s,k}\| +2\frac{C_h+N}{c_h^2}|\nu_{s,k}|\nonumber\\
&~ + \frac{1}{4\k(L_h+D)^2}\dist^2(\pb_{s,k},\Xc)+A\a_k^2L_f^2\nonumber\\
&~-\frac{3}{4(L_h+D)^2}h^2_+(\pb_{s,k};\ccalD_{s,k},\o_{s,k}),
\end{align}
where $A = 5+4\k(L_h+D)^2$. From Assumption \ref{assume:c}, we know that
$
\dist^2(\pb_{s,k},\Xc) \le \k \Es\left[h^2_+(\pb_{s,k};\ccalD_{s,k},\o_{s,k})\mid \Fc_{k-1}\right].
$

Denote by $\Fc_k$ the sigma-field induced by the history of the algorithm up to time $k$, i.e.,
$
\Fc_k = \{\zb_{s,0},~(\o_{s,\l},~ 1 \le \l \le k),~ s\in \ccalS\} \quad \text{for } k \in \mbN,
$
and $\Fc_0 = \{\zb_{s,0},~ s\in \ccalS\}$.
Taking the expectation conditioned on $\Fc_{k-1}$ in relation (\ref{eqn:pr1}), summing this over $s \in \ccalS$, and using the above relation, we obtain
\begin{align}\label{eqn:pr2}
&\sum_{s\in\ccalS}\Es\left[\|\zb_{s,k}-\zb^*\|^2\mid \Fc_{k-1}\right]\\
&~\le  \sum_{s\in\ccalS}\|\zb_{s,k-1}-\zb^*\|^2 - 2\a_k\sum_{s\in \ccalS}(f(\check{\zb}_{s,k})-f(\zb^*))\nonumber\\
&~~+ 2\frac{C_{\zb}(C_h+N)}{c_h^2}\sum_{s\in\ccalS}\Es[\|\bd_{s,k}\|\mid \Fc_{k-1}] \nonumber\\ &~~+2\frac{C_h+N}{c_h^2}\sum_{s\in\ccalS}\Es[|\nu_{s,k}|\mid \Fc_{k-1}]\nonumber\\
&~ -\frac{1}{2\k_2(L_h+D)^2}\sum_{s\in\ccalS}\dist^2(\pb_{s,k},\Xc) + A\a_k^2nL_f^2,\nonumber
\end{align}
where we used Lemma \ref{lem:ds} for the first term on the right-hand side. 
Recall that $f(\zb)=\sum_{s\in\ccalS} f(\zb)$ and $\bar{\check{\zb}}_k \triangleq \frac{1}{n}\sum_{\ell \in \ccalS} \check{\zb}_{\ell,k}$.
Using $\bar{\check{\zb}}_k$ and $f$,
we can rewrite the term $\sum_{s\in\ccalS}(f(\check{\zb}_{s,k})-f(\zb^*))$ as follows:
\begin{subequations}
\begin{align}
\sum_{s\in\ccalS} &(f(\check{\zb}_{s,k})-f(\zb^*)) \nonumber\\
 &~=  \sum_{s\in\ccalS} (f(\check{\zb}_{s,k})-f(\bar{\check{\zb}}_k)) +  (f(\bar{\check{\zb}}_k)-f^*)\label{eqn:rewrite1}\\
&~\le \sum_{s\in\ccalS} \langle f'(\bar{\check{\zb}}_k),\bar{\check{\zb}}_k-\check{\zb}_{s,k}\rangle+  (f(\bar{\check{\zb}}_k)-f^*)\label{eqn:rewrite2}\\
&~ \le \sum_{s\in\ccalS}  \|f'(\bar{\check{\zb}}_k)\|\,\|\check{\zb}_{s,k}-\bar{\check{\zb}}_k\|+  (f(\bar{\check{\zb}}_k)-f^*)\label{eqn:rewrite3}\\
&~ \le L_f\sum_{s\in\ccalS}  \|\check{\zb}_{s,k}-\bar{\check{\zb}}_k\|+  (f(\bar{\check{\zb}}_k)-f^*),\label{eqn:rewrite4}
\end{align}
\end{subequations}
where (\ref{eqn:rewrite1}) follows from adding and subtracting $f(\bar{\check{\zb}}_k)$;
(\ref{eqn:rewrite2}) follows from the convexity of the function $f$;
(\ref{eqn:rewrite3}) follows from the Schwarz inequality;
and (\ref{eqn:rewrite4}) follows from relation \eqref{eqn:L_f1} and $\bar{\check{\zb}}_k \in \Xc \subseteq \Xc_0$.
Combining (\ref{eqn:rewrite4}) with (\ref{eqn:pr2}), we obtain
\begin{align*}
&\sum_{s\in\ccalS}\Es\left[\|\zb_{s,k}-\zb^*\|^2\mid \Fc_{k-1}\right]\\
&~\le  \sum_{s\in\ccalS}\|\zb_{s,k-1}-\zb^*\|^2 - 2\a_k(f(\bar{\check{\zb}}_k)-f^*)\\
&~~+ 2\frac{C_{\zb}(C_h+N)}{c_h^2}\sum_{s\in\ccalS}\Es[\|\bd_{s,k}\|\mid \Fc_{k-1}] \\ &~~+2\frac{C_h+N}{c_h^2}\sum_{s\in\ccalS}\Es[|\nu_{s,k}|\mid \Fc_{k-1}]\\
&~~ +L_f\sum_{s\in\ccalS} \a_k \|\check{\zb}_{s,k}-\bar{\check{\zb}}_k\| + A\a_k^2nL_f^2,
\end{align*}
where we omitted the negative term.

Since $\bar{\check{\zb}}_k \in \Xc$, we have $f(\bar{\check{\zb}}_k) - f^* \ge 0$.
Thus, under the assumption $\sum_{k=0}^{\infty} \alpha_k^2 < \infty$ and Lemma \ref{lem:disagree}(b), the above relation satisfies all the conditions of Lemma \ref{thm:super}. Using this, we have the following results.

\textit{Result 1:}  For some $\bbz^* \in \ccalX^*$ and any $s \in \ccalS$, the sequence $\{\|\zb_{s,k}-\zb^*\|\}$ is
convergent \textit{a.s.}

\textit{Result 2:}
$
\sum_{k=1}^{\infty} \alpha_k(f(\bar{\check{\zb}}_k)-f^*) < \infty \quad a.s.
$

As a direct consequence of \textit{Result 1},
we know that the sequences $\{\|\pb_{s,k}-\zb^*\|\}$ and $\{\|\check{\zb}_{s,k}-\zb^*\|\}$ are also
convergent \textit{a.s.}
(This is straightforward from Line \ref{line:p} of Algorithm \ref{alg:filter+opt} and Lemma \ref{lem:feas} with the knowledge that
$\dist^2(\pb_{s,k},\Xc) = \|\pb_{s,k}-\check{\zb}_{s,k}\|^2$.)
Since  $\|\bar{\check{\zb}}_k-\zb^*\| \le \frac{1}{N} \sum_{i\in \ccalS}\|\check{\zb}_{s,k}-\zb^*\|$,
we further know that the sequence $\{\|\bar{\check{\zb}}_k-\zb^*\|\}$ is also
convergent \textit{a.s.}

As a direct consequence of \textit{Result 2}, since $\alpha_k$ is not summable,
$
\liminf_{k\rightarrow \infty} (f(\bar{\check{\zb}}_k)-f^*)=0 \; a.s.
$
From this relation and the continuity of $f$, it follows that
the sequence $\{\bar{\check{\zb}}_k\}$ must have one accumulation point in the set $\Xc^*$ \textit{a.s.}
This and the fact that $\{\|\bar{\check{\zb}}_k-\zb^*\|\}$ is convergent \textit{a.s.} for every $\zb^* \in \Xc^*$ imply that
\begin{equation}\label{eqn:z_final1}
\lim_{k \rightarrow \infty} \bar{\check{\zb}}_k = \zb^* \quad a.s.
\end{equation}
Also, by Lemma \ref{lem:disagree}(b), we have
\begin{align}\label{eqn:liminfv1}
\liminf_{k \rightarrow \infty} \|\check{\zb}_{s,k}-\bar{\check{\zb}}_k\| = 0 \quad \text{for all } s \in \ccalS \quad a.s.
\end{align}

The fact that $\{\|\check{\zb}_{s,k}-\zb^*\|\}$ is convergent \textit{a.s.} for all $s\in\ccalS$, together with~\eqref{eqn:z_final1}
and~\eqref{eqn:liminfv1}  implies that
\begin{equation}\label{eqn:consensus1}
\lim_{k \rightarrow \infty} \check{\zb}_{s,k} = \zb^* \quad \text{for all } s \in \ccalS \quad a.s.
\end{equation}
Since $\|\zb_{s,k}-\check{\zb}_{s,k}\| \le \|\zb_{s,k}-\pb_{s,k}\| + \|\pb_{s,k}-\check{\zb}_{s,k}\|$, by invoking Lemma \ref{lem:feas} and \ref{lem:disagree}(a), we have
$
\lim_{k\to \infty} \|\zb_{s,k}-\check{\zb}_{s,k}\| = 0$ for all $ s \in \ccalS \quad a.s.
$
This and relation (\ref{eqn:consensus1}) give us the desired result, which is
$
\lim_{k \rightarrow \infty} \zb_{s,k} = \zb^\star$ for all $ s \in \ccalS \quad a.s.
$

%To prove this proposition, we require the following propositions, which are proved in the appendix.

%\begin{lem}[Lemma 1, \cite{bloemendal12}]
%\label{lem:null}
%The complement of the set of nonsingular matrices with distinct eigenvalues is Lebesgue measure zero.
%\end{lem}

%Let $M$ and $N$ be normed vector spaces and $f \colon M \to N.$
%Suppose $f \in C^1$ and the set $\set{x \in M \mid \nabla f(x) = 0}$ has Lebesgue measure zero.
%Then, the preimage of a Lebesgue measure zero set $A \subset N$ is also measure zero, i.e., $\abs{f^{-1}(A)} = 0.$
%\end{prop}

\section{Numerical Experiments}
\label{sec:sims}
In this section, we illustrate Algorithm~\ref{alg:filter+opt} on a network of $n$ mobile sensors that collaboratively localize a collection of $m$ landmarks.
We use a distributed connectivity maintenance controller \cite{zavlanos2011graph} to ensure that the sensor network can share information and collaborate for all time.
The controller operates by signaling for a strong attractive force whenever a link is going to be deleted that can break connectivity.
The controller also deletes or replaces weak links as long as connectivity is preserved.

\subsection{Sensor Model}\label{sec:mod}
We assume that the landmarks live in $\reals^2$ and denote the configuration of landmark $i$ by $\bbx_i.$
We also assume that the configuration space of the robot is $\reals^2$.

Generic models for the measurement covariance matrix for sparse landmark localization have been proposed \cite{chung06, simonetto11, jalalkamali12}.
These models have two important characteristics that apply to a wide variety of localization sensors:
\begin{enumerate*}[label=(\itshape\roman*\upshape)]
\item Measurement quality is inversely proportional to viewing distance, and
\item The direction of maximum uncertainty is the viewing direction, i.e., the sensors are more proficient at sensing bearing than range.
\end{enumerate*}
The general idea is to use the vector $\hbbx_{i}(t)-\bbr_s(t)$ to construct a diagonal matrix in a coordinate frame relative to the sensor, then rotate the matrix to a global coordinate frame so that it can be compared to other observations.

Following the accepted generic sensor models, we express $Q(\bbr_s, \hbbx_{i})$ by its eigenvalue decomposition
$
Q= R_{\phi_{i} } \Lambda_{i}R_{\phi_{i}}^\top,
$
where the argument $(\bbr_s, \hbbx_{i})$ of $Q$, $\Lambda_i$, and $R_{\phi_i}$ is implicit.
The angle $\phi_{i} \in \left(-\frac{\pi}{2},\frac{\pi}{2} \right]$ is defined so that the first column of $R_{\phi_{i}}$, which by convention is $[\cos \phi_{i} \; \sin \phi_{i}]^\top \! \! , \;$  is parallel to the subspace spanned by the vector $\hbbx_{i}-\bbr_{s}$.
This angle is given by
$
\phi_{i}   = \tan^{-1} \left( 
					( [\hbbx_{i}]_2-[\bbr_{s}]_2)/([\hbbx_{i}]_1 - [\bbr_{s}]_1) 
				\right),
$
where the subscripts outside the brackets refer to the first and second coordinate of the vector representing the location of the landmark and sensor located at $\hbbx_{i}$ and $\bbr_s$.
The eigenvalue matrix is given by $\Lambda_{i} = \diag(\lambda_{1,i}, \lambda_{2,i}),$ where
$
\lambda_{1,i} = c_0(/1 + c_1\norm{\hbbx_{i} -\bbr_{s}}_2 ) \and \lambda_{2,i} =\rho \lambda_{1,i} .
$
In the body frame of sensor $s$, the eigenvalues $\lambda_{1,i}$ and $\lambda_{2,i} $ control the shape of the confidence ellipses for individual measurements of the $i$-th target.
The parameter $c_0 >0$ represents the overall sensor quality and scales the whole region equally, $c_1 > 0$ controls the sensitivity to depth, and $\rho >1$ controls eccentricity of the confidence regions associated with the measurements.
%The derivatives of $Q_{i,s}$ are given in the appendix for completeness.
Figure~\ref{fig:params} illustrates the measurement model for one sensor and two targets.

\begin{figure}[t]
\begin{center}
\includegraphics[width=\columnwidth]{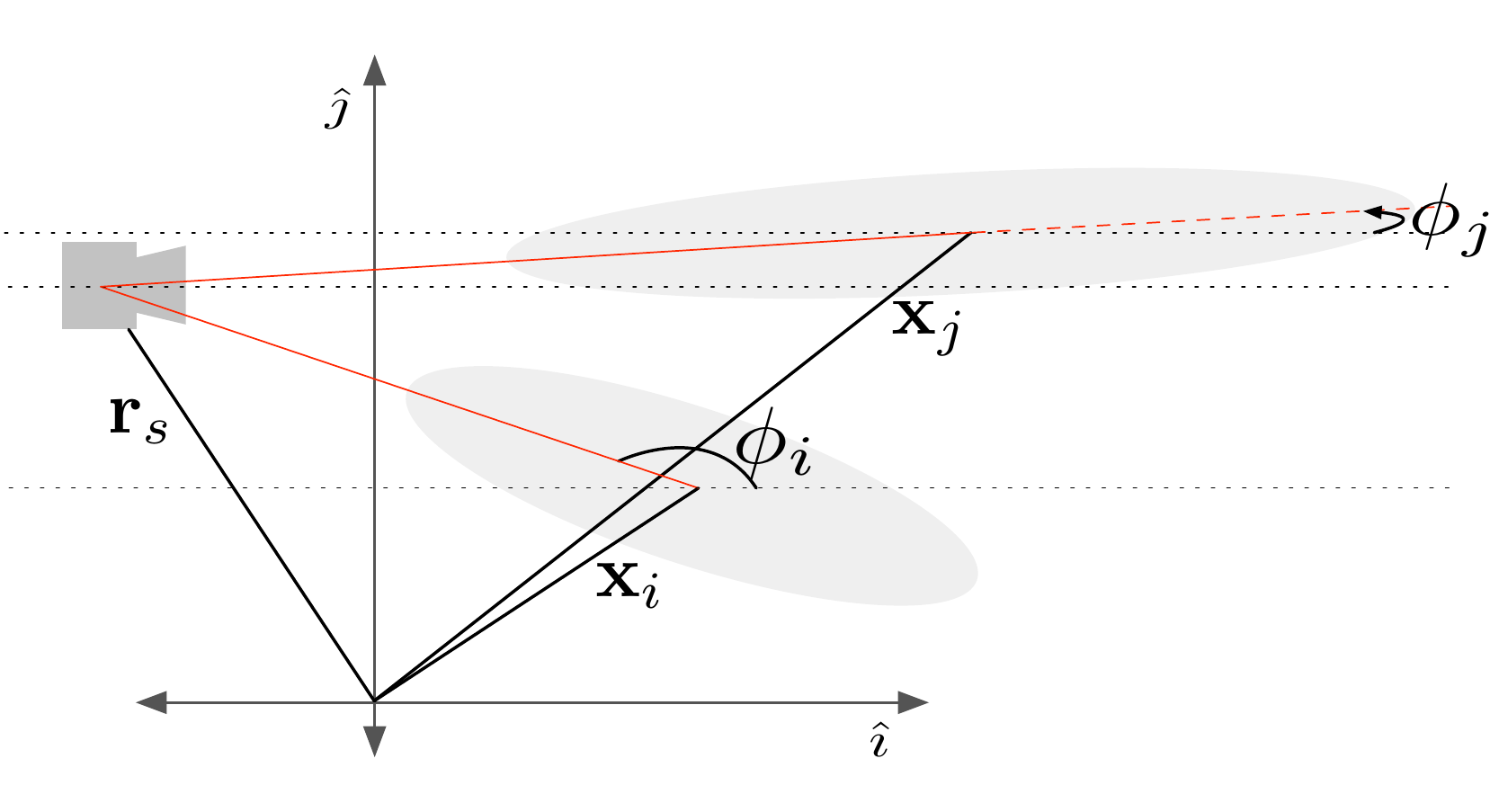}
\vspace{-1.0cm}
\caption{A sensor at $\bbr_s$ observes targets it believes to be at $\hbbx_i$ and $\hbbx_j$.
Possible confidence regions are shown in grey.
Dotted lines are parallel to the global $\hat{\imath}$ axis.
Red lines guide the eye from the sensor to targets
}
\vspace{-0.7cm}
\label{fig:params}
\end{center}
\end{figure}

Since $\tan^{-1}$ is not continuous at zero, we need to impose the following restriction so that Assumption~\ref{assume:Q} holds.
Essentially, we assume that, for the selected $\delta>0$, which recall has an affect on the robot's ability to translate in $\reals^2,$ we choose a $\tau_i$ such that 
$
\tau_i < c_0 / (1 + c_1\delta).
$
Thus, if the robot is within $\delta$ of a target $i$, the constraint $h(\bbz, \ccalD, i)$ is trivially satisfied, and therefore we will never evaluate $Q$ or any of its derivatives in this region.

\subsection{Results}
\begin{figure}[!t]
\centering
\includegraphics[width=\columnwidth]{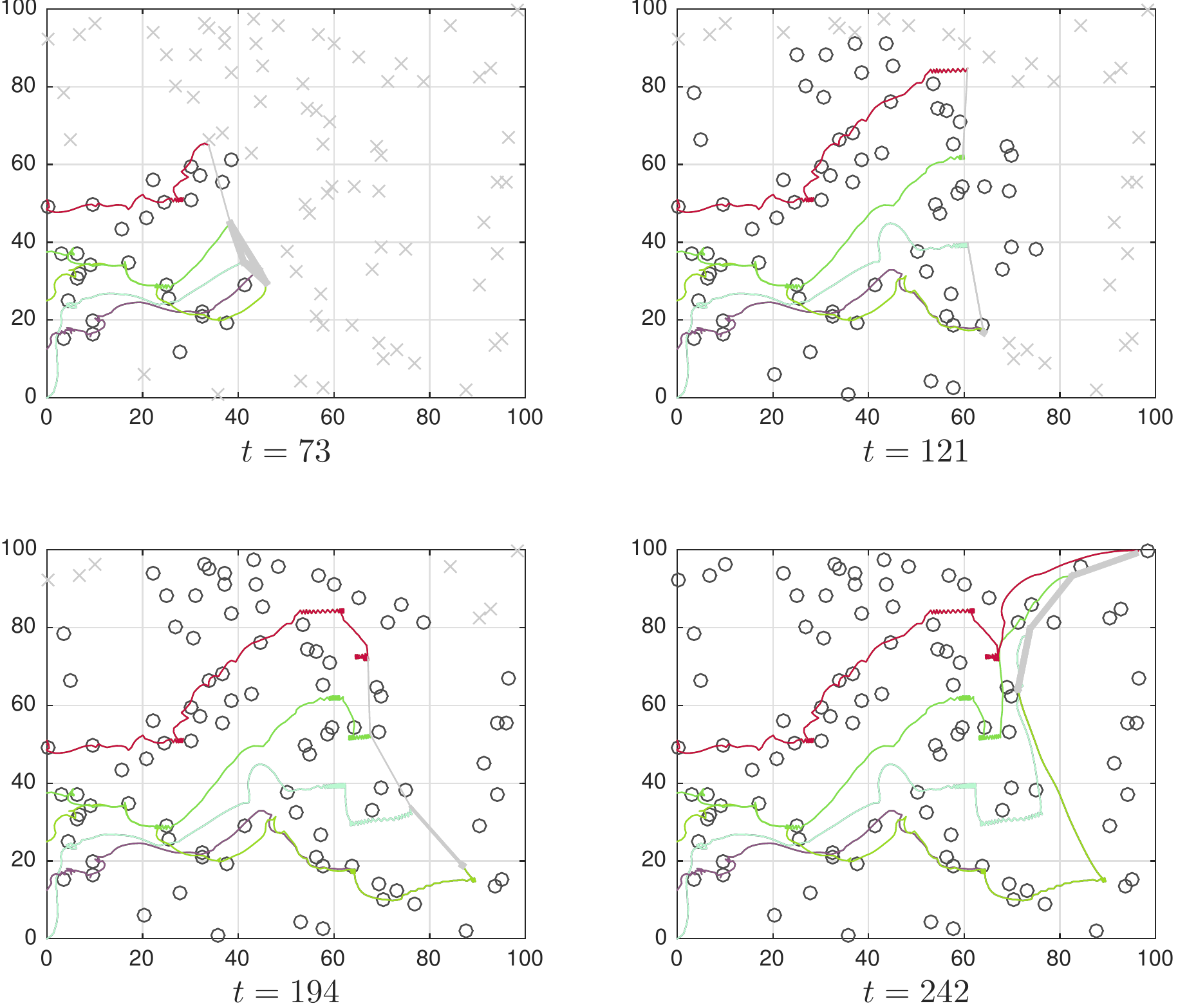}
\vspace{-0.7cm}
\caption{Four sensors localize 100 landmarks.  
Gray $\times$'s indicate landmarks with uncertainty above the threshold, and black $\circ$'s indicate localized landmarks.
Gray lines connecting the agents indicate the communication links.
}
\vspace{-0.7cm}
\label{fig:traj}
\end{figure}
%load 09-Sep-2015' 17:38:54.mat'
%fig = videoStillFrames(FileName, sensor_locations,adjacency_matrices, gTruth, fig, min_eigs, tol)

Figure~\ref{fig:traj} shows an example of the resulting active sensing trajectories for a network of four sensors that cooperatively localize a uniformly random distribution of 100 landmarks in a square $100 \,{{\rm m}}^2$ workspace to a desired accuracy of $1\, {{\rm m}}$.
This corresponds to an eigenvalue tolerance of $\tau_1=\cdots=\tau_m=9\,{{\rm m}}^{-2}.$
The prior distribution on the landmarks was uninformative, i.e., the initial mean estimate and information matrix are given by the initial observations $\set{\bby_{i,s} (0)\mid i \in \ccalI, s \in \ccalS}$ and the corresponding information matrices were set to $\set{Q(\bbr_s (0), \bby_{i,s} (0))}_{  i \in \ccalI, s \in \ccalS}.$
The parameters of the sensing model $Q$ were set to $c_0 = \frac{1}{2}, c_1 = 10, \and \rho = 30$.
The feasible control set $\ccalU$ was set to
$
\ccalU = \set{\bbu_s \in \reals^2 \mid \norm{\bbu_s}\leq 1\,{{\rm m}}}.
$
In the figure, note that between $t=194$ and $t=242$, the agent in the bottom right moves north in order to maintain connectivity while the other agents move to finish the localization task.
It can be seen
from the four snapshots of the trajectories in Fig.~\ref{fig:traj}, that the robots effectively ``divide and conquer'' the task based on proximity to unlocalized targets.

\begin{figure}[!t]
\centering%\vspace
\includegraphics[width=\columnwidth]{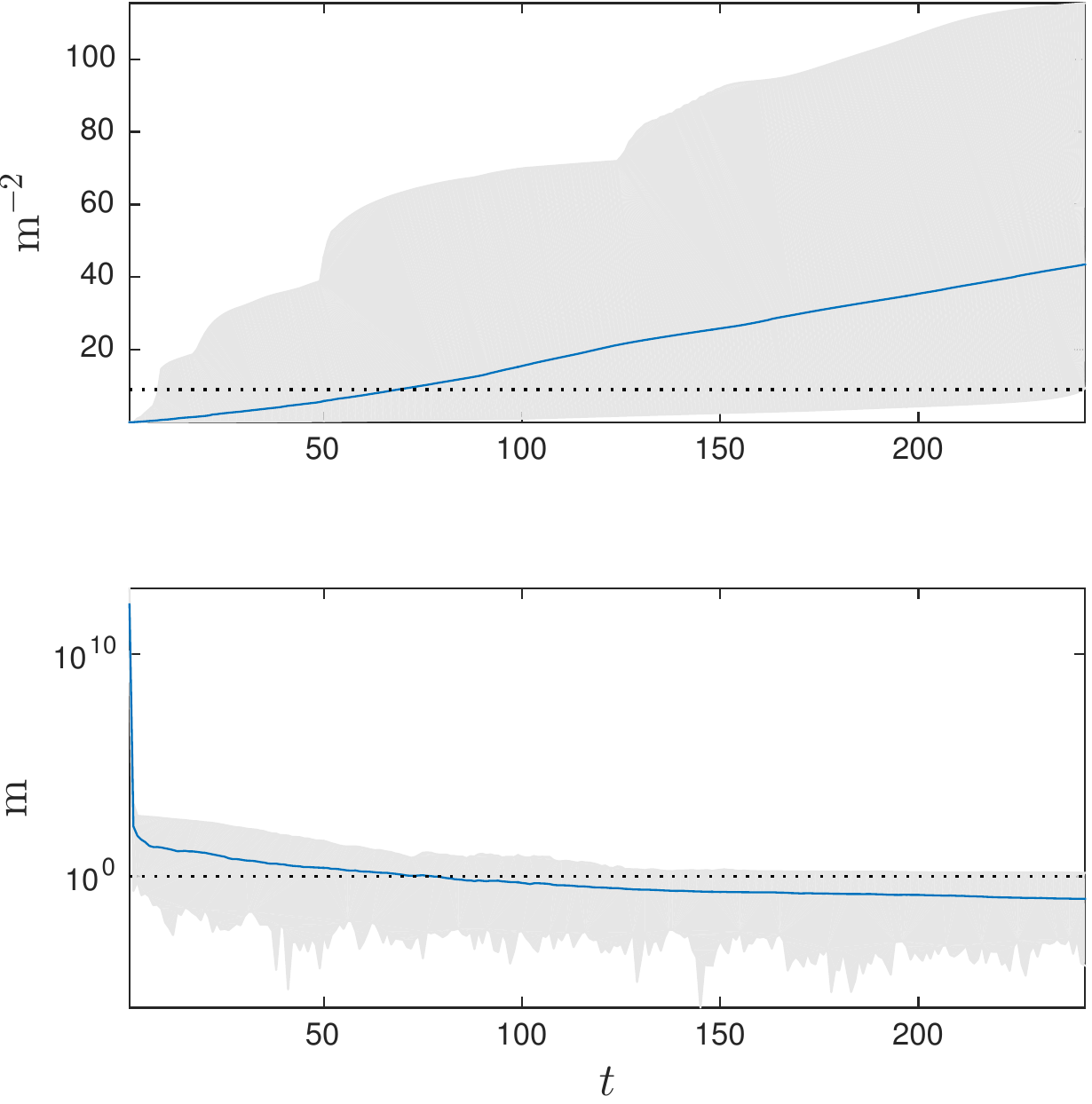}
\vspace{-0.7cm}
\caption{
Top panel:
Evolution of the minimum eigenvalues.
The center curve is the mean among all 100 targets, with the grey block extending to the lowest and largest eigenvalues at each time instant.
The dotted line is at the tolerance $\tau=9\,{{\rm m}}^{-2}$.
Bottom panel:
Evolution of localization error.
The center curve is the mean among all 100 targets, with the grey block extending to the minimum and maximum errors at each time instant.
The dotted line is at the desired accuracy of $1\,{{\rm m}}$.
Note that the scale on vertical axis is large because the initial uncertainty is infinite.
}
\vspace{-0.3cm}
\label{fig:eigs_and_errs}
\end{figure}

Fig.~\ref{fig:eigs_and_errs} shows the evolution of the minimum eigenvalues of the information matrices $S_i(t)$ from the simulation shown in Fig.~\ref{fig:traj}.
Note that, even after the minimum eigenvalue associated with a landmark has surpassed $\tau$, more information is still collected, but it does not affect the control signal.
%Note that, even though the information continues to increase in Figure~\ref{fig:eigs_and_errs}, the accuracy of the actual estimate will never go exactly to zero because of the uncertatiny in the model, captured by the matrix $W_i$ from \eqref{eq:motion}.
Note also that the maximal uncertainty in the bottom panel does not go below the desired threshold, even though the minimal eigenvalue does go above the threshold in the top panel.
This is expected because we use the 95\% confidence interval to generate the threshold $\tau$, thus we expect 5\% of landmarks to have errors larger than the desired threshold.

%Given that the use case presented herein is a network of mobile robots, we chose a convergence criterion that could be satisfied under hard communication constraints.
%We set this limit to thirty, effectively assuming that the robots each can only pass thirty messages before its time to choose a next waypoint.
In this case study, we selected the termination criterion for Algorithm~\ref{alg:filter+opt} to be at most 30 iterations.
When testing the algorithm with $\alpha_k = 1/k$, we observed that the relative change in the iterates $\bbz_{s,k}$ and the data $\ccalD_{s,k}$ would often reach levels at or below 10\% inside of thirty iterations, so we allowed the message passing to terminate early if this criterion was met.

%Algorithm
% when the relative change in the iterates $\bbz_{s,k}$ and the data $\ccalD_{s,k}$ was typically close to or less than 10\% for our simulations using $\alpha_k = 1/k.$
%
%We terminated Algorithm~\ref{alg:filter+opt} when the relative change in the iterates $\bbz_{s,k}$ and the data $\ccalD_{s,k}$ was less than 10\%.
%Typically, this occurred in about 30 iterations using a step size $\alpha_k = 1/k.$

% is a reasonable amount of messages to exchange before deciding on the next waypoint.
%Distributed optimization approaches that reach thousands of iterations are more common in distributed computing, machine learning, and big data applications where communication overhead is not as constrained as memory
%Obviously, for lower thresholds, the convergence will take longer.
%The $S$ and $\hbbx$ variables typically converge faster than the $\bbz$ variables because Algorithm~\ref{alg:IF} converges geometrically, whereas the distributed projections algorithm converges asymptotically.

\begin{figure}[!t]
\centering
\includegraphics[width=0.9\columnwidth]{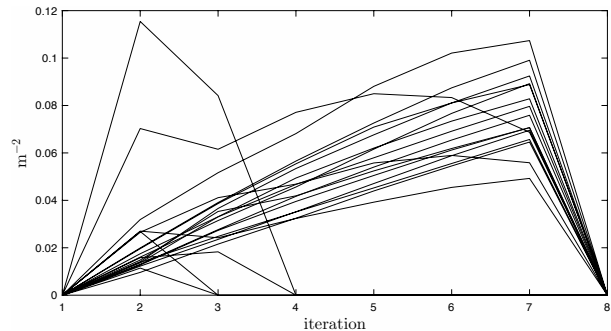}
\caption{
Plotting the amount of constraint violation of the original nonlinear constraint that will occur because we solve a linearized version.
Twenty lines are drawn, one for each landmark in a simulation with $m=20.$
The horizontal axis represents the outer loop iterations $t$. 
Lines are drawn to guide the eye
}
\vspace{-0.7cm}
\label{f:viols}
\end{figure}

Figure~\ref{f:viols} measures the amount of constraint violation of the original nonlinear constraints for the solution obtained from the linearized version that we observed in a simulation with twenty landmarks.
For the $\ccalU$ and the parameters for the function $Q$ used in these simulations, the constraint violation was very small, and driven to zero typically for $t > 10.$ 
We tested larger feasible sets $\ccalU$ and (predictably) observed a general increase in the constraint violation, although we still observed reasonable robot behaviors.
%However, even in these cases, the direction of the control signals $\set{\bbu_s}$ created plausible collaborative emergent behaviors, suggesting that if communication is strongly constrained, then $\ccalU$ would be increased to plan longer steps without much risk.

\section{Conclusion}
\label{sec:conc}

In this paper, we addressed the problem of controlling a network of mobile sensors so that a set of hidden states are estimated up to a user-specified accuracy.
We proposed a new algorithm that combines the ICF with control.
The first major contribution is the capability of controlling worst-case error in the hidden states.
The second major contribution is the novel numerical technique, which is robust toward noisy gradients, computationally light, can handle large systems of coupled LMIs.
We showed that our method converges to the optimal solution and presented simulations for sparse landmark localization, where the team was able to achieve the desired estimation tolerances while exhibiting interesting emergent behaviors, notably ``dividing and conquering" the distributed estimation task, observing landmarks from diverse viewing angles.
%Future research is directed at richer model domains in which to perform active sensing.

\appendices

\section{Proof of Lemmas~\ref{lem:summable_constraints}~and~\ref{lem:summable_grads}}
\label{app:summability}
To prove Lemmas~\ref{lem:summable_constraints}~and~\ref{lem:summable_grads}, consider the following.
\begin{prop}[Summable Disagreement in Constraints]\label{prop:summable}
Let Assumptions
\ref{assume:Q} - \ref{assume:network}
hold.
For any bounded sequence $\set{\bbz_{s,k}}$ and any $i \in \ccalI,$
\[
\sum\nolimits_{k=0}^\infty \norm{ h \left(\bbz_{s,k}; \ccalD_{s,k}, \omega_{s,k} \right) - h \left(\bbz_{s,k}; \ccalD, \omega_{s,k} \right)  }_F<\infty.
\]\end{prop}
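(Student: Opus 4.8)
The plan is to reduce the claim to two facts: (i) the quantity $h(\bbz_{s,k};\ccalD_{s,k},i)-h(\bbz_{s,k};\ccalD,i)$ is controlled, linearly and uniformly in $i$ and in the bounded iterate $\bbz_{s,k}$, by the data disagreement $\norm{\ccalD_{s,k}-\ccalD}$; and (ii) this disagreement decays geometrically in $k$ because Lines~\ref{line:Xi_new}--\ref{line:D} of Algorithm~\ref{alg:filter+opt} implement the ICF. Summability then follows from summability of a geometric series, and since the bound in (i) is uniform in the random index $\omega_{s,k}$, the conclusion holds surely (and a fortiori \textit{a.s.}), which is consistent with there being no ``\textit{a.s.}'' in the statement.

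\emph{Isolating the data dependence and bounding it.} Inspecting \eqref{eq:define_h}, $h(\bbz;\ccalD,i)$ depends on $\ccalD=(\hbbx,S)$ only through the block $S_i$ and through the point $\hbbx_i$ at which $Q$ and its partials $\nabla_j Q$ are evaluated; the term $\gamma_i I$ and the sensor positions $\bbr_\ell$ carry no data. Writing $\ccalD_{s,k}=(\hbbx^{s,k},S^{s,k})$, the $\gamma_i I$ terms cancel in the difference and
\[
h(\bbz_{s,k};\ccalD_{s,k},i)-h(\bbz_{s,k};\ccalD,i)=(S_i-S^{s,k}_i)-\sum_\ell\Big[\big(Q(\bbr_\ell,\hbbx^{s,k}_i)-Q(\bbr_\ell,\hbbx_i)\big)+\sum_j\big(\nabla_j Q(\bbr_\ell,\hbbx^{s,k}_i)-\nabla_j Q(\bbr_\ell,\hbbx_i)\big)[\bbu_\ell]_j\Big],
\]
where $[\bbu_\ell]_j$ is the $j$-th coordinate of the $\ell$-th control sub-vector of $\bbz_{s,k}$. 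By Assumption~\ref{assume:Q}(b)--(c), $Q$ is Lipschitz in its second argument with constant $\eta_1$ and each $\nabla_j Q$ is Lipschitz with constant $\eta_2$ (a continuous map with uniformly bounded generalized gradient is globally Lipschitz, applied entrywise to the symmetric matrices $Q$ and $\nabla_j Q$); since $\set{\bbz_{s,k}}$ is bounded, $\norm{\bbu_\ell}\le\bar\delta$ for all $k$ (one may take $\bar\delta=\delta$ for the iterates of Algorithm~\ref{alg:filter+opt}, which stay in $\ccalX_0$ by the projections in Lines~\ref{line:v} and~\ref{line:z}). Taking Frobenius norms, the triangle inequality and $|[\bbu_\ell]_j|\le\bar\delta$ give, uniformly over $i\in\ccalI$ (hence for $i=\omega_{s,k}$),
\[
\norm{h(\bbz_{s,k};\ccalD_{s,k},\omega_{s,k})-h(\bbz_{s,k};\ccalD,\omega_{s,k})}_F\le\norm{S^{s,k}_i-S_i}_F+n(\eta_1+q\bar\delta\,\eta_2)\norm{\hbbx^{s,k}_i-\hbbx_i}\le C\,\norm{\ccalD_{s,k}-\ccalD},
\]
with $C=C(n,q,\bar\delta,\eta_1,\eta_2)$ and $\norm{\cdot}$ any norm on the finite-dimensional data space.

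\emph{Geometric consensus and summation.} As recalled in Section~\ref{sec:sol} following \cite{kamal12}, under Assumption~\ref{assume:network} the averaged quantities $n\Xi_{i,s,k}$ and $\bbxi_{i,s,k}$ produced in Lines~\ref{line:Xi_new}--\ref{line:xi_new} converge geometrically to the consensus posterior information; since $n\Xi_{i,s,k}\to S_i$ with $S_i$ nonsingular, the inversion and multiplication in Line~\ref{line:D} are locally Lipschitz near the limit, so $\hbbx^{s,k}_i\to\hbbx_i$ geometrically as well. Hence there are $c>0$ and $\rho\in(0,1)$ with $\norm{\ccalD_{s,k}-\ccalD}\le c\rho^{k}$, and combining with the previous bound,
\[
\sum\nolimits_{k=0}^{\infty}\norm{h(\bbz_{s,k};\ccalD_{s,k},\omega_{s,k})-h(\bbz_{s,k};\ccalD,\omega_{s,k})}_F\le C\sum\nolimits_{k=0}^{\infty}c\rho^{k}<\infty,
\]
which proves the proposition.

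\emph{Expected main obstacle.} All of this except the Lipschitz step is bookkeeping; the delicate point is converting Assumption~\ref{assume:Q}(b)--(c) (``twice subdifferentiable with bounded subdifferentials'') into honest Lipschitz estimates for the matrix-valued $Q$ and $\nabla_j Q$. One must read subdifferentiability entrywise on $\Sym(p,\reals)$, use the measure-zero critical-set hypothesis Assumption~\ref{assume:Q}(d) together with the region restriction of Section~\ref{sec:sims} (which keeps the iterates away from the discontinuity of $\tan^{-1}$ in the concrete sensor model) to rule out non-smooth pathologies, and make sure the constants $\eta_1,\eta_2$ are uniform over $i\in\ccalI$ and over the bounded range of the iterates so that they survive the random choice of $\omega_{s,k}$.
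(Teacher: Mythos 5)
Your proposal is correct and follows essentially the same route as the paper: the same decomposition of $h(\bbz_{s,k};\ccalD_{s,k},i)-h(\bbz_{s,k};\ccalD,i)$ into the $S_i$-difference, the $Q$-differences, and the $\nabla_j Q$-differences, each controlled via the bounds $\eta_1,\eta_2$ of Assumption~\ref{assume:Q} and summed using geometric convergence of the ICF. The only cosmetic differences are that you use Lipschitz estimates where the paper uses first-order Taylor expansions with $o(\norm{\hbbx_{i,k}-\hbbx_i})$ remainders, and you handle the random index by a bound uniform in $i$ rather than summing over $i\in\ccalI$; you also treat the inversion in Line~\ref{line:D} slightly more explicitly than the paper does.
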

\begin{proof}
In this proof, we will essentially be comparing the function $h$ for the unconverged local data $\ccalD_{s,k}$ and the consensus data $\ccalD.$
Note that at some iteration $k \in \mbN,$ the ICF is actually computing the information summary $(\bbxi_{1,s,k}, \dots, \bbxi_{m,s,k}, \Xi_{1,s,k}, \dots, \Xi_{m,s,k})$, which is used to compute $\ccalD_{s,k}$ in line~\ref{line:D} of Algorithm~\ref{alg:filter+opt}.
In particular, we will need to refer to the unconverged mean and information matrix, which is
\agn*{
\ccalD_{s,k} &\triangleq  
\Big( \big((n\Xi_{1, s,k})^{-1} \bbxi_{1, s,k}, \dots, (n\Xi_{m,s,k})^{-1} \bbxi_{m,s,k}\big), \\
&\hspace{2cm}\left( n\Xi_{1, s,k} , \dots, n\Xi_{m,s,k} \right)\Big).
}
In the proof, we will drop the reference to the sensor $s$, as we only consider one sensor at a time.
In other words, the unconverged data will be written as 
\[
\! \ccalD_k 
\!\! =  \!\!
\Bigg(  \! \! \!
\bigg( \!\!\underbrace{(n\Xi_{1, k})^{ \! - \!1} \! \bbxi_{1,k}}_{\hbbx_{1,k}}, \dots,\!  \underbrace{(n\Xi_{m,k})^{\! - \! 1} \! \bbxi_{m,k}}_{\hbbx_{m,k}}
\! \bigg) \! , \!\!\bigg(\!\! \underbrace{n\Xi_{1,k}}_{S_{1,k}} , \dots,\! \underbrace{ n\Xi_{1,m}}_{S_{m,k}} \!\!\!\bigg)
\! \! \! \Bigg)
\]
for $k \in \mbN,$ i.e., this is implicitly the data local to sensor $s$ in the proof.
%In this notation, the consensus data $\ccalD = \ccalD_\infty.$
For the consensus data $\ccalD_\infty = \left(\left(\hbbx_{1, \infty}, \dots, \hbbx_{m, \infty}\right), \left(S_{1, \infty}, \dots, S_{m, \infty}\right)\right)$, we also drop the $k$ subscript, i.e., we write the consensus data as $\ccalD =(\hbbx, S)=  \left(\left(\hbbx_{1}, \dots, \hbbx_{m}\right),\left( S_{1}, \dots, S_{m}\right)\right) $.

Let us fix an arbitrary $i \in \ccalI$, and prove the proposition for the sequence $\omega_{k} = \set{i, i, i, \dots}.$
If the proof holds for this sequence, it thus holds for the random sequences $\omega_{k} \subset \ccalI$ because
\agn*{
\sum\nolimits_{k=0}^\infty &\norm{ h \left(\bbz_{k}; \ccalD_{k}, \omega_{k} \right) - h \left(\bbz_{k}; \ccalD, \omega_{k} \right)  }_F
\\
&\leq 
\sum\nolimits_{i \in \ccalI} \sum_{k=0}^\infty \norm{ h \left(\bbz_{k}; \ccalD_{k}, i\right) - h \left(\bbz_{k}; \ccalD, i\right)  }_F,
}
and if each sequence $\norm{ h \left(\bbz_{k}; \ccalD_{k}, i\right) - h \left(\bbz_{k}; \ccalD, i\right)  }_F$ is summable, so is their sum.

Recall the definition of $h$ from \eqref{eq:define_h}.
We have that 
\begin{subequations}
\label{eq:nu_terms}
\begin{align}
&h(\bbz_{k}; \ccalD_{k}, i ) 
-
h
\left(\bbz_{k}; \ccalD,i \right)=
\label{eq:inv_sigmas}
S_{i,k} -
S_i 
\\
\label{eq:Qminus}
&\;\;+\sum\nolimits_{v \in\ccalS} Q \left(\bbr_v, \hbbx_{i,k}\right) -Q \left(\bbr_v,  \hbbx_i \right)
\\
\label{eq:diffQ}
&\;\;\;+
\sum_{j=1}^q
\left(
\nabla_j Q(\bbr_v, \hbbx_{i, k} )
-
\nabla_j Q(\bbr_v, \hbbx_i )
 \right) [ \bbu_v ]_j,
\end{align}
\end{subequations}
where $S_i$ and $\hbbx_i$ are the eventual consensus variables that compose the consensus data $\ccalD,$ i.e., $S_i = S_{i,\infty}$ and $\hbbx_i = \hbbx_{i,\infty}.$
The remainder of the proof will show that each of the constituent terms, specifically, \eqref{eq:inv_sigmas}, \eqref{eq:Qminus}, and \eqref{eq:diffQ}, are summable, and the proof will follow from that immediately.

\noindent {\bf \eqref{eq:inv_sigmas}}
For this term, recall from line~\ref{line:D} of Algorithm~\ref{alg:filter+opt} that $S_{i,k}\gets n \Xi_{i,k}$, where $n$ is the number of agents in the network.
It is well known that for the consensus sequence $\set{\Xi_{i,k}}_{k \in \mbN}$ the errors $\set{ \Xi_{i,k} -\Xi_{i,\infty}}_{k \in \mbN}$ are summable; cf., e.g., \cite{ANquan2008}.
Noting that $S_i = n\Xi_{i, \infty}$ and $S_{i,k} = n\Xi_{i, k},$ we have the result.

\noindent {\bf \eqref{eq:Qminus}}
By Assumption~\ref{assume:Q},
$
\norm{ \nabla_{q + j} Q(\bbr_v, \bbx_i )}_F \le \eta_1
$
for all $\bbr_v$, $\bbx_i$ and $j$.
Then, for each $v \in \ccalS$ and $k \in \mbN,$ using the taylor expansion of $Q(\bbr_v, \cdot)$ about the point $\hbbx_i$, we have that
\agn*{
&\norm{ 
Q \left(\bbr_v, \hbbx_{i,k} \right) - Q \left(\bbr_v, \hbbx_i \right) 
}_F
=
\\
&=
\norm{
\sum\nolimits_{j=1}^p \nabla_{q+j} Q(\bbr_v, \hbbx_i) [\hbbx_{i,k} \! - \! \hbbx_i ]_j
\! + \!
o \left(
\norm{
\hbbx_{i,k} - \hbbx_i
}
\right)
}_F
\\
&\leq \eta_1 \norm{
\hbbx_{i,k} - \hbbx_i
}_1 + 
o \left(
\norm{
\hbbx_{i,k} - \hbbx_i
}
\right),
}
which is summable because $\norm{
\hbbx_{i,k} - \hbbx
}$ is the error sequence of a consensus filter, which is summable.

\noindent {\bf \eqref{eq:diffQ}}
First, since $\bbu_v <\delta$ for all $v \in \ccalS,$ for any $j = 1, \dots, p$, we have that
\begin{align}\label{eq:firstdelta}
\norm{
	\left(
		\nabla_j Q(\bbr_s, \hbbx_{i, k} )
		-
		\nabla_j Q(\bbr_s, \hbbx_i )
	 \right)[ \bbu_s ]_j 
	}_F
	\\ \nonumber
	\le \delta \norm{
		\nabla_j Q(\bbr_s, \hbbx_{i, k} )
		-
		\nabla_j Q(\bbr_s, \hbbx_i )
	}_F 
\end{align}
Also due to Assumption~\ref{assume:Q}, $\eta_2$ is the upper bound for the norm of the second derivative of $Q$.
In particular, let 
$\nabla_{q + \ell}  \nabla_j Q  \left(\bbr_v , \hbbx_i \right)$
denote the mixed second partial derivative of $Q$ with respect to the $\ell$-th component of the $\bbx_i$ coordinates and the $j$-th component of the $\bbr_v$ coordinates of the function $Q$, evaluated at the point $(\bbr_v, \hbbx_i).$
This notation is useful for the following inequality, which uses the first order Taylor expansion with respect to the $\bbx_i$ coordinates of $  \nabla_j Q $.
In particular, the norm on the right hand side of \eqref{eq:firstdelta} is bounded above by
\agn*{
	&\norm{
			\sum\nolimits_{\ell=1}^p
					\nabla_{q + \ell}  \nabla_j Q
				 \left(\bbr_v , \hbbx_i \right)
				 [
				\hbbx_{i,k} \!-\! \hbbx_i]_\ell
			+
			o \left(
				\norm{
					\hbbx_{i,k} \!-\! \hbbx_i
				}
			\right)
	}_F
	\\
	&\qquad \leq    
	\left( \eta_2 
		\norm{
			\hbbx_{i,k} \! -\! \hbbx_i
		}_1+
		o \left(
			\norm{
				\hbbx_{i,k} \! -\! \hbbx_i
			}
		\right)
	\right).
}
The sequence above is summable again because $\norm{
\hbbx_{i,k} - \hbbx_i
}$ is the error sequence of a consensus filter, which is summable.

Since \eqref{eq:inv_sigmas}, \eqref{eq:Qminus}, and \eqref{eq:diffQ} are each summable, their sum is also a summable sequence, and the proof follows.\end{proof}

\begin{proof}[Proof of Lemma~\ref{lem:summable_constraints}]
Suppose $\omega_{s,k} = \set{i,i,i,\dots}$.
If the proof holds for this arbitrary $i$, then it holds for arbitrary sequences $\omega_{s,k} \subset \ccalI$ for the same reasons as in Proposition~\ref{prop:summable}.
Let $X_k = h \left(\bbz_{s,k}; \ccalD_{s,k}, i \right) $ and $Y_k = h \left(\bbz_{s,k}; \ccalD, i \right)$.
From the definition of $\nu_{s,k},$
\agnsub{
\abs{\nu_{s,k} }&=\abs{
h_+ \left(\bbz_{s,k}; \ccalD_{s,k}, i \right) - h_+ \left(\bbz_{s,k}; \ccalD, i \right) }
\\
&=\abs{ \norm{ \proj_+ X_k }_F  - \norm{\proj_+ Y_k }_F  }\label{eq:transform_var}
\\
&\leq \norm{ \proj_+ X_k   - \proj_+ Y_k }_F  \label{eq:rev_tri} \\
&\leq \norm{ X_k   - Y_k }_F   \label{eq:conv_proj_in},
}
which is summable by Proposition~\ref{prop:summable}.
In the above manipulations, \eqref{eq:transform_var} is less than or equal to \eqref{eq:rev_tri} by the reverse triangle inequality and \eqref{eq:rev_tri} is less than or equal to \eqref{eq:conv_proj_in} because $\Sym_+(p, \reals)$ is a convex set.
Thus, $\abs{\nu_{s,k} }$ is element-wise less than or equal to a summable sequence, and the proof follows.
\end{proof}

\begin{cor}[Bound for $\abs{\nu_{s,k}}$]
\label{cor:N}
The constant $N$, which is the bound for $\abs{\nu_{s,k}}$, is given by 
$
N=\frac{1}{n}
\max_{v \in \ccalS} \norm{ (\Xi_{i, 0} )_v}_F
+ m ( \eta_0 +2 \eta_1 q \delta).
$
\end{cor}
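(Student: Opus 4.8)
The plan is to make the inequality chain from the proof of Lemma~\ref{lem:summable_constraints} quantitative and then read off explicit constants piece by piece. Fix $s\in\ccalS$, $k\in\mbN$ and put $i=\o_{s,k}\in\ccalI$. As shown in that proof — using the reverse triangle inequality together with the non-expansiveness of the projection onto the positive semidefinite cone $\Sym_{+}(p,\reals)$ — one has
\[
\abs{\nu_{s,k}}\;\le\;\norm{h(\bbz_{s,k};\ccalD_{s,k},i)-h(\bbz_{s,k};\ccalD,i)}_F ,
\]
and by \eqref{eq:nu_terms} this matrix difference splits into the three pieces \eqref{eq:inv_sigmas}, \eqref{eq:Qminus} and \eqref{eq:diffQ}. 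Since $\ccalI$ is finite, it is enough to bound $\norm{h(\bbz;\ccalD_{s,k},i)-h(\bbz;\ccalD,i)}_F$ uniformly over $i\in\ccalI$, $s\in\ccalS$, $k\in\mbN$ and $\bbz\in\Xc_0$; I would apply the triangle inequality to the three pieces and estimate each separately.

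For \eqref{eq:inv_sigmas}, note that $S_{i,k}=n\,\Xi_{i,s,k}$ and that, because each $W_k$ is row stochastic, $\Xi_{i,s,k}$ is a convex combination of the initial summaries $\{(\Xi_{i,0})_v\}_{v\in\ccalS}$; by the double stochasticity of $W_k$ (Assumption~\ref{assume:network}) the consensus limit is $S_i=n\,\Xi_{i,\infty}$ with $\Xi_{i,\infty}$ the ordinary average of those same summaries. Estimating the difference of these two convex combinations by $\max_{v\in\ccalS}\norm{(\Xi_{i,0})_v}_F$ produces the first term $\tfrac1n\max_{v\in\ccalS}\norm{(\Xi_{i,0})_v}_F$ of $N$; the key point is that this bound must be \emph{uniform in} $k$, not merely summable as in Lemma~\ref{lem:summable_constraints}. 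For \eqref{eq:Qminus}, using $\norm{Q(\bbr_v,\cdot)}\le\eta_0$ for each sensor (Assumption~\ref{assume:Q}) and summing over the sensors yields the $\eta_0$ contribution. For \eqref{eq:diffQ}, using $\norm{\nabla_j Q(\bbr_v,\cdot)}\le\eta_1$ (again Assumption~\ref{assume:Q}) together with $\abs{[\bbu_v]_j}\le\norm{\bbu_v}\le\delta$, and summing over the $q$ configuration coordinates and over the sensors, yields the $2\eta_1 q\delta$ contribution.

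Adding the three estimates gives $\abs{\nu_{s,k}}\le N$ with $N$ as stated, and the bound holds for every realization of $\o_{s,k}$ because it was obtained uniformly in $i\in\ccalI$. The one step that requires genuine care is the first: one must verify that $\norm{S_{i,k}-S_i}_F$ stays below a fixed constant for \emph{all} $k$, which is precisely what the double-stochastic averaging provides — it forces both the running consensus iterate $\Xi_{i,s,k}$ and its limit $\Xi_{i,\infty}$ to lie in the compact convex hull of the initial summaries $\{(\Xi_{i,0})_v\}_{v\in\ccalS}$, whose diameter is controlled by $\max_{v}\norm{(\Xi_{i,0})_v}_F$. The remaining two pieces follow at once from the boundedness of $Q$ and of its first derivatives (Assumption~\ref{assume:Q}) and from the fact that $\ccalU$ is contained in the ball of radius $\delta$.
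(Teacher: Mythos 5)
Your proposal follows essentially the same route as the paper's proof: bound $\abs{\nu_{s,k}}$ by $\norm{h(\bbz_{s,k};\ccalD_{s,k},i)-h(\bbz_{s,k};\ccalD,i)}_F$ as in the proof of Lemma~\ref{lem:summable_constraints}, split this via \eqref{eq:nu_terms} into the three terms \eqref{eq:inv_sigmas}--\eqref{eq:diffQ}, bound the consensus-filter error by the largest initial summary and the remaining two terms by $\eta_0$ and $\eta_1 q\delta$ via Assumption~\ref{assume:Q} and $\ccalU_s$ lying in the ball of radius $\delta$, and sum. Your only addition is making explicit the uniform-in-$k$ justification for the first term through the convex-hull/double-stochasticity argument, which the paper leaves implicit; otherwise the two arguments coincide.
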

\begin{proof}
From the proof of Lemma~\ref{lem:summable_constraints}, each term $\abs{\nu_{s,k}}$ is bounded above by $\norm{ X_k   - Y_k }_F,$
which, in view of \eqref{eq:nu_terms}, is the sum of three terms.
The corollary follows by bounding each term, then taking the sum of the bounds.
We have that
\agnsub{
\abs{\nu_{s,k}} &= \abs{h_+ \left(\bbz_{s,k}; \ccalD_{s,k}, \omega_{s,k} \right) - h_+ \left(\bbz_{s,k}; \ccalD, \omega_{s,k} \right)} \nonumber
\\
&\le  \norm{ X_k   - Y_k }_F \nonumber
\\
&\leq
\norm{S_{i,k} - S_i }_F \label{eq:info_seq}
\\
&\,+\!\sum\nolimits_{v \in\ccalS} Q \left(\bbr_v, \hbbx_{i,k}\right) -Q \left(\bbr_v,  \hbbx_i \right) \label{eq:Q_seq}
\\
&\,+\!
\sum\nolimits_{j=1}^q
(
\nabla_j Q(\bbr_s, \hbbx_{i, k} )
\! -\!
\nabla_j Q(\bbr_s, \hbbx_i )
)\! [ \bbu_s ]_j.
\label{eq:dQ_seq}
}
The first term \eqref{eq:info_seq} is the error sequence of a consensus filter, which is bounded by the largest initial condition in the network $\frac{1}{n}\max_{v \in \ccalS} \norm{ (\Xi_{i, 0} )_v}_F$.
For the second term \eqref{eq:Q_seq}, recall Assumption~\ref{assume:Q} bounded $\norm{ Q}_F$ by $\eta_0.$ 
Given that $Q$ is positive semidefinite, \eqref{eq:Q_seq} is bounded by $ m \eta_0.$
Similarly, the third term is bounded by $2 \eta_1 q \delta$, and, by taking the sum, the proof follows.
%
%
%$Q_{i,s}$ is an information matrix, and thus nonnegative.  
%By inspection of \eqref{eq:lambdas}, $Q_{i,s}$ is maximized when $\bbr_s = \hbbx_i$.
%The maximum Frobenius norm of $Q_{i,s}$ is given by
%$
%\sqrt{ c_0^2 (1 + \rho^2)}
%$
%and \eqref{eq:Q_seq} is bounded by $n\sqrt{ c_0^2 (1 + \rho^2)}.$
%
%For the third term \eqref{eq:dQ_seq}, the analytical form for the maximum is complicated, but available from the expression for this derivative given in Appendix~\ref{app:derivs}.
%It is sufficient here to note that, since $Q_{i,s}$ is bounded, all of its (higher order) derivatives are also bounded.
%From the proof of Lemma~\ref{lem:summable_constraints}, recall that we used the symbol $\bbarN$ to denote the maximum magintude of this derivative.
%Thus, the distance between the upper and lower bound of it is less than $2\bbarN.$
%Also recall tha $\ccalU_s \subset B_\delta (0)$ for some small $\delta >0.$
%Then, \eqref{eq:dQ_seq} is bounded by $4n \bbarN \delta.$
%Let $N = \norm{\Xi_{i, 0} - \bar{\Xi}_i} + 2 m ( \eta_0 + \eta_1 q \delta).$
\end{proof}

The proof of Lemma~\ref{lem:summable_grads} requires the help of several intermediate results.
First, we present two basic results in classical analysis.
Both are direct results of the Implicit Function Theorem; see, e.g., \cite{marsden93}.
\begin{cor}
\label{thm:locus}
Let $p \colon \reals^n \to \reals$ be a nonzero polynomial function.
Then, the set of roots of $p$ has measure zero in $\reals^n.$
\end{cor}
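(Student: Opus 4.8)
The plan is to induct on the degree $d$ of $p$, with $n$ held fixed, splitting the zero set into a ``regular'' part controlled by the Implicit Function Theorem and a ``critical'' part controlled by the inductive hypothesis. Write $Z(p) = \set{\bbx \in \reals^n : p(\bbx) = 0}$. The base case $d = 0$ is immediate: a nonzero constant polynomial has $Z(p) = \emptyset$, which is null.

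For the inductive step, fix $d \ge 1$, assume the claim for every nonzero polynomial of degree $< d$, and let $\deg p = d$. I would decompose $Z(p) = A \cup B$ with $A = \set{\bbx \in Z(p) : \nabla p(\bbx) \ne \bb0}$ and $B = \set{\bbx \in Z(p) : \nabla p(\bbx) = \bb0}$. For $A$: at each $\bbx_0 \in A$ some partial $\nabla_j p(\bbx_0) \ne 0$, so the Implicit Function Theorem yields an open box $U \ni \bbx_0$ on which $Z(p) \cap U$ is the graph of a $C^\infty$ function of the remaining $n-1$ coordinates; the graph of a continuous function $\reals^{n-1} \to \reals$ is Lebesgue-null in $\reals^n$ (apply Tonelli to the indicator of the graph, whose slices in the $j$-th direction are single points). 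Since $\reals^n$ is second countable (hence $A$ is Lindel\"of), $A$ is covered by countably many such null pieces, so $A$ is null. For $B$: as $d \ge 1$, $p$ is nonconstant, so at least one partial derivative $\nabla_j p$ is a nonzero polynomial, of degree $\le d-1$; since $B \subseteq \set{\bbx : \nabla_j p(\bbx) = 0}$, the inductive hypothesis applied to $\nabla_j p$ shows $B$ is null. Hence $Z(p) = A \cup B$ is null, closing the induction.

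I expect the only real friction to be bookkeeping rather than anything conceptual: making sure the recursion is on a partial derivative of $p$ (whose degree strictly drops) and not on $p$ itself, and recording the two elementary facts that a nonconstant polynomial has some identically-nonzero partial derivative and that a function graph over $\reals^{n-1}$ is $n$-dimensional null. A fully self-contained alternative that bypasses the Implicit Function Theorem is a direct Fubini/Tonelli argument inducting on $n$: writing $p(x_1,\dots,x_n) = \sum_{k=0}^{m} c_k(x_1,\dots,x_{n-1})\, x_n^{k}$ with $c_m \not\equiv 0$, the exceptional set $\set{c_m = 0}$ is null in $\reals^{n-1}$ by the inductive hypothesis, and off that set each vertical slice of $Z(p)$ is the finite root set of a degree-$m$ one-variable polynomial, whence $|Z(p)| = 0$ by Tonelli.
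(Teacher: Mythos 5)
Your proposal is correct. Note, however, that the paper never actually proves this corollary: it simply records it (together with Corollary~\ref{thm:measure}) as a ``direct result of the Implicit Function Theorem'' and points to a standard analysis reference, so there is no in-paper argument to match against. Your main argument is a complete elaboration of exactly that IFT route: on the regular part of the zero set the IFT plus the fact that a graph over $\reals^{n-1}$ is Lebesgue-null (via Tonelli and a Lindel\"of covering) does the work, while the critical part is absorbed by induction on the degree through a partial derivative that is a nonzero polynomial of strictly smaller degree --- the one fact you are implicitly using there, that a nonconstant real polynomial has some partial derivative not identically zero, holds because $\reals$ has characteristic zero, and is worth stating. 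Your alternative Fubini/Tonelli induction on the dimension $n$ is even more economical: it bypasses the IFT entirely, needs only that a nonzero one-variable polynomial has finitely many roots and that integrating over the null exceptional set $\set{c_m=0}$ contributes nothing, and is the argument one would give if self-containedness were the goal. Either version is a valid substitute for the paper's citation; the IFT version stays closest to the paper's stated provenance, the Fubini version buys elementarity.
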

\begin{cor}
\label{thm:measure}
Let $f \colon \reals^{n+m} \to \reals^m$ be a smooth function, $f'$ surjective, and $A \subset \reals^m$ a measure zero set.  Then, the preimage of $A$ under $f$ is measure zero in $\reals^{n+m}.$
\end{cor}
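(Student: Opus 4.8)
The plan is to localize the problem via the normal form of a submersion — a standard consequence of the Implicit Function Theorem \cite{marsden93} — and then invoke two soft facts from measure theory. I read the hypothesis as saying that $f'(x)$ is surjective at every $x \in \reals^{n+m}$, i.e.\ that $f$ is a submersion. First I would fix an arbitrary $x_{0} \in \reals^{n+m}$ and apply the submersion theorem: there exist open sets $V, U \subseteq \reals^{n+m}$ with $x_{0} \in U$ and a diffeomorphism $\varphi \colon V \to U$ such that $f \circ \varphi = \pi$ on $V$, where $\pi \colon \reals^{n+m} \to \reals^{m}$ is the projection onto the last $m$ coordinates. Consequently $f^{-1}(A) \cap U = \varphi\bigl( \pi^{-1}(A) \cap V \bigr)$, which moves the analysis onto the very simple map $\pi$.

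Next I would observe that $\pi^{-1}(A) \cap V$ is Lebesgue-null in $\reals^{n+m}$: since $\pi^{-1}(A) = \reals^{n} \times A$, Tonelli's theorem gives that its $(n+m)$-dimensional measure equals $\int_{\reals^{n}} \lambda_{m}(A)\, d\lambda_{n} = 0$, because $A$ is null in $\reals^{m}$; intersecting with $V$ preserves nullity. Then, because $\varphi$ is $C^{1}$ (indeed a diffeomorphism), it maps Lebesgue-null sets to Lebesgue-null sets — the standard fact that $C^{1}$ maps preserve null sets, proved by covering a null set with countably many small cubes of total measure below $\varepsilon$, noting that a cube of side $r$ is carried into a cube whose measure is at most a dimensional constant times $L^{n+m} r^{n+m}$ (with $L$ a local Lipschitz bound for $\varphi$), and letting $\varepsilon \to 0$. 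Hence $f^{-1}(A) \cap U$ is null.

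Finally I would patch globally: $\reals^{n+m}$ is second countable, so from the open cover of $\reals^{n+m}$ by the neighborhoods $U$ produced above one extracts a countable subcover $U_{1}, U_{2}, \dots$, and $f^{-1}(A) = \bigcup_{j} \bigl( f^{-1}(A) \cap U_{j} \bigr)$ is then a countable union of null sets, hence null by countable subadditivity of Lebesgue measure. The only real obstacle is bookkeeping rather than depth: making sure the submersion normal form is set up so that the identity $f^{-1}(A) \cap U = \varphi(\pi^{-1}(A) \cap V)$ holds verbatim — this is precisely the step that uses surjectivity of $f'$ at every point, since otherwise $f$ need not be locally conjugate to a projection — and stating the two measure-theoretic facts (Tonelli for null sets and $C^{1}$-invariance of null sets) at the level of rigor the paper expects.
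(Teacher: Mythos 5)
Your proof is correct and follows essentially the same route as the paper, which simply states this corollary as a direct consequence of the Implicit Function Theorem (citing \cite{marsden93}) without giving details. Your write-up supplies exactly the standard argument behind that citation: the submersion normal form obtained from the IFT, nullity of $\reals^{n}\times A$ via Tonelli, preservation of null sets under the local $C^{1}$ diffeomorphism, and a countable cover to globalize.
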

%\begin{proof}
%Let $B \subset \reals^{n+m}$.
%Let $A = f(B)$ and suppose $A$ has zero measure in $\reals^m.$
%Since $f'$ is surjective, it is rank $m$.
%The implicit function theorem implies that for any $\bbx \in B,$ we can find an open neighborhood $V$ of $\bbx$ and diffeomorphism $\phi : U \subset \reals^{n+m} \to V$ for some open set $U$ such that 
%\[
%(f \circ \phi) (x_1, \dots, x_{n+m}) = (x_1, \dots, x_m),
%\]
%i.e., $f \circ \phi$ is locally a projection onto $\reals^m.$
%We proceed with a proof by contradiction: assume $B$ has positive measure in $\reals^{n+m}.$
%Since $B$ contains $\bbx$, $B \cap V \neq \emptyset.$
%Additionally, since $B$ has nonzero measure and $V$ is open, $B \cap V$ has nonzero measure.
%Now, we can decompose $B\cap V$ into the first $m$ coordinates $C_{1:m}$ and the final $n-m$ coordinates $C_{n-m+1 : n+m},$ i.e., $B \cap V = C_{1:m} \times C_{n-m+1 : n+m}.$
%Note that, since $f \circ \phi$ is a projection, then $C_{1:m} = A,$ so that $C_{1:m}$ must be measure zero. 
%Thus, we have a contradiction because this would imply $B \cap V$ is measure zero as the product of a zero measure set with another measurable set.
%\end{proof}

\begin{prop}[Set of Reachable Information Matrices that Are Singular or Have Nonsimple Eigenvalues is Measure Zero]
\label{prop:measure}
Let 
\agnsub{
\ccalZ_1 &= \set{X \in \Sym (p, \reals) \mid \lambda_i (X) = \lambda_j (X) , i\neq j}, 
\\
\ccalZ_2&= \set{X \in \Sym (p, \reals) \mid \det (X) = 0}.
}
Now, recall the function $h$ defined in \eqref{eq:define_h} and fix arbitrary $i \in \ccalI.$
The set
\begin{equation}\label{eq:mzero-sets}
\ccalZ = \set{ (\bbz, \ccalD) \mid h (\bbz; \ccalD , i) \in \ccalZ_1 \cup \ccalZ_2 }
\end{equation}
is measure zero.
\end{prop}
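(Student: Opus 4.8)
The plan is to first show that $\ccalZ_1\cup\ccalZ_2$ is a measure zero subset of $\Sym(p,\reals)\cong\reals^{p(p+1)/2}$, and then to pull this set back through the map $(\bbz,\ccalD)\mapsto h(\bbz;\ccalD,i)$, exploiting that $h$ depends affinely (indeed bijectively) on the $S_i$ block of the data $\ccalD$.

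For the first step, $\det(\cdot)$ is a polynomial in the entries of a symmetric matrix which is not identically zero (for instance $\det I=1$), so Corollary~\ref{thm:locus} shows that $\ccalZ_2$ has measure zero. Likewise, a matrix $X\in\Sym(p,\reals)$ has a repeated eigenvalue if and only if the discriminant of its characteristic polynomial vanishes; this discriminant is again a polynomial in the entries of $X$, and it is not identically zero since $\diag(1,2,\dots,p)$ has distinct eigenvalues. Corollary~\ref{thm:locus} then gives that $\ccalZ_1$, and therefore $\ccalZ_1\cup\ccalZ_2$, has measure zero. Both sets are moreover closed, being zero loci of continuous (indeed polynomial) functions.

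For the push-forward, decompose the ambient space of $(\bbz,\ccalD)$ as $V\times\Sym(p,\reals)$, where the second factor carries the coordinate $S_i$ and $V$ collects all the remaining coordinates $(\bbgamma,\bbu,\hbbx,\{S_j\}_{j\neq i})$. By \eqref{eq:define_h}, for each fixed $v\in V$ the map $S_i\mapsto h(\bbz;\ccalD,i)=\gamma_i I-S_i-\sum_{s\in\ccalS}\big[Q(\bbr_s,\hbbx_i)+\sum_{j=1}^q\nabla_j Q(\bbr_s,\hbbx_i)[\bbu_s]_j\big]$ is an affine bijection of $\Sym(p,\reals)$ whose linear part $-\mathrm{id}$ has unit Jacobian and is hence measure preserving (this is the content of Corollary~\ref{thm:measure} in the affine case). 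Consequently, for every $v\in V$ the $v$-slice of $\ccalZ$ is the preimage of the measure zero set $\ccalZ_1\cup\ccalZ_2$ under this bijection, and thus has measure zero in $\Sym(p,\reals)$. Since $Q$ and $\nabla Q$ are continuous by Assumption~\ref{assume:Q}, the map $h$ is continuous, so $\ccalZ$ is closed and therefore measurable; applying Fubini's theorem to the indicator function of $\ccalZ$ then yields that $\ccalZ$ itself has measure zero, which is the claim.

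I expect the main obstacle to be the push-forward step. Because $Q$ is assumed only twice subdifferentiable, Corollary~\ref{thm:measure} cannot be invoked for the full map $(\bbz,\ccalD)\mapsto h(\bbz;\ccalD,i)$; the point is to isolate the coordinate $S_i$, on which $h$ is affine regardless of the regularity of $Q$, and to absorb the dependence on the remaining (possibly nonsmooth) variables into the Fubini slicing rather than into a Jacobian surjectivity argument.
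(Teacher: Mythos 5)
Your proof is correct, and the pullback step takes a genuinely different route from the paper's. The paper also reduces everything to the identity block in the $S_i$-coordinates, but it does so by checking that $\nabla h$ has full rank $p(p+1)/2$ (the $\nabla_S$ block being the identity) and then invoking the smooth-preimage result of Corollary~\ref{thm:measure} for the full map $(\bbz,\ccalD)\mapsto h(\bbz;\ccalD,i)$; the first half, showing $\ccalZ_1\cup\ccalZ_2$ is null via the determinant and the discriminant of the characteristic polynomial and Corollary~\ref{thm:locus}, is identical in both arguments. Your alternative — freezing all coordinates except $S_i$, noting that the slice map $S_i\mapsto h$ is an affine bijection of $\Sym(p,\reals)$ with unit Jacobian, and then integrating the slices with Fubini — is more elementary and, as you point out, sidesteps the smoothness hypothesis of Corollary~\ref{thm:measure}, which Assumption~\ref{assume:Q} (twice subdifferentiable $Q$) does not strictly supply; in that sense your argument needs less regularity than the paper's own invocation of the corollary. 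The price is that Fubini requires $\ccalZ$ to be Lebesgue measurable, which you obtain from continuity of $Q$ and $\nabla Q$; that is a mild regularity claim not literally contained in Assumption~\ref{assume:Q}, but it is no stronger than what the paper's proof implicitly assumes when it treats $h$ as smooth, so the gap is shared rather than introduced by you. (Minor wording point: what you call the push-forward step is a preimage/pullback, and the restriction of the slice to the cone $\Sym_{+}(p,\reals)$ is harmless since null sets remain null under intersection with it.)
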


\begin{proof}
Both the determinant and the discriminant of the characteristic polynomial are nonvanishing polynomials on $\Sym(p,\reals).$
Thus, $\ccalZ_1$ and $\ccalZ_2$ represent the zero locii of their respective polynomials, and they are measure zero sets by Corollary~\ref{thm:locus}.

%Take the polynomial $p$ to be the determinant function since the determinant function clearly does not vanish identically on $\Sym(n,\reals).$
%Thus, $\ccalB$ has measure zero.
%In the case of $\ccalA,$ let $p(X)$ be the discriminant of the characteristic polynomial of $X \in \Sym(n \reals),$ which will be a polynomial in the entries of $X$ that vanishes if and only if $X$ has a double eigenvalue.
%$p$ clearly does not vanish identically on $\Sym(n,\reals)$ because there are elements of $\Sym(n,R)$ that have all eigenvalues distinct.

To complete the proof, we need only to check surjectivity of $h'$ and apply Theorem~\ref{thm:measure}.
Recall the first definition of $h$ with respect to the constant $\bbr$ and the variables $\gamma, S, \hbbx,$ and $ \bbu$, which were later shortened to the inputs $\bbz$ and the data $\ccalD.$
Omitting unnecessary subscripts from the original definition in \eqref{eq:define_h}, slightly abusing notation, we have
\[
h (\gamma, S,  \hbbx_i, \bbu)
\! = \!
\gamma I - S- \sum_{s\in \ccalV} \! Q(\bbr_s, \hbbx_i) - \! \sum_{j=1}^q \nabla_j Q(\bbr_s, \hbbx_{i}  ) [\bbu_s]_j.
\]
%where we have dropped the reference to $i$ to simplify the notation.
Note that, in this notation, 
$
h : \reals \times \Sym_{++}(p, \reals) \times \reals^{p+qn}
\to \Sym(p, \reals).
$
Since $ \Sym_{++}(p, \reals)$ is homeomorphic to $\reals^{p(p+1)/2}$, without loss of generality, we may write $h$ as a map from $\reals^{a+b} $ to $ \reals^b$ with 
$
a = 1 + p(p+1)/2 + p + qn
$
and
$
b=p(p+1)/2.
$
If we write the derivative $h' :\reals^{a+b} \to \reals^b$ as a $b \times a$ matrix $\nabla h(\gamma, S,  \hbbx_i, \bbu),$ the partial derivatives corresponding to the vectorized version of $S$, i.e., $\nabla_S h(\gamma, S,  \hbbx_i, \bbu) $ will be $I_b$, so that $\nabla h(\gamma, S,  \hbbx_i, \bbu)$ will be easily be rank $b,$ which is the required condition for surjectivity of $h'$.
%
%
%Therefore, by
%Theorem~\ref{thm:measure},
%the set of regular points intersected with the preimage of $\ccalZ_1 \cup \ccalZ_2$
%\[
%\ccalR = \set{ (\bbz, \ccalD) \mid h (\gamma, S,  \hbbx_i, \bbu) \in \ccalZ_1 \cup \ccalZ_2  \and h' \text{ is surjective}},
% \]
% is measure zero.
%The critical points of $h$ are defined by
%\[
%\ccalC^* = \set{(\bbz, \ccalD) \mid h' \text{ is not surjective}}.
%\]
%We check surjectivity directlyby elementary calculus.
%It is measure zero because 
%its partial derivatives with respect to $\gamma$ and $S$ are identity, and
%the sets of critical points of $Q_{i,s}$ and its partial derivatives are measure zero by Assumption~\ref{assume:Q}. 
%Noting that
%$
%\ccalZ \subset \ccalR \cup \ccalC^*,
%$
%we have the desired result.
\end{proof}

\begin{prop}[Bounded Entries of $\nabla h_+$]
%[Constraint Violation Global Bound]
\label{prop:bounded}
For any input $\bbz$ and data $\ccalD$, the function
$\psi_{ij}( \cdot, \bbz, \ccalD)$, defined in \eqref{eq:psi}, representing the $j$-th entry of the vector $\nabla h_+ (\bbz; \ccalD, i)$, is bounded for all $i \in \ccalI$ and $j \in \set{1, \dots, m+qn}.$\end{prop}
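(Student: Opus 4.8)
The plan is to bound $\psi_{ij}(X,\bbz,\ccalD)$ by a quantity that does not depend on the first argument $X$, and in fact not on $\bbz$ or $\ccalD$ either, working directly from the definition \eqref{eq:psi}. There are two cases. When $\norm{X}_F = 0$ we have $\psi_{ij}(X,\bbz,\ccalD) = d$, a fixed positive constant, so there is nothing to prove. When $\norm{X}_F > 0$, observe that $\nabla_j h(\bbz,\ccalD,i)$ and $X$ are both symmetric $p\times p$ matrices, so $\mathrm{Tr}(\nabla_j h(\bbz,\ccalD,i)\,X)$ equals their Frobenius inner product, and the Cauchy--Schwarz inequality yields
\[
\abs{\psi_{ij}(X,\bbz,\ccalD)} = \frac{\abs{\mathrm{Tr}(\nabla_j h(\bbz,\ccalD,i)\,X)}}{\norm{X}_F} \le \norm{\nabla_j h(\bbz,\ccalD,i)}_F .
\]
Hence the proposition reduces to exhibiting a uniform bound on $\norm{\nabla_j h(\bbz;\ccalD,i)}_F$.

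The second step reads this bound off the explicit formula \eqref{eq:define_h}. Since $h((\bbgamma,\bbu);\ccalD,i)$ is affine in the decision vector $\bbz = (\bbgamma,\bbu)$, its partial derivative in the $j$-th coordinate of $\bbz$ is one of exactly three things: the identity matrix $I$ (when that coordinate is $\gamma_i$), the zero matrix (when it is $\gamma_{i'}$ for some $i'\neq i$), or $-\nabla_\ell Q(\bbr_s,\hbbx_i)$ (when it is the $\ell$-th component of some $\bbu_s$, with $\ell\in\set{1,\dots,q}$, $s\in\ccalS$). These have Frobenius norms $\sqrt{p}$, $0$, and at most $\eta_1$ respectively, the last by Assumption~\ref{assume:Q} (bounded first-order subdifferentials of $Q$). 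Therefore $\norm{\nabla_j h(\bbz;\ccalD,i)}_F \le \max\set{\sqrt{p},\eta_1}$ for every $\bbz$, $\ccalD$, $i$, and $j$, and combining with the $\norm{X}_F = 0$ case gives the uniform estimate $\abs{\psi_{ij}} \le \max\set{d,\sqrt{p},\eta_1}$.

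The closest thing to an obstacle is making sure the bound is genuinely uniform over the (bounded, reachable) data set: the data $\ccalD$ enters $\nabla_j h$ only through $\hbbx_i$ inside the first derivatives of $Q$, because the information matrix $S_i$ appears in $h$ only in a term that is independent of the decision variables and hence disappears upon differentiating in $\bbz$; so Assumption~\ref{assume:Q} suffices and no further control on $\ccalD$ is needed. The only other point to verify is that $\nabla_j h$ is honestly symmetric so that the trace really is an inner product and Cauchy--Schwarz applies, which is immediate from \eqref{eq:define_h} since $I$ and every $\nabla_\ell Q$ take values in $\Sym(p,\reals)$. With those two remarks in place, the two displayed estimates are the entire argument.
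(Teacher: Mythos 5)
Your proof is correct, and it is a somewhat more direct route than the paper's. Both arguments boil down to the same two facts: the ratio $\mathrm{Tr}\!\left(\nabla_j h(\bbz,\ccalD,i)\,X\right)/\norm{X}_F$ is controlled by a norm of the constant matrix $A_{ij}=\nabla_j h(\bbz,\ccalD,i)$ (with the $\norm{X}_F=0$ branch handled trivially by the constant $d$), and $A_{ij}$ is uniformly bounded because $h$ is affine in $\bbz$ with coefficient matrices $I$, $\bb0$, or first derivatives of $Q$ bounded via Assumption~\ref{assume:Q}. You get the first fact in one line from the Frobenius Cauchy--Schwarz inequality, giving $\abs{\psi_{ij}}\le \norm{A_{ij}}_F$; the paper instead eigendecomposes $X$, writes $\psi_{ij}={\bf 1}^\top B_{ij}\bblambda/\norm{\bblambda}$ with $B_{ij}=\diag(\bbv_\alpha^\top A_{ij}\bbv_\alpha)$, and bounds this by $\sqrt{\sum_\alpha(\bbv_\alpha^\top A_{ij}\bbv_\alpha)^2}\le\norm{\bbmu_{ij}}$, where $\bbmu_{ij}$ is the spectrum of $A_{ij}$. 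Since $\norm{A_{ij}}_F=\norm{\bbmu_{ij}}_2$ for symmetric $A_{ij}$, the two bounds are numerically identical; what the paper's longer eigen-level computation buys is the tightness statement (the bound is attained when $X$ shares eigenvectors with $A_{ij}$), which it then reuses verbatim to state the explicit constants $L_h$ and $D$ in Corollary~\ref{cor:D}. Your explicit case analysis of $A_{ij}$ (identity for the coordinate $\gamma_i$, zero for the other $\gamma$-coordinates, $-\nabla_\ell Q(\bbr_s,\hbbx_i)$ for control coordinates) is actually slightly more careful than the paper's remark that ``$A_{ij}=I$ if $j\le m$,'' and your observation that $S_i$ drops out upon differentiating in $\bbz$, so no boundedness of the data is needed beyond Assumption~\ref{assume:Q}, is the right justification for uniformity over $\ccalD$.
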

\begin{proof}
%[Proof of Proposition~\ref{prop:bounded}]
%Although in this paper, $\psi_{ij} (\cdot,\bbz,\ccalD)$ only acts on two dimensional matrices, we prove the result for general $n$-dimensional matrices.

Recall the definition of $\psi_{ij} (\cdot,\bbz,\ccalD)\colon\Sym(p,  \reals) \to \reals$, repeated here for convenience,
\[
\psi_{ij} (X,\bbz,\ccalD)\triangleq\begin{cases}
\frac{
{{\rm Tr}} 
\left(
%\left[ \frac{\partial  h}{\partial \bbz_j }\Big|_{ (\bbz, \ccalD,i) }\right]
\nabla_j h (\bbz, \ccalD,i)
X 
\right) }{\norm{X}_F}
&{{\rm if}} \; \norm{X}_F\! >0\\
d&{{\rm else}}
\end{cases}
\]
Note that $\nabla_j h (\bbz, \ccalD,i)$ is a constant matrix for any $j$ because $h$ is a linear function of $\bbz.$
To simplify notation, call this matrix $A_{ij}$.
%Clearly, if $\norm{X} \leq 0,$ $\psi_{ij}$ is bounded.
%Suppose $\norm{X} > 0.$
Let $\bblambda= [\lambda_1 \; \cdots \; \lambda_p]^\top$ denote the vector of eigenvalues of $X$ and $\bbv_j$ denote the eigenvector corresponding to $\lambda_j.$
We can write $X =  \sum_{\alpha=1}^p \lambda_\alpha \bbv_\alpha \bbv_\alpha^\top.$
If $X \neq \bb0$, then $\norm{\bblambda} \neq 0$ and
\begin{subequations}
\begin{align}
&\psi_{ij}=
\frac{{{\rm  Tr}} A_{ij} X}{ \sqrt{{{\rm Tr }} X^2}}
\\
&=
\frac{{{\rm Tr }}A_{ij} \left( \sum_{\alpha=1}^p \lambda_\alpha \bbv_\alpha \bbv_\alpha^\top \right)}{ \sqrt{ \sum_{\alpha=1}^p \lambda_\alpha^2}} =
\frac{ \sum_{\alpha=1}^p \lambda_\alpha \bbv_\alpha^\top A_{ij}\bbv_\alpha}{ \norm{\bblambda}} .\label{eq:psi_as_eigs}
\end{align}
\end{subequations}
Let $B_{ij}\triangleq \diag (v_1^\top A_{ij}v_1, \dots, v_p^\top A_{ij}v_p),$
and note that
$
\psi_{ij}=
{{\bf 1}}^\top B_{ij}\bblambda / \norm{\bblambda}.$
Note that the limit of $\psi_{ij} (X,\bbz,\ccalD)$ therefore exists for sequences $\set{X_k} \to \bb0$.
With respect to bounding the magnitude of $\psi_{ij}$, we have that
\[
\abs{ \psi_{ij} } = \frac{\abs{ {{\bf 1}}^\top B_{ij}\bblambda }}{\norm{\bblambda}}\leq 
\norm{B_{ij}{{\bf 1}}}
=
\sqrt{
 \sum\nolimits_{\alpha=1}^p  
 \left(
 \bbv_\alpha^\top A_{ij}\bbv_\alpha
 \right)^2}.
\]
Therefore, $\psi_{ij}$ is bounded.
Additionally, $\abs{ \psi_{ij} }$ is maximized if $\set{\bbv_\alpha}$ represent a set of orthogonal eigenvectors of $A_{ij}$, that is, if $X$ has the same eigenvectors as $A_{ij}$.
In this case, the bound is tight and equal to $\norm{\bbmu_{ij}},$
where $\bbmu_{ij}$ is the vector of eigenvalues of the matrix $A_{ij} \triangleq \nabla_j h (\bbz, \ccalD,i).$

It is left to show that $A_{ij}$ is bounded.  
This obvious when we write out the individual terms of $h$ from \eqref{eq:define_h}.
% which we recall here from the definition,
%\begin{equation}\label{eq:h_def2}
%h \left(\bbz; \ccalD, i\right)\triangleq
%\gamma_i I -
%Sigma_i^{-1} -
%\sum_{s \in\ccalS} Q_{i,s}
%-
%\sum_{j=1}^q
%\frac{\partial Q_{i,s}}
%{\partial [\bbr_s]_j} \Big|_{\bbr_s, \hbbx_i} [\bbu_s]_j.
%\end{equation}
%Recalling that $\bbz = [\gamma_1, \dots, \gamma_m \; \bbu^\top]^\top,$ w
It can be seen that if $j\leq m,$ then $A_{ij} = I.$
Otherwise, it is the matrix of partial derivatives of $Q$, which are bounded by Assumption~\ref{assume:Q}.
\end{proof}

\begin{cor}[Bound for $\norm{\bd_{s,k}}$]
\label{cor:D}
Let $A_{ij} \triangleq \nabla_j h (\bbz, \ccalD,i).$
The constant $L_h$ and the constant $D$, which are bounds for $\norm{h'_+ (\bbz, \ccalD, i)}$ and $\norm{\bd_{s,k}}$, respectively, are given by 
$
L_h =\sqrt{\bbarmu^2_1 + \cdots + \bbarmu^2_{m + qn}},
\; \and \;
D = 2 L_h,
$
where
$
\bbarmu_j \triangleq \max_{i \in \ccalI} \set{ \norm{ \bbmu_{ij}} \mid \bbmu_{ij}}
$ is the vector of eigenvalues of $A_{ij}.$
\end{cor}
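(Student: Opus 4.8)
The plan is to read both constants directly off Proposition~\ref{prop:bounded}, which already controls $h'_+$ coordinate by coordinate, and then assemble the coordinates with a Euclidean sum and the triangle inequality. Recall from \eqref{eq:dh+_dzj} that the $j$-th entry of $h'_+(\bbz;\ccalD,i)$ is $\psi_{ij}\big(\proj_+ h(\bbz;\ccalD,i),\bbz,\ccalD\big)$, and that Proposition~\ref{prop:bounded} gives $\abs{\psi_{ij}(X,\bbz,\ccalD)}\le\norm{\bbmu_{ij}}$ for every symmetric $X$, with the bound attained when $X$ shares the eigenvectors of $A_{ij}\triangleq\nabla_j h(\bbz,\ccalD,i)$; here $\bbmu_{ij}$ is the eigenvalue vector of the symmetric matrix $A_{ij}$, so $\norm{\bbmu_{ij}}=\norm{A_{ij}}_F$.

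First I would bound $\norm{h'_+(\bbz;\ccalD,i)}$. Squaring the per-coordinate estimate and summing over $j=1,\dots,m+qn$ gives
\[
\norm{h'_+(\bbz;\ccalD,i)}^2=\sum_{j=1}^{m+qn}\psi_{ij}^2\le\sum_{j=1}^{m+qn}\norm{\bbmu_{ij}}^2 .
\]
To make this uniform in $i$ and in the data, replace $\norm{\bbmu_{ij}}$ by $\bbarmu_j=\max_{i\in\ccalI}\norm{\bbmu_{ij}}$. These maxima are finite and may be taken independent of $\ccalD$: since $h$ is linear in $\bbz$, $A_{ij}$ is a constant matrix, and by the two-block form of $h$ in \eqref{eq:define_h} it is an identity or zero block when $j\le m$ and a single first partial derivative of $Q$ when $j>m$, whose Frobenius norm is at most $\eta_1$ everywhere by Assumption~\ref{assume:Q}. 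Hence $L_h=\sqrt{\bbarmu_1^2+\cdots+\bbarmu_{m+qn}^2}$ is a legitimate uniform bound, which is exactly \eqref{e:Lh}.

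Next I would bound $\norm{\bd_{s,k}}$. By \eqref{eq:differror}, $\bd_{s,k}=h'_+(\bbz_{s,k};\ccalD_{s,k},\omega_{s,k})-h'_+(\bbz_{s,k};\ccalD,\omega_{s,k})$. The estimate $L_h$ just derived is uniform over the data argument, so it applies to $h'_+$ evaluated at the converged data $\ccalD$ and at the unconverged $\ccalD_{s,k}$ alike; a single application of the triangle inequality then yields $\norm{\bd_{s,k}}\le 2L_h=:D$ almost surely.

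The argument is short, and there is no real obstacle beyond bookkeeping: the only step that needs attention is the claim that each $\bbarmu_j$ is a genuine constant, i.e.\ that the coordinate bound of Proposition~\ref{prop:bounded} is uniform over all reachable data $\ccalD$. This follows at once from the structure of $\nabla h$ in \eqref{eq:define_h}—an identity/zero block in the $\bbgamma$-coordinates and partial derivatives of $Q$ in the $\bbu$-coordinates—together with the uniform derivative bounds of Assumption~\ref{assume:Q}.
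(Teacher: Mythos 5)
Your proposal is correct and follows essentially the same route as the paper: read the coordinate-wise bound $\abs{\psi_{ij}}\le\norm{\bbmu_{ij}}$ from Proposition~\ref{prop:bounded}, assemble the entries into $L_h=\sqrt{\bbarmu_1^2+\cdots+\bbarmu_{m+qn}^2}$, and obtain $D=2L_h$ from the difference structure in \eqref{eq:differror} (the paper doubles entry-wise before taking the norm, you apply the triangle inequality at the vector level, which is the same computation). Your added remark on why each $\bbarmu_j$ is uniform in $i$ and $\ccalD$ — identity/zero blocks for $j\le m$ and the $\eta_1$ bound of Assumption~\ref{assume:Q} for $j>m$ — is a slightly more explicit justification of a point the paper leaves implicit, not a different argument.
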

\begin{proof}
The $j$-th entry of $h'_+$ is a restriction of $\psi_{ij}$ to a subset of its domain.
Thus, bounds for $\psi_{ij}$ also hold for the $j$-th entry of $h'_+.$
In the proof of Proposition~\ref{prop:bounded}, we saw that $\psi_{ij}$ is bounded by the 2-norm of the eigenvalues of $A_{ij} \triangleq \nabla_j h (\bbz, \ccalD,i).$
Therefore, $L_h = \norm{\bbmu_{ij}}_2.$
%Let
%\[
%\bbarmu_j \triangleq \max_{i \in \ccalI} \set{ \norm{ \bbmu_{ij}} \mid \bbmu_{ij} \text{ is the vector of eigenvalues of }A_{ij}}.
%\]
Taking the norm of both sides of \eqref{eq:differror}, we have that
\[
\norm{\bbdelta_{s,k}}=
\norm{
h'_+\left(\bbz_{s,k}; \ccalD_{s,k}, \omega_{s,k} \right)-h'_+\left(\bbz_{s,k}; \ccalD, \omega_{s,k} \right)}.
\]
Thus, the $j$-th entry of $\bd_{s,k}$ is bounded by $2 \bar{\mu}_j$ and $\norm{\bbdelta_{s,k}}$.
Taking the norm, we retrieve the result of the Corollary.
\end{proof}

\begin{prop}[Bounded Derivatives of $\psi_{ij}$]
\label{prop:boundpartials}
Assume $X \in \Sym_+ (p,\reals).$
Then, the magnitudes of the partial derivatives
$\norm{ \nabla_\ccalD \psi_{ij}  (X, \bbz, \ccalD ) }$
and
$\norm{ \nabla_X \psi_{ij}  (X, \bbz, \ccalD ) }$
are point-wise bounded for all $i \in \ccalI$ and $j \in \set{1, \dots, m+qn}.$
\end{prop}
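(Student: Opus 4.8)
The plan is to restrict attention to the region where the denominator of $\psi_{ij}$ does not vanish, write $\psi_{ij}$ there as an explicit (sub)differentiable function of its arguments $X$ and $\ccalD$, and then bound $\nabla_X\psi_{ij}$ and $\nabla_\ccalD\psi_{ij}$ separately, using the explicit form of $\nabla_j h$ from \eqref{eq:define_h} together with Assumption~\ref{assume:Q}. First I would record the structure of $\psi_{ij}$: on the set $\set{X:\norm{X}_F>0}$ it equals ${\rm Tr}(A_{ij}X)/\norm{X}_F$ with $A_{ij}\triangleq\nabla_j h(\bbz;\ccalD,i)$, and reading off \eqref{eq:define_h} this matrix is the identity $I$ when $j=i$, is $\bb0$ for the other indices $j\le m$, and equals $-\nabla_{j'}Q(\bbr_s,\hbbx_i)$ when $j>m$ corresponds to the $j'$-th coordinate of $\bbu_s$. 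The three structural facts I would extract are: (i) $A_{ij}$ does not depend on $X$; (ii) $A_{ij}$ depends on $\ccalD$ only through the single block $\hbbx_i$, and only when $j>m$; and (iii) $\norm{A_{ij}}_F$ is uniformly bounded — this is exactly the boundedness of $A_{ij}$ already established at the end of the proof of Proposition~\ref{prop:bounded} ($A_{ij}$ is $I$, $\bb0$, or a first partial of $Q$, bounded by $\eta_1$ under Assumption~\ref{assume:Q}).

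Next I would differentiate in $X$. On $\set{X:\norm{X}_F>0}$ the quotient rule gives
\[
\nabla_X\psi_{ij}(X,\bbz,\ccalD) = \frac{A_{ij}}{\norm{X}_F} - \frac{{\rm Tr}(A_{ij}X)}{\norm{X}_F^3}\,X,
\]
and Cauchy--Schwarz for the trace inner product, $\abs{{\rm Tr}(A_{ij}X)}\le\norm{A_{ij}}_F\norm{X}_F$, bounds $\norm{\nabla_X\psi_{ij}}_F$ by $2\norm{A_{ij}}_F/\norm{X}_F$. Since $\norm{A_{ij}}_F$ is uniformly bounded, this is finite at every point with $X\neq\bb0$; that is, $\nabla_X\psi_{ij}$ is point-wise bounded on $\Sym_+(p,\reals)\setminus\set{\bb0}$, with the bound scaling like $\norm{X}_F^{-1}$.

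Then I would differentiate in $\ccalD$, holding $X$ fixed. When $j\le m$, $A_{ij}$ is a constant matrix, so $\nabla_\ccalD\psi_{ij}=\bb0$. When $j>m$, only the $\hbbx_i$-block of $\ccalD$ is active, and its (sub)derivatives of $A_{ij}$ are, up to sign, second-order partial derivatives of $Q$, each bounded by $\eta_2$ by Assumption~\ref{assume:Q}\ref{a:bd_sub}; applying Cauchy--Schwarz as before — now the $\norm{X}_F$ produced in the numerator cancels the $\norm{X}_F$ in the denominator — gives $\norm{\nabla_\ccalD\psi_{ij}}\le\sqrt{p}\,\eta_2$, which is in fact a \emph{uniform} bound, independent of $X$ and of the point. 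Collecting the two bounds proves the proposition.

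I expect the only genuine subtlety — and the reason the statement asserts only \emph{point-wise} boundedness rather than a uniform bound — to be the factor $\norm{X}_F^{-1}$ in $\nabla_X\psi_{ij}$, which blows up as $X\to\bb0$; at $X=\bb0$ the function $\psi_{ij}$ is discontinuous (it jumps to $d$) and is not differentiable, so that single point is simply excluded here and is ruled out along the generated iterates almost surely in the later analysis via the measure-zero set of Proposition~\ref{prop:measure}. A secondary technicality is that $Q$ is assumed only twice \emph{sub}differentiable, so the $\ccalD$-derivatives above should be understood as bounded subgradients; Assumption~\ref{assume:Q}\ref{a:crit} guarantees these coincide with classical derivatives off a measure-zero set, which is all that is needed downstream.
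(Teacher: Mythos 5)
Your proposal is correct and follows essentially the same route as the paper's proof: an explicit quotient-rule computation of $\nabla_X\psi_{ij}$ giving a pointwise bound that degrades like $\norm{X}_F^{-1}$, and differentiation of $A_{ij}=\nabla_j h(\bbz;\ccalD,i)$ in $\ccalD$ controlled by the bounded second derivatives of $Q$ from Assumption~\ref{assume:Q}. The only differences are minor: the paper covers $X=\bb0$ by noting $\psi_{ij}$ is the constant $d$ there rather than excluding that point, and your observation that the $\norm{X}_F$ cancels in the $\ccalD$-derivative (and that $A_{ij}$ is independent of $S$) yields a uniform bound, slightly sharper and cleaner than the paper's $1/\epsilon$-type estimate.
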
\begin{proof}
If $\norm{X} = 0,$ then $\psi_{ij}$ is constant, and thus the magnitudes of its partial derivatives are zero.
The rest of the proof deals with $\norm{X}> 0.$

To simplify notation, we refer to $\nabla_j h (\bbz, \ccalD,i)$ by the shorthand $A_{ij}.$
Choose $\epsilon  \in \reals$ such that $ \norm{X}>\epsilon > 0.$
Fix arbitrary $i$.  
Since $X$ and $A_{ij}$ are symmetric, for any $j$ we have that
$
\nabla_X \! \psi_{ij}  (X, \bbz, \ccalD ) \!= \! \frac{1}{\norm{X}} A_{ij} - \frac{{{\rm Tr}} A_{ij}X}{\norm{X}^3}X
$
and
\begin{equation}\label{norm_psi_deriv}
\norm{\nabla_X \psi_{ij}  (X, \bbz, \ccalD ) }^2 \!=\! \norm{X}^{-4} (\norm{A_{ij}}^2 \norm{X}^2 \!\! -\!\! ( {\rm Tr} A_{ij}X)^2),
\end{equation}
which is strictly less than 1 for all $\norm{X} > \norm{A_{ij}}.$
For $\norm{X} \le \norm{A_{ij}},$
$ ( \norm{A_{ij}}/ \epsilon)^4$ is an upper bound for \eqref{norm_psi_deriv}.

For the term $\nabla_\ccalD \psi_{ij}  (X, \bbz, \ccalD  ),$ note that the only part of $\psi_{ij}$ that depends on $\ccalD$ is $A_{ij}.$
Thus by the chain rule,
\[
\nabla_\ccalD \psi_{ij}  (X, \bbz, \ccalD )=  
\left(\nabla_{A_{ij}} \psi_{ij}  (X, \bbz, \ccalD) 
\right)
\left( 
 \nabla_\ccalD A_{ij}
 \right).
\]
The first term is given by 
$
\nabla_{A_{ij}} \psi_{ij}  (X, \bbz, \ccalD)= \frac{1}{\norm{X}_F}I
\prec \frac{1}{\epsilon} I.$
For the second term, recall that $\ccalD = ( \hbbx, S).$
Split the partial derivative into two parts $\nabla_\ccalD=(\nabla_{\hbbx}, \nabla_{S})$ so that
\agn*{
\nabla_\ccalD \nabla_j h (\bbz, \ccalD,i)=  (\nabla_{\hbbx}\nabla_j h (\bbz, \ccalD,i) ,
 \nabla_{S}\nabla_j h (\bbz, \ccalD,i) ).
}
For any $j$, $\nabla_{S}\nabla_j h (\bbz, \ccalD,i) = I_p,$ and obviously $\norm{I_p} \le p.$
Similarly, $\nabla_j h (\bbz, \ccalD,i)= I_p$ for $j \le m.$
If $j > m,$ then $\nabla_{\hbbx}\nabla_j h (\bbz, \ccalD,i) $ results in second mixed partial derivatives of $Q$, which we assumed to be bounded in 
Assumption~\ref{assume:Q}.\end{proof}

\begin{prop}[Local Differentiability of Constraint Gradient]
\label{prop:neighborhood}
Consider an arbitrary $\bbz \in \reals^{m + qn}$ and $\ccalD =(\hbbx,S)$ with $\hbbx_i \in \reals^{p} $ and $S_i \in \Sym_+ (p, \reals).$
Suppose $h\left(\bbz;\ccalD, i \right)$ is nonsigular with distinct eigenvalues.
Then,
$\exists$ an open neighborhood $\Omega$ containing $h\left(\bbz;\ccalD, i \right)$ such that $\psi_{ij} \left(\proj_+ \left( \cdot\right), \bbz, \ccalD\right)\big|_\Omega$ is differentiable.
\end{prop}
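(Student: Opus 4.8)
The plan is to prove the stronger statement that $X \mapsto \psi_{ij}\!\left(\proj_+ X,\bbz,\ccalD\right)$ is smooth ($C^\infty$, in fact real-analytic) on a neighborhood $\Omega$ of $X_0 \triangleq h\!\left(\bbz;\ccalD,i\right)$. One piece of this is essentially free: by \eqref{eq:psi}, and since $h$ is affine in $\bbz$ (see \eqref{eq:define_h}) the matrix $A_{ij}\triangleq \nabla_j h(\bbz,\ccalD,i)$ is a fixed constant, so $\psi_{ij}(Y,\bbz,\ccalD)=\operatorname{Tr}(A_{ij}Y)/\norm{Y}_F$ for every $Y$ with $\norm{Y}_F>0$; this is a ratio of smooth functions of $Y$ with nowhere-vanishing denominator, hence smooth on $\Sym(p,\reals)\setminus\set{\bb0}$. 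So it remains to (a) show $\proj_+$ is smooth near $X_0$, and (b) decide whether the image of a small neighborhood under $\proj_+$ meets $\bb0$.

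For (a) I would use the Riesz spectral projection. Since $X_0$ is nonsingular, its spectrum splits into a positive part and a negative part, both separated from $0$; pick a contour $\Gamma\subset\mathbb{C}$ enclosing the positive eigenvalues of $X_0$ and leaving $(-\infty,0]$ outside. By continuity of eigenvalues there is a neighborhood $\Omega_1$ of $X_0$ on which every $X$ is still nonsingular and $\Gamma$ still separates its positive from its negative eigenvalues, so the projection $P_+(X)=\frac{1}{2\pi\mathrm{i}}\oint_\Gamma (zI-X)^{-1}\,dz$ onto the span of the positive eigenvectors of $X$ is well-defined, and it depends real-analytically on $X$ because the resolvent $(zI-X)^{-1}$ does, uniformly for $z\in\Gamma$. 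For symmetric $X$ one checks directly on eigenvectors that $X\,P_+(X)=\sum_{\lambda_\alpha>0}\lambda_\alpha v_\alpha v_\alpha^\top=\proj_+ X$, using that no eigenvalue of $X$ vanishes on $\Omega_1$. Hence $\proj_+|_{\Omega_1}$, being $X\mapsto X\,P_+(X)$, is smooth. (Distinctness of the eigenvalues of $X_0$ is not actually needed for this argument; it is carried along in the hypotheses because, by Proposition~\ref{prop:measure}, for almost every $(\bbz,\ccalD)$ the matrix $h(\bbz;\ccalD,i)$ is simultaneously nonsingular and has simple spectrum.)

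It remains to splice (a) and the free piece according to whether $\proj_+X_0=\bb0$. If $X_0$ is negative definite, then $P_+(X_0)=\bb0$, so $\proj_+X\equiv\bb0$ on $\Omega_1$ and $\psi_{ij}(\proj_+(\cdot),\bbz,\ccalD)\equiv d$ on $\Omega\triangleq\Omega_1$, which is trivially differentiable. Otherwise $X_0$ has a positive eigenvalue, hence $\proj_+X_0\neq\bb0$; by continuity of $\proj_+$ there is a neighborhood $\Omega\subseteq\Omega_1$ of $X_0$ on which $\norm{\proj_+X}_F>0$, and there $X\mapsto\psi_{ij}(\proj_+X,\bbz,\ccalD)$ is the composition of the smooth map $\proj_+|_{\Omega}$ with the smooth map $\psi_{ij}(\cdot,\bbz,\ccalD)$ on $\Sym(p,\reals)\setminus\set{\bb0}$, hence differentiable on $\Omega$.

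The hard part will be step (a): the genuine analytic content is the real-analyticity of the resolvent / Riesz projection under perturbations of $X$, together with the persistence of the spectral gap at $0$ that lets us fix a single contour $\Gamma$ on a whole neighborhood — and this is exactly where the nonsingularity hypothesis is used. Everything else (the closed form of $\psi_{ij}$, the identity $\proj_+X=X\,P_+(X)$, and the two-case bookkeeping) is routine.
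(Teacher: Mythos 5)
Your proof is correct, but it takes a genuinely different route from the paper. The paper's proof applies the implicit function theorem to the eigenpair equations $\bbphi(\lambda,\bbv,X)=\bb0$ (following Magnus) to obtain locally smooth eigenvalue/eigenvector maps $\lambda_\alpha(\cdot),\bbv_\alpha(\cdot)$ near $h(\bbz;\ccalD,i)$ — this is exactly where the distinct-eigenvalue hypothesis is used — and then checks differentiability of $\psi_{ij}\circ\proj_+$ case by case (all eigenvalues positive, all negative, mixed) by substituting these maps into the eigen-expansion of $\psi_{ij}$. You instead establish smoothness (indeed real-analyticity) of $\proj_+$ itself near $X_0$ via the Riesz spectral projection $P_+(X)=\frac{1}{2\pi\mathrm{i}}\oint_\Gamma(zI-X)^{-1}dz$ and the identity $\proj_+X=X\,P_+(X)$, valid on a neighborhood where the spectral gap at $0$ persists, and then conclude by composing with the map $Y\mapsto{\rm Tr}(A_{ij}Y)/\norm{Y}_F$, which is smooth away from $Y=\bb0$; your two-case split (negative definite versus existence of a positive eigenvalue) correctly handles the branch where $\proj_+$ is locally identically $\bb0$ and $\psi_{ij}$ takes the constant value $d$. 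The comparison: your argument uses only the nonsingularity of $h(\bbz;\ccalD,i)$ and not the simplicity of its spectrum, so it proves a slightly stronger statement with a cleaner conclusion (analyticity rather than mere differentiability), and it avoids the paper's bookkeeping over signed eigenvalue groups (whose set-construction and final expression in the mixed case are somewhat awkward); the paper's approach, in exchange, stays with more elementary tools (the implicit function theorem on eigenpairs rather than resolvent perturbation theory) and makes explicit use of the simple-spectrum property that Proposition~\ref{prop:measure} guarantees almost surely, which is how the proposition is invoked in the proof of Lemma~\ref{lem:summable_grads}. Either proof suffices for that application.
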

\begin{proof}
{\bf Notation}
Note that $f \in C^\infty (\ccalA, \ccalB)$ means that the $n$-th derivative of $f$ exists at all points in $\ccalA$ for all $n \in \mbN.$
We also use the notation $f|_\Omega$ to denote the restriction of $f$ to $\Omega.$

%[Proof of Proposition~\ref{prop:neighborhood}]
Roughly speaking, the proof uses a well known result \cite{magnus85}   to show that, if $h\left(\bbz;\ccalD, i \right)$ has distinct eigenvalues, then for $\alpha=1,\dots,p$ we can find an open neighborhood $\Omega$ in which there are differentiable maps $\lambda_\alpha \in C^\infty(\Omega_\alpha, \reals)$ and $\bbv_\alpha \in C^\infty(\Omega_\alpha, \reals^p)$ that represent the $\alpha$-th eigenvalue-eigenvector pair of each $X \in \Omega$ and agree with the eigenvalues and eigenvectors of $h\left(\bbz;\ccalD, i \right)$.
Since $\psi_{ij}\left(\proj_+(\cdot), \bbz ,\Sigma \right)$ is well behaved in an open neighborhood, it is not difficult to arrive at the desired result.
The details of the proof are as follows, and generalize the line of reasoning in \cite{magnus85}.

Let $(\bbarlambda_\alpha, \bar{\bbv}_\alpha)$ be an eigenvalue-eigenvector pair of $h\left(\bbz;\ccalD, i \right).$
Since $h\left(\bbz;\ccalD, i \right) \in \Sym(p, \reals)$, we know that $\bbarlambda_\alpha$ is real valued and distinct.
Consider the mapping $\bbphi \colon \reals \times \reals^{p} \times \Sym(p, \reals) \to \reals^{p+1}$ given by
\begin{equation}
\bbphi(\lambda, \bbu, X) \triangleq \mat{(\lambda I - X) \bbv\\ \bbv^\top \bbv -1}.
\end{equation}
By definition, $\bbphi (\lambda, \bbu, X) = \bb0$ if and only if $ (\lambda,\bbv)$ are a (normalized) eigenvalue-eigenvector pair of $X$.
It is also a well known properties of eigenvalues and eigenvectors that
\[
\nabla_{(\lambda, \bbu)} \bbphi \left(\bbarlambda_\alpha, \bar{\bbv}_\alpha, h\left(\bbz;\ccalD, i \right) \right) 
\]
is full rank if and only if $\bbarlambda_\alpha$ is simple, which is true by assumption.
By the implicit function theorem, there is an open neighborhood $\Omega_\alpha$ of $h\left(\bbz;\ccalD, i \right)$ in which we can find differentiable functions $\lambda_\alpha \in C^\infty(\Omega_\alpha, \reals)$ and $\bbv_\alpha \in C^\infty(\Omega_\alpha, \reals^p)$ such that
$
\lambda_\alpha(h\left(\bbz;\ccalD, i \right)) = \bbarlambda_\alpha$
and
$\bbv_\alpha(h\left(\bbz;\ccalD, i \right)) =  \bar{\bbv}_\alpha$
Let
\begin{subequations}
\begin{align}
\Omega_\alpha^+ &= \set{X \in \Omega_\alpha \mid 
 \lambda_\alpha (X)> 0 }, \; \and \\
 \Omega_\alpha^- &= \set{X \in \Omega_\alpha \mid 
 \lambda_\alpha (X)< 0 }.
\end{align}
\end{subequations}

If $\bbarlambda_\alpha$ is positive for all $\alpha = 1, \dots, p,$ every $\Omega_\alpha^+$ is an open set that contains $h\left(\bbz;\ccalD, i \right).$
Let
\begin{equation}
\Omega = \cap_\alpha \Omega_\alpha^+,
\end{equation}
which is nonempty and open.
When restricted to $\Omega$, the mapping $\proj_+$ is the identity map, and thus (at least for this case) the proof follows if $\psi_{ij} \left(\cdot, \bbz, \ccalD \right)\big|_\Omega$ is differentiable.
To verify that it is, consider arbitrary $X \in \Omega$.
Recall again the definition of $\psi_{ij} (\cdot,\bbz,\ccalD)\colon \Sym (p, \reals) \to \reals$ from \eqref{eq:psi}.
Note that $\nabla_j h (\bbz, \ccalD,i)$ is a constant matrix because $h$ is a linear function of $\bbz.$
Call this matrix $A_{ij}$.
We have that
\[
\psi_{ij} \left(X, \bbz, \ccalD \right)
=
\frac{{{\rm Tr}} \left(A_{ij}X \right)}{\norm{X}_F}
=
\frac{ \sum_{\alpha}\lambda_\alpha(X)  \bbv_\alpha^\top(X)  A_{ij}\bbv_\alpha(X) }{ \sqrt{ \sum_{\alpha } \lambda_\alpha^2(X)}} ,
\]
which uses the same algebraic manipulation as the derivation of \eqref{eq:psi_as_eigs}.
By inspection, $\psi_{ij} (\cdot,\bbz,\ccalD)$ is differentiable at all $X \in \Omega$, and the result follows.

If $\bbarlambda_\alpha$ is negative for all $\alpha = 1, \dots, p,$ every $\Omega_\alpha^-$ is an open set that contains $h\left(\bbz;\ccalD, i \right).$
Let
\begin{align}
\Omega =\cap_\alpha \Omega_\alpha^-,
\end{align}
which is again open and nonempty.
Note that $\proj_+$ projects every matrix $X \in \Omega$ to the matrix of zeros.
The map $\psi_{ij} (\bb0,\bbz,\ccalD)$ is thus constant w.r.t. $X \in \Omega$ and is thus differentiable.

If $h\left(\bbz;\ccalD, i \right)$ has some positive and negative eigenvalues, then we can split them into two groups.
Let $\alpha$ index the positive eigenvalues and $\beta$ denote the negative eigenvalues.
Similar to before, let
\begin{align}
\Omega =\left( \cap_\alpha \Omega_\alpha^+ \right) \cup \left( \cap_\beta\Omega_\beta^- \right) ,
\end{align}
which is again nonempty, open, and contains $h\left(\bbz;\ccalD, i \right).$
Note that $\proj_+$ does not change $\lambda_\alpha$ or $\bbv_\alpha$ for matrices $X \in \Omega$, i.e., 
\begin{subequations}
\begin{align}
\lambda_\alpha (X) &= \lambda_\alpha \left(\proj_+ (X)\right) , \; \forall X \in \Omega
\\
\bbv_\alpha  (X) &= \bbv_\alpha \left(\proj_+ (X)\right)  , \; \forall X \in \Omega.
\end{align}
\end{subequations}
The eigenvalues corresponding to $\beta$ of $\proj_+ (X)$, on the other hand, are always projected to zero.
Thus we write that
$ \lambda_\beta \left(\proj_+ (X)\right) \equiv 0,$ and we have, omitting some elementary algebra, that
\[
\psi_{ij} \left(\proj_+ (X) , \bbz, \ccalD \right)
=
\frac{{{\rm Tr}} \left(A_{ij}\proj_+ (X)  \right)}{\norm{\proj_+  (X)}_F}
=
 \bbv_1^\top(X)  A_{ij}\bbv_1(X).
\]
By inspection, $\psi_{ij} \left(\proj_+ (\cdot) , \bbz, \ccalD \right)$ is differentiable at $X$.
Since $X$ was an arbitrary matrix from the set $\Omega$, the result follows.

Since $h\left(\bbz;\ccalD, i \right)$ was assumed to be nonsingular, all possibilities of combinations eigenvalues have been covered, and, for each case, we found an open set $\Omega$ in which the desired result holds.
\end{proof}

\begin{proof}[Proof of Lemma~\ref{lem:summable_grads}]
Recall the definition
\[
\bbdelta_{s,k}\triangleq
h'_+\left(\bbz_{s,k}; \ccalD_{s,k}, \omega_{s,k} \right)-h'_+\left(\bbz_{s,k}; \ccalD,  \omega_{s,k} \right),
\]
where $\set{ \omega_{s,k}}_{k \in \mbN} \subset \ccalI.$
Consider sequences of the form $ \set{i,i,i,\dots}.$
%We will prove that if $\omega_{s,k} = i$, then $\bbdelta_{s,k}$ is summable.
We will use the fact that
\[
\sum_{k\in \mbN} \norm{ \bbdelta_{s,k} } \leq \sum_{k\in \mbN}\sum_{i \in \ccalI}
\norm{
h'_+\left(\bbz_{s,k}; \ccalD_{s,k},i\right)-h'_+\left(\bbz_{s,k}; \ccalD,i \right)
}
\]
to show that
that $\bbdelta_{s,k}$ is summable, provided that each sequence $
\norm{
h'_+\left(\bbz_{s,k}; \ccalD_{s,k},i\right)-h'_+\left(\bbz_{s,k}; \ccalD,i \right)
}$ is summable.

%Now, fix some arbitrary $i \subset \ccalI.$
To show that $h'_+\left(\bbz_{s,k}; \ccalD_{s,k},i\right)-h'_+\left(\bbz_{s,k}; \ccalD,i \right)$ is summable, we will show that the sequence of the entries for an arbitrary entry $j \in \set{1, \dots, m + qn}$ is summable.
If this is true, the norm of the whole vector will be summable as well because $\norm{\cdot}_a \geq \norm{\cdot}_b.$ if $a \leq b.$
Fix $j \in \set{1, \dots, m+ qn}$.

\noindent{\bf Case 1}
If $j\le m,$ then this entry corresponds to the variable from the original problem that we denoted as $\gamma_j$.
Note that, for this choice of $j$, we have that
\begin{equation}
\nabla_j h (\bbz, \ccalD,i)= \begin{cases}
I_p &\text{if } i=j
\\
\bb0 &\text{else}
\end{cases}
\end{equation}
for all possible variables $\bbz$ or data $\ccalD.$
Therefore, the $j$-th entry of $h'_+\left(\bbz_{s,k}; \ccalD_{s,k},i\right)-h'_+\left(\bbz_{s,k}; \ccalD,i \right)$, which is the difference of two functions
\begin{align}
\nabla_j h_+ (\bbz_{s,k},\ccalD_{s,k},i) -  \nabla_j h_+ (\bbz_{s,k}, \ccalD,i )
= 0
\end{align}
because these functions are actually the same constant.

\noindent{\bf Case 2}
If $j>m$, then we are looking at an entry of the gradient with respect to the control variables $\bbu_s,$ which are non trivial.
In this case, recall from the definition in \eqref{eq:psi} that
$
\nabla_j h_+ (\bbz, \ccalD,i)
 =
  \psi_{ij} \left(\proj_+ \left(  h\left(\bbz;\ccalD,  i\right) \right), \bbz, \ccalD\right).
$
In other words, we can look at $ \nabla_j h_+$ as essentially a composition of three functions: $\psi_{ij}, \proj_+, \; \and h.$
In the preceding lemmas and propositions, we have established useful facts about each of these constituent functions that will enable us to complete the proof.

Let $X_k = h\left(\bbz_{s,k};\ccalD_{s,k}, i \right)$ and $Y_k =  h\left(\bbz_{s,k};\ccalD, i \right),$ and note that $X_k \to Y_k$ because $h$ is continuous and $\ccalD_{s,k} \to \ccalD.$
By Proposition~\ref{prop:measure}, both sequences $\set{\bbz_{s,k},\ccalD_{s,k}}_{k \in \mbN}$ and $\set{\bbz_{s,k}, \ccalD}_{k \in \mbN}$ almost surely lie in the preimage of the set of nonsingular matrices that have distinct eigenvalues with respect to the mapping $h(\cdot; \cdot, i).$
Let $\Omega_k$ be an open neighborhood of $Y_k$ such that $\psi_{ij} \left(\proj_+ \left( \cdot\right), \bbz, \ccalD\right)\big|_{\Omega_k}$ is differentiable.
Such a neighborhood is guaranteed to be available by Proposition~\ref{prop:neighborhood} 
as long as $Y_k$ 
converges, and the limit point is nonsingular and does not have a simple spectrum.
We know that $Y_k$ converges by construction of the decreasing step sizes in Algorithm~\ref{alg:filter+opt}, and w.p.1. this limit point will be nonsingular and have a nonsimple spectrum by Proposition~\ref{prop:measure}.
Then, we can find $K \in \mbN$ such that $\forall \ell \geq K,$ $X_\ell \in \Omega_\ell.$

By Proposition~\ref{prop:boundpartials}, the derivatives of $ \psi_{ij} \left( \proj_+\left(  X\right), \bbz, \ccalD\right)$ with respect to both $X$ and $\ccalD$ are bounded.
Denote the larger bound by $\mu_\ell$.
Proposition~\ref{prop:bounded}  bounds $\psi_{ij}$ globally almost surely.
We have that
\begin{align*}
&h'_+\left(\bbz_{s,k}; \ccalD_{s,k},i\right)-h'_+\left(\bbz_{s,k}; \ccalD,i \right) = \\
 &=\psi_{ij} \left(\proj_+  \left(  
h\left(\bbz_{s,\ell}; \ccalD_{s,\ell}, i \right)
 \right), \bbz_{s,\ell}, \ccalD_{s,\ell}\right)
 \\
&\qquad\qquad\qquad-
 \psi_{ij} \left(\proj_+  \left(  
h\left(\bbz_{s,\ell}; \ccalD, i \right)
 \right), \bbz_{s,\ell}, \ccalD\right)
\\
&=
 \psi_{ij} \left(\proj_+  \left(  
X_\ell
 \right), \bbz_{s,\ell}, \ccalD_{s,\ell}\right)
-
 \psi_{ij} \left(\proj_+  \left(  
Y_\ell
 \right), \bbz_{s,\ell}, \ccalD\right)
\\
&= \!
\left\langle  
	 \nabla_X  (\psi_{ij} \! \circ \! \proj_+ )  (Y_\ell , \bbz_{s, \ell} ,\ccalD)   , X_\ell \! - \! Y_\ell 
\right\rangle_{\Sym (p, \reals)}
 \! \!\! \\
 & \qquad +
 o ( \norm{X^\ell \! - \! Y^\ell} )
\\
&\leq 
\norm{ \nabla_X \! (\psi_{ij} \! \circ \! \proj_+ \! ) (Y_\ell , \bbz_{s, \ell} ,\ccalD)  }\norm{ X_\ell \! - \! Y_\ell} + o \left( \norm{X_\ell - Y_\ell} \right)
\\
&\leq
\mu_\ell\norm{ X_\ell - Y_\ell} + o \left( \norm{X_\ell - Y_\ell} \right),
\end{align*}
where $\langle \cdot, \cdot \rangle_{\Sym(p, \reals)}$ denotes the matrix inner product.
The final term is summable if and only if $\norm{ X_\ell - Y_\ell}$ is summable.
To verify that it is, note that
\begin{align}
\norm{X_\ell - Y_\ell}
&=
\norm{ h\left(\bbz_{s,k};\ccalD_{s,k}, i \right) -
 h\left(\bbz_{s,k};\ccalD, i \right)}
\end{align}
which is summable by Proposition~\ref{prop:summable}.\end{proof}

\section{Proof of Lemmas~\ref{lem:basic}-\ref{lem:disagree}}
\label{app:lemmaproofs2}
%\subsection{Proof of the Lemmas in \ref{subsec:lem}}

\begin{proof}[Proof of Lemma \ref{lem:basic}]
From Line \ref{line:z} of Algorithm \ref{alg:filter+opt}, the following chain of relations hold for any feasible point $\zb \in \Xc \subseteq \Xc_0$ \textit{a.s.}:
\begin{subequations}
\begin{align}
&\|\zb_{s,k}-\zb\|^2 \le  \|\vb_{s,k}-\zb \! - \! \b_{s,k}h'_+(\vb_{s,k};\ccalD_{s,k},\o_{s,k})\|^2\label{eqn:basica}\\
&~\le  \|\vb_{s,k}-\zb\|^2 \! - \! 2\b_{s,k} \la h'_+(\vb_{s,k};\ccalD_{s,k},\o_{s,k}), \vb_{s,k}-\zb \ra \nonumber\\
&~+ \b_{s,k}^2 \|h'_+(\vb_{s,k};\ccalD_{s,k},\o_{s,k})\|^2\label{eqn:basicb}\\
&~\le \|\vb_{s,k}-\zb\|^2 \! - \! 2\b_{s,k} \la h'_+(\vb_{s,k};\ccalD,\o_{s,k}), \vb_{s,k} \! - \! \zb \ra \label{eqn:basicc} \\
&~~+\! 2\b_{s,k} \|\bd_{s,k}\|\|\vb_{s,k} \! - \! \zb \|
\! + \! \b_{s,k}^2 \|h'_+(\vb_{s,k};\ccalD_{s,k},\o_{s,k})\|^2 \nonumber \\
&~\le \|\vb_{s,k}-\zb\|^2 \! - \! 2\b_{s,k} h_+(\vb_{s,k};\ccalD,\o_{s,k})\label{eqn:basicd}\\
&~+ 2\b_{s,k} \|\bd_{s,k}\|\|\vb_{s,k} \! - \! \zb \|
+ \b_{s,k}^2 \|h'_+(\vb_{s,k};\ccalD_{s,k},\o_{s,k})\|^2,\nonumber
\end{align}
\end{subequations}
where (\ref{eqn:basica}) follows from the nonexpansiveness of the projection operator; (\ref{eqn:basicb}) follows from the expansion of $\|\cdot\|^2$; (\ref{eqn:basicc}) follows from relation (\ref{eq:differror}) and the Schwarz inequality;
and (\ref{eqn:basicd}) follows from the convexity of the function $h_+(\cdot;\cdot,\o_{s,k})$, and $h_+(\zb;\ccalD,\o_{s,k})=0$ due to the feasibility of the point $\zb$, i.e.,
$
h_+(\vb_{s,k};\ccalD,\o_{s,k})
%&= h_+(\vb_{s,k};\ccalD,\o_{s,k}) - h_+(\zb;\ccalD,\o_{s,k})\\
\le \la h'_+(\vb_{s,k};\ccalD,\o_{s,k}), \vb_{s,k}-\zb \ra.
$
From relation (\ref{eq:constrainterror}) and the definition of $\b_{s,k}$ in (\ref{eqn:beta}), we further have
\begin{align*}
&- 2\b_{s,k} h_+(\vb_{s,k};\ccalD,\o_{s,k})+ \b_{s,k}^2 \|h'_+(\vb_{s,k};\ccalD_{s,k},\o_{s,k})\|^2\\
&~= - 2\b_{s,k} h_+(\vb_{s,k};\ccalD_{s,k},\o_{s,k}) + 2\b_{s,k}|\nu_{s,k}| \\
&~~+ \b_{s,k}^2 \|h'_+(\vb_{s,k};\ccalD_{s,k},\o_{s,k})\|^2\\
&~= 2\b_{s,k}|\nu_{s,k}| - \frac{h^2_+(\vb_{s,k};\ccalD_{s,k},\o_{s,k})}{\|h'_+(\vb_{s,k};\ccalD_{s,k},\o_{s,k})\|^2}\\
&~\le 2\b_{s,k}|\nu_{s,k}| - \frac{h_+^2(\vb_{s,k};\ccalD_{s,k},\o_{s,k})}{(L_h+D)^2},
\end{align*}
where the last inequality follows from relation \eqref{eqn:L_h1}.
Combining this with inequality (\ref{eqn:basicd}), we obtain for any $\zb \in \Xc$ \textit{a.s.}:
\begin{align}\label{eqn:basic1}
\|\zb_{s,k}-\zb\|^2
\le &~ \|\vb_{s,k}-\zb\|^2
+ 2\b_{s,k} \|\bd_{s,k}\|\|\vb_{s,k}-\zb \|\\
&~+2\b_{s,k}|\nu_{s,k}|
- \frac{h_+^2(\vb_{s,k};\ccalD_{s,k},\o_{s,k})}{(L_h+D)^2}.\nonumber
\end{align}
%From the uniform boundedness of the subgradients, we further obtain
%\[
%\|x-\check{x}\|^2 \le \|v-\check{x}\|^2 - \b(2-\b) \frac{\left(h^+(v)\right)^2}{C_h^2}.
%\]
For the last term on the right-hand side, we can rewrite
$
h_+(\vb_{s,k};\ccalD_{s,k},\o_{s,k}) = h_+(\vb_{s,k};\ccalD_{s,k},\o_{s,k}) -h_+(\pb_{s,k};\ccalD_{s,k},\o_{s,k})
+h_+(\pb_{s,k};\ccalD_{s,k},\o_{s,k}).
$
Therefore,
\begin{align}\label{eqn:gfnpre}
&h_+^2(\vb_{s,k};\ccalD_{s,k},\o_{s,k})\ge 2h_+(\pb_{s,k};\ccalD_{s,k},\o_{s,k})\nonumber\\
&\hspace{2cm}\times\left(h_+(\vb_{s,k};\ccalD_{s,k},\o_{s,k}) -h_+(\pb_{s,k};\ccalD_{s,k},\o_{s,k})\right)\nonumber\\
&\hspace{3cm} + h_+^2(\pb_{s,k};\ccalD_{s,k},\o_{s,k}).
\end{align}
Regarding the first term on the right-hand side of (\ref{eqn:gfnpre}), the following chain of relations holds:
\begin{subequations}\label{eqn:basice}
\begin{align}
&2h_+(\pb_{s,k};\ccalD_{s,k},\o_{s,k})\nonumber\\
&~~\times\left(h_+(\vb_{s,k};\ccalD_{s,k},\o_{s,k}) -h_+(\pb_{s,k};\ccalD_{s,k},\o_{s,k})\right)\nonumber\\
&~\ge-2h_+(\pb_{s,k};\ccalD_{s,k},\o_{s,k})\nonumber\\
&~~\times\left|h_+(\vb_{s,k};\ccalD_{s,k},\o_{s,k}) -h_+(\pb_{s,k};\ccalD_{s,k},\o_{s,k})\right|\label{eqn:basicf}\\
&~\ge -2(L_h+D)\|\vb_{s,k}-\pb_{s,k}\|h_+(\pb_{s,k};\ccalD_{s,k},\o_{s,k})\label{eqn:basicg}\\
&~\ge -2\a_k (L_h+D)L_fh_+(\pb_{s,k};\ccalD_{s,k},\o_{s,k})\label{eqn:basich}\\
&~\ge -\tau \a_k^2(L_h+D)^2L_f^2 - \frac{1}{\tau}h_+^2(\pb_{s,k};\ccalD_{s,k},\o_{s,k}),\label{eqn:basici}
\end{align}
\end{subequations}
where (\ref{eqn:basicf}) follows from the Schwarz inequality; (\ref{eqn:basicg}) follows from relation (\ref{eqn:L_h2}); (\ref{eqn:basich}) follows from Line \ref{line:v} of Algorithm \ref{alg:filter+opt}; and (\ref{eqn:basici}) follows from using 
$
2|a||b| \le \tau a^2 + \frac{1}{\tau} b^2
$
with $a=\a_k (L_h+D)L_f$ and $b=h_+(\pb_{s,k};\ccalD_{s,k},\o_{s,k})$, and $\tau >0$ is arbitrary.
Using relations (\ref{eqn:gfnpre})-(\ref{eqn:basice}) in (\ref{eqn:basic1}), we obtain for all $\zb \in \Xc$ \textit{a.s.}:
\begin{align}\label{eqn:basic2}
&\|\zb_{s,k}-\zb\|^2 \le \|\vb_{s,k}-\zb\|^2
+ 2\b_{s,k} \|\bd_{s,k}\|\|\vb_{s,k}-\zb \|\\
&~+2\b_{s,k}|\nu_{s,k}|-\frac{\tau-1}{\tau (L_h+D)^2}h_+^2(\pb_{s,k};\ccalD_{s,k},\o_{s,k})
+\tau \a_k^2 L_f^2.\nonumber
\end{align}

Similarly to (\ref{eqn:basica})-(\ref{eqn:basicd}), from Line \ref{line:v} of Algorithm \ref{alg:filter+opt}, the following chain of relations hold for any $\zb\in \Xc \subseteq \Xc_0$ \textit{a.s.}:
\begin{subequations}
\begin{align}
&\|\vb_{s,k}-\zb \|^2\le \|\pb_{s,k}-\zb-\a_kf'(\pb_{s,k})\|^2\label{eqn:basicj}\\
&~\le \|\pb_{s,k}-\zb\|^2 - 2\a_k\la f'(\pb_{s,k}), \pb_{s,k}-\zb\ra + \a_k^2\|f'(\pb_{s,k})\|^2\label{eqn:basick}\\
%&~\le \|\pb_{s,k}-\zb\|^2 - 2\a_k\la f'(\pb_{s,k}), \pb_{s,k}-\zb\ra  + \a_k^2\|f'(\pb_{s,k})\|^2\label{eqn:basicl}\\
&~\le \|\pb_{s,k}-\zb\|^2 - 2\a_k(f(\pb_{s,k})-f(\zb)) + \a_k^2L_f^2,\label{eqn:basicm}
\end{align}
\end{subequations}
where (\ref{eqn:basicj}) follows from the nonexpansive projection;
(\ref{eqn:basick}) follows from the expansion of $\|\cdot\|^2$;
%(\ref{eqn:basicl}) follows from relation (\ref{eqn:fperr}) and the Schwarz inequality;
(\ref{eqn:basicm}) follows from the convexity of the function $f$ and relation \eqref{eqn:L_f1}.
For the second term on the right-hand side, we further have for any $\check{\zb} \in \Xc$:
\begin{subequations}
\begin{align}
&- 2\a_k(f(\pb_{s,k})-f(\zb)) \nonumber\\
&~=  -2\a_k(f(\pb_{s,k}) - f(\check{\zb}) + f(\check{\zb})-f(\zb))\label{eqn:basicn}\\
&~\le  - 2\a_k \la f'(\check{\zb}), \pb_{s,k} - \check{\zb}\ra - 2\a_k(f(\check{\zb})-f(\zb))\label{eqn:basico}\\
&~\le  2\a_k L_f \|\pb_{s,k} - \check{\zb}\| - 2\a_k(f(\check{\zb})-f(\zb))\label{eqn:basicp}\\
&~\le  4\eta\a_k^2 L_f^2 + \frac{1}{4\eta}\|\pb_{s,k} - \check{\zb}\|^2 - 2\a_k(f(\check{\zb})-f(\zb)),\label{eqn:basicq}
\end{align}
\end{subequations}
where (\ref{eqn:basicn}) follows from adding and subtracting $f(\check{\zb})$;
(\ref{eqn:basico}) follows from the convexity of the function $f$;
(\ref{eqn:basicp}) follows from the Schwarz inequality;
and (\ref{eqn:basicq}) follows from using 
$
2|a||b| \le 4\eta a^2 + \frac{1}{4\eta} b^2
$
with $a = \a_kL_f$ and $b = \|\pb_{s,k} - \check{\zb}\|$.
Combining (\ref{eqn:basicq}) in (\ref{eqn:basicm}), we have for any $\zb, \check{\zb} \in \Xc \subseteq \Xc_0$ \textit{a.s.}:
\begin{align*}
\|\vb_{s,k}-\zb \|^2
\le&~ \|\pb_{s,k}-\zb\|^2 - 2\a_k(f(\check{\zb})-f(\zb)) \\
&~+ \frac{1}{4\eta}\|\pb_{s,k} - \check{\zb}\|^2 + (1+4\eta)\a_k^2L_f^2.
\end{align*}
Substituting this inequality in relation (\ref{eqn:basic2}), we obtain
\begin{align*}
&\|\zb_{s,k}-\zb\|^2
\le  \|\pb_{s,k}-\zb\|^2 - 2\a_k(f(\check{\zb})-f(\zb))\\
&~+ 2\b_{s,k} \|\bd_{s,k}\|\|\vb_{s,k}-\zb \|+2\b_{s,k}|\nu_{s,k}|\\
&~ + \frac{1}{4\eta}\|\pb_{s,k} - \check{\zb}\|^2  + A_{\eta,\tau}\a_k^2L_f^2\\
&~-\frac{\tau-1}{\tau(L_h+D)^2}h^2_+(\pb_{s,k};\ccalD_{s,k},\o_{s,k}),
\end{align*}
where $A_{\eta,\tau} = 1+4\eta+\tau$.
From relation \eqref{eqn:C_z}, we have 
$
\|\vb_{s,k}-\zb \|\le C_{\zb}.
$
Using this and the upper estimate for $\b_{s,k}$ in (\ref{eqn:betabnd}) for bounding the three error terms completes the proof.
\end{proof}

\begin{proof}[Proof of Lemma \ref{lem:feas}]
We use Lemma~\ref{lem:basic} with $\check{\zb}=\zb = \mathsf{\Pi}_{\mathcal{X}}[\pb_{s,k}]$.
Therefore, for any $s\in \ccalS$, and $k \ge 1$, we obtain \textit{a.s.}:
\begin{align}\label{eqn:lem2a}
&\|\zb_{s,k}-\mathsf{\Pi}_{\mathcal{X}}[\pb_{s,k}]\|^2
\le  \|\pb_{s,k}-\mathsf{\Pi}_{\mathcal{X}}[\pb_{s,k}]\|^2 \\
&~+ 2\frac{C_{\zb}(C_h+N)}{c_h^2}\|\bd_{s,k}\| +2\frac{C_h+N}{c_h^2}|\nu_{s,k}|\nonumber\\
&~ + \frac{1}{4\eta}\|\pb_{s,k} - \mathsf{\Pi}_{\mathcal{X}}[\pb_{s,k}]\|^2 + A_{\eta,\tau}\a_k^2L_f^2\nonumber\\
&~ -\frac{\tau-1}{\tau(L_h+D)^2}h_+^2(\pb_{s,k};\ccalD_{s,k},\o_{s,k}),\nonumber
\end{align}
where $A_{\eta,\tau} = 1+4\eta+\tau$ and $\eta,\tau > 0$ are arbitrary.
By the definition of the projection, we have
$
\dist(\pb_{s,k},\Xc) = \|\pb_{s,k} - \mathsf{\Pi}_{\Xc}[\pb_{s,k}]\|$
and
$
\dist(\zb_{s,k},\Xc)
= \|\zb_{s,k}-\mathsf{\Pi}_{\mathcal{X}}[\zb_{s,k}]\| \le \|\zb_{s,k}-\mathsf{\Pi}_{\mathcal{X}}[\pb_{s,k}]\|.
$
Upon substituting these estimates in relation (\ref{eqn:lem2a}), we obtain
\begin{align}\label{eqn:lem2b}
&\dist^2(\zb_{s,k},\Xc)
\le  \dist^2(\pb_{s,k},\Xc) \\
&\hspace{2cm}+ 2\frac{C_{\zb}(C_h+N)}{c_h^2}\|\bd_{s,k}\| +2\frac{C_h+N}{c_h^2}|\nu_{s,k}|\nonumber\\
&\hspace{2cm} + \frac{1}{4\eta}\dist^2(\pb_{s,k},\Xc) + A_{\eta,\tau}\a_k^2L_f^2\nonumber\\
&\hspace{2cm} -\frac{\tau-1}{\tau(L_h+D)^2}h_+^2(\pb_{s,k};\ccalD_{s,k},\o_{s,k}).\nonumber
\end{align}
%Let $\Fc_k$ denote the algorithm's history up to time $k$.
Taking the expectation conditioned on $\Fc_{k-1}$ and noting that $\pb_{s,k}$ is fully determined by $\Fc_{k-1}$,
we have for any $s\in \ccalS$ and $k \ge 1$ \textit{a.s.}:
\begin{align}\label{eqn:lem2c}
&\Es\left[\dist^2(\zb_{s,k},\Xc) \mid \Fc_{k-1}\right]
\le  \dist^2(\pb_{s,k},\Xc) \\
&+ \hspace{-0.5mm}2\frac{C_{\zb}(C_h+N)}{c_h^2}\Es[\|\bd_{s,k}\|\mid \Fc_{k-1}] \hspace{-0.5mm}+\hspace{-0.5mm}2\frac{C_h+N}{c_h^2}\Es[|\nu_{s,k}|\mid \Fc_{k-1}]\nonumber\\
& + \frac{1}{4\eta}\dist^2(\pb_{s,k},\Xc)
+ A_{\eta,\tau}\a_k^2L_f^2\nonumber\\
&~ -\frac{\tau-1}{\tau(L_h+D)^2}\Es\left[h^2_+(\pb_{s,k};\ccalD_{s,k},\o_{s,k})\mid\Fc_{k-1}\right].\nonumber
\end{align}
We now choose $\tau =4$, $\eta = \k(L_h+D)^2$ and use Assumption \ref{assume:c} to yield
\begin{align*}
&\Es\left[\dist^2(\zb_{s,k},\Xc) \mid \Fc_{k-1}\right]
\le  \dist^2(\pb_{s,k},\Xc) \\
&+ \hspace{-0.5mm}2\frac{C_{\zb}(C_h+N)}{c_h^2}\Es[\|\bd_{s,k}\| \mid \Fc_{k-1}] \hspace{-0.5mm}+2\hspace{-0.5mm}\frac{C_h+N}{c_h^2}\Es[|\nu_{s,k}| \mid \Fc_{k-1}]\nonumber\\
& -\frac{1}{2\k(L_h+D)^2}\dist^2(\pb_{s,k},\Xc) + A\a_k^2L_f^2,\nonumber
\end{align*}
where $A = 5+4\k(L_h+D)^2$.
Finally, by summing over all $s \in \ccalS$ and using Lemma \ref{lem:ds} with $h(x) = \dist^2(x,\Xc)$,
we arrive at the following relation:
\begin{align} \label{eqn:lem2e}
&\sum_{s\in\ccalS}\Es\left[\dist^2(\zb_{s,k},\Xc) \mid \Fc_{k-1}\right]
\le  \sum_{s\in\ccalS}\dist^2(\zb_{s,k-1},\Xc) \\
&~+ 2\frac{C_{\zb}(C_h+N)}{c_h^2}\sum_{s\in\ccalS}\Es[\|\bd_{s,k}\|  \mid \Fc_{k-1}] \nonumber\\ &~+2\frac{C_h+N}{c_h^2}\sum_{s\in\ccalS}\Es[|\nu_{s,k}| \mid \Fc_{k-1}]\nonumber\\
&~ -\frac{1}{2\k(L_h+D)^2}\sum_{s\in\ccalS}\dist^2(\pb_{s,k},\Xc) + A\a_k^2nL_f^2.\nonumber
\end{align}
Therefore, from $\sum_{k\ge 0} \a_k^2 < \infty$ and Lemmas~\ref{lem:summable_constraints} and \ref{lem:summable_grads}, all the conditions in the convergence theorem (Theorem~\ref{thm:super}) are satisfied and the desired result follows.
\end{proof}

\begin{proof}[Proof of Lemma \ref{lem:disagree}(a)]
From Line \ref{line:p}-\ref{line:v} and \ref{line:z} of Algorithm \ref{alg:filter+opt}, we define $\eb_{s,k} \triangleq \zb_{s,k} - \pb_{s,k}$ for $s\in \ccalS$ and $k \ge 0$. Hence, $\eb_{s,k}$ can be viewed as the perturbation that we make on $\pb_{s,k}$ after the network consensus step in Line \ref{line:p} of Algorithm \ref{alg:filter+opt}.
Consider $\|\eb_{s,k}\|$, for which we can write
\begin{align} \label{eqn:lem3a}
\|\eb_{s,k}\|
\le{}& \|\zb_{s,k} - \vb_{s,k}\| +\|\vb_{s,k} - \pb_{s,k}\|\nonumber\\
= {}&\left\|\mathsf{\Pi}_{\Xc_0}\left[\vb_{s,k} - \b_{s,k}h_+'(\vb_{s,k};\ccalD_{s,k},\o_{s,k})\right] -\vb_{s,k}\right\| \nonumber\\
{}&+\left\|\mathsf{\Pi}_{\Xc_0}[\pb_{s,k}- \a_k f'(\pb_{s,k})]- \pb_{s,k}\right\|\nonumber\\
\le{}&\frac{h_+(\vb_{s,k};\ccalD_{s,k},\o_{s,k})}{c_h} +\a_kL_f,
\end{align}
where the second inequality follows from fact that $\pb_{s,k}, \vb_{s,k} \in \Xc_0$, the nonexpansiveness of the projection operator, relations \eqref{eqn:L_f1} and \eqref{eqn:dhbnd}.
%\todo{check undefined reference}

Applying $(a+b)^2 \le 2a^2 + 2b^2$ in inequality (\ref{eqn:lem3a}), we have for all $s\in \ccalS$ and $k\ge 1$,
\begin{equation}\label{eqn:lem3b}
\|\eb_{s,k}\|^2 \leq 2\frac{h_+^2(\vb_{s,k};\ccalD_{s,k},\o_{s,k})}{c_h^2} +2\a_k^2L_f^2
\end{equation}
For the term $h_+^2(\vb_{s,k};\ccalD_{s,k},\o_{s,k})$ in (\ref{eqn:lem3b}), we have the following chain of relations:
\begin{align}
&h_+^2(\vb_{s,k};\ccalD_{s,k},\o_{s,k}) \nonumber\\
=&~ \left(h_+(\vb_{s,k};\ccalD_{s,k},\o_{s,k}) -h_+(\pb_{s,k};\ccalD_{s,k},\o_{s,k})\right)^2 \label{eqn:dis1}\\
&~+\hspace{-0.7mm} h_+^2(\pb_{s,k};\ccalD_{s,k},\o_{s,k})\nonumber\\
&~ + 2h_+(\pb_{s,k};\ccalD_{s,k},\o_{s,k})\nonumber\\
&~ \times\left(h_+(\vb_{s,k};\ccalD_{s,k},\o_{s,k}) -h_+(\pb_{s,k};\ccalD_{s,k},\o_{s,k})\right)\nonumber\\
\le &~ (L_h+D)^2\|\vb_{s,k}-\pb_{s,k}\|^2 + h^2_+(\pb_{s,k};\ccalD_{s,k},\o_{s,k})\label{eqn:dis2}\\
&~+ 2(L_h+D)\|\vb_{s,k}-\pb_{s,k}\|h_+(\pb_{s,k};\ccalD_{s,k},\o_{s,k})\nonumber\\
\le &~ \a_k^2(L_h+D)^2L_f^2 + h^2_+(\pb_{s,k};\ccalD_{s,k},\o_{s,k})\label{eqn:dis3}\\
&~+ 2\a_k(L_h+D)L_fh_+(\pb_{s,k};\ccalD_{s,k},\o_{s,k})\nonumber\\
\le &~ 2\a_k^2(L_h+D)^2L_f^2 + h_+^2(\pb_{s,k};\ccalD_{s,k},\o_{s,k}),\label{eqn:dis4}
\end{align}
where (\ref{eqn:dis1}) follows from adding and subtracting $h_+(\pb_{s,k};\ccalD_{s,k},\o_{s,k})$;
(\ref{eqn:dis2}) follows from relation (\ref{eqn:L_h2});
(\ref{eqn:dis3}) follows from Line \ref{line:v} of Algorithm \ref{alg:filter+opt};
(\ref{eqn:dis4}) follows from using the relation $2|a||b| \le a^2 + b^2$ with $a = \a_k(L_h+D)L_f$ and $b = h_+(\pb_{s,k};\ccalD_{s,k},\o_{s,k})$.
By combining (\ref{eqn:lem3b}) and (\ref{eqn:dis4}), we obtain
\begin{align*}
\|\eb_{s,k}\|^2 \leq &~ 2\a_k^2L_f^2\left(1+\frac{2(L_h+D)^2}{c_h^2}\right) + \frac{2h_+^2(\pb_{s,k};\ccalD_{s,k},\o_{s,k})}{c_h^2}\\
\leq &~ 2\a_k^2L_f^2\left(1+\frac{2(L_h+D)^2}{c_h^2}\right) + \frac{2L_h^2 \dist^2(\pb_{s,k},\Xc)}{c_h^2},
\end{align*}
where the last inequality is from \eqref{e:Lh}.
Therefore, from Lemma \ref{lem:feas} and $\sum_{k=1}^\infty\a_k^2<\infty$, we conclude that
$
\sum_{k=1}^\infty \|\eb_{s,k}\|^2 <\infty$ for all $ s \in \ccalS \quad a.s.$
\end{proof}

\begin{proof}[Proof of Lemma \ref{lem:disagree}(b)]
The proof of this part of Lemma will coincide with \cite[Lemma 6(b)]{lee15approx}.
\end{proof}

%\section{Comparison with Previous Work}
%\label{app:previous}
%\input{reviews-response/previous}

%\section{Derivatives of $Q$}
%\label{app:derivs}
%\input{text/derivs}

%\section*{Acknowledgment}
%The authors would like to thank Dr. Robert Bryant of the Duke University mathematics department for his suggestion of using Theorems~\ref{thm:locus} and \ref{thm:measure} in the proof of Proposition~\ref{prop:measure}.

\bibliographystyle{IEEEtran}
\bibliography{charlie-refs}

\begin{IEEEbiography}[{\includegraphics[width=1in,height=1.25in,clip,keepaspectratio]{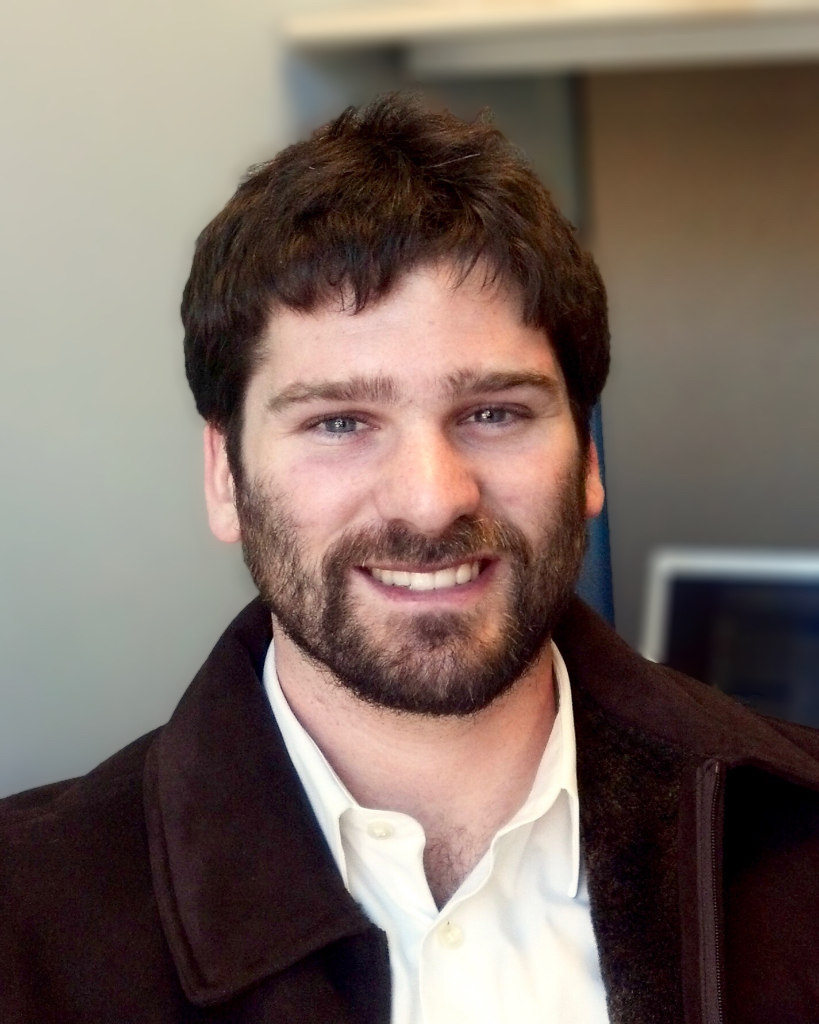}}]%
{Charles Freundlich}
(S'13) received a B.A. in physics from Middlebury College (2010), an M.Eng. in mechanical engineering from Stevens Institute of Technology (2013), and a PhD in mechanical engineering from Duke University (2017), where he studied control theory, differential geometry, and computer vision. 
Currently, he is a Sr. Software Engineer at Tesla.
%
%Charles wrote his PhD dissertation on active sensing for decentralized networks of mobile cameras.
He currently works on cooperative indoor SLAM for mobile cameras and large-scale supply chain estimation and control.

\end{IEEEbiography}
\begin{IEEEbiography}[{\includegraphics[width=1in,height=1.25in,clip,keepaspectratio]{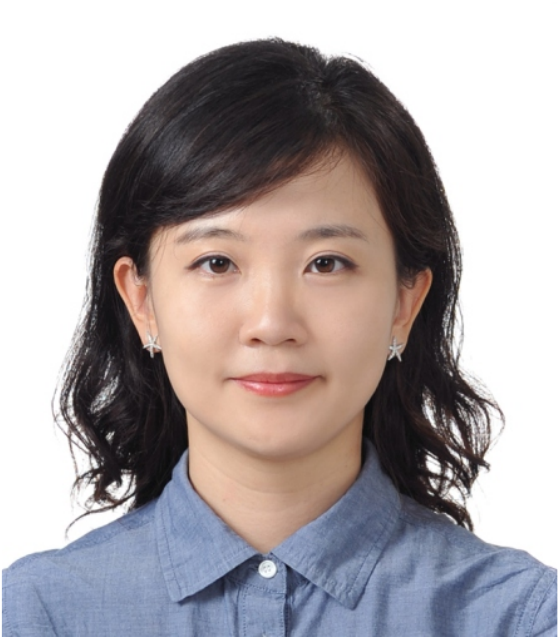}}]%
{Soomin Lee}
(S'07--M'13) is a Research Engineer at Georgia Institute of Technology.
She was formerly a Postdoctoral Associate in Mechanical Engineering and Materials Science at Duke University. 
She received her Ph.D. in Electrical and Computer Engineering from the University of Illinois, Urbana-Champaign (2013).
She received two master's degrees from the Korea Advanced Institute of Science and Technology in Electrical Engineering, and from the University of Illinois at Urbana-Champaign in Computer Science.
In 2009, she was an assistant research officer at the Advanced Digital Science Center (ADSC) in Singapore.
Her interests include theoretical optimization (convex, non-convex, online and stochastic), distributed control and optimization of dynamic networks, human-robot interactions and machine learning.
\end{IEEEbiography}
\begin{IEEEbiography}[{\includegraphics[width=1in,height=1.25in,clip,keepaspectratio]{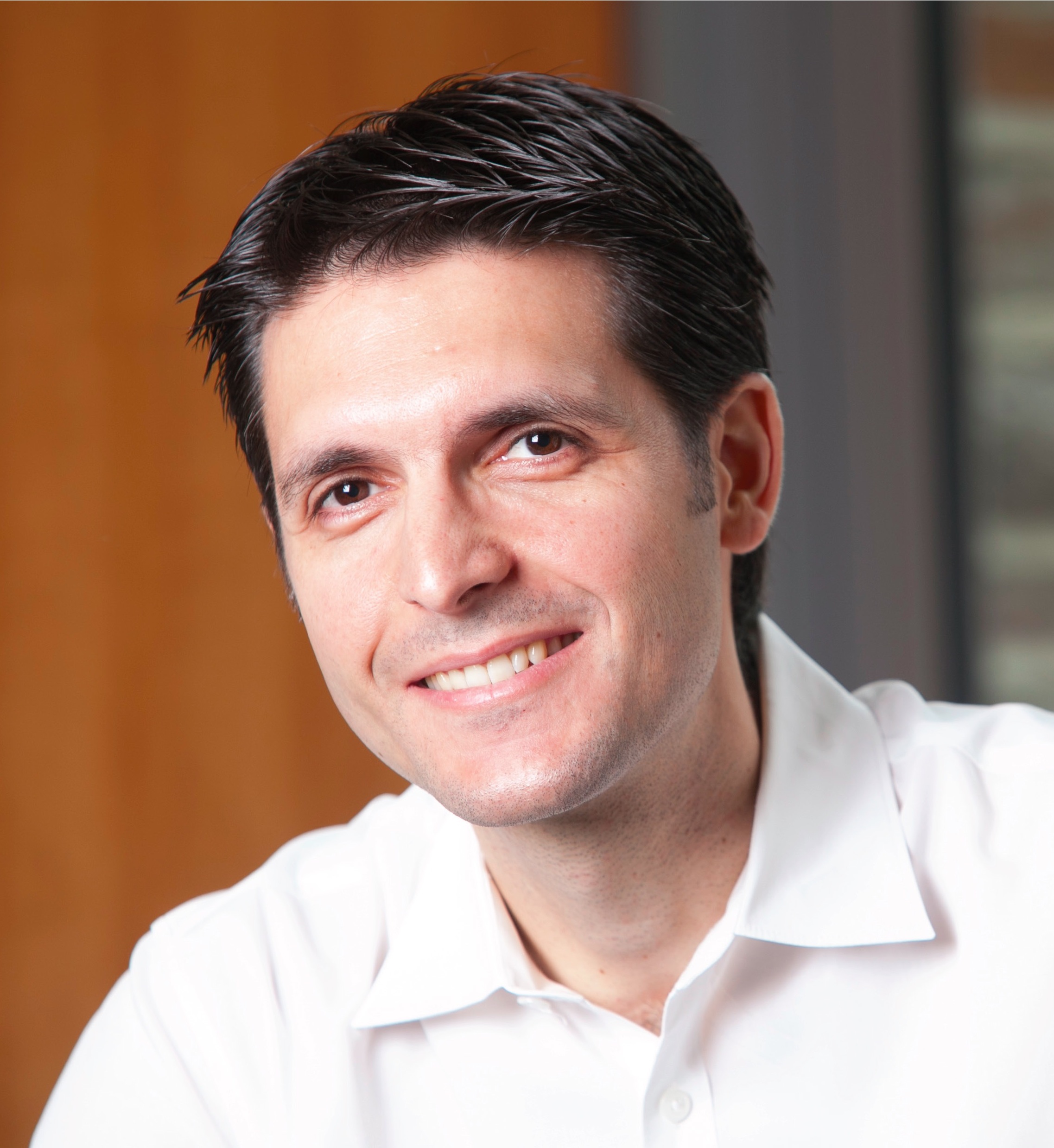}}]%
{Michael Zavlanos}
(S'05--M'09) received the Diploma in mechanical engineering from the National Technical University of Athens (NTUA), Athens, Greece, in 2002, and the M.S.E. and PhD degrees in Electrical and Systems Engineering from the University of Pennsylvania, Philadelphia, PA, in 2005 and 2008, respectively. 

From 2008 to 2009 he was a Post-Doctoral Researcher in the Dept. of Electrical and Systems Engineering at the University of Pennsylvania. He then joined the Stevens Institute of Technology, Hoboken, NJ, as an Assistant Professor of Mechanical Engineering, where he remained until 2012. Currently, he is an assistant professor of Mechanical Engineering and Materials Science at Duke University, Durham, NC. He also holds a secondary appointment in the Dept. of Electrical and Computer Engineering. His research interests include a wide range of topics in the emerging discipline of networked systems, with applications in robotic, sensor, communication, and biomolecular networks. He is particularly interested in hybrid solution techniques, on the interface of control theory, distributed optimization, estimation, and networking. 

Dr. Zavlanos is a recipient of the 2014 Office of Naval Research Young Investigator Program (YIP) Award, the 2011 National Science Foundation Faculty Early Career Development (CAREER) Award, and a finalist for the Best Student Paper Award at CDC 2006.

\end{IEEEbiography}
\end{document}